\newtheorem{theorem}{Theorem}[section]
\newtheorem{lemma}[theorem]{Lemma}
\newtheorem{definition}[theorem]{Definition}
\newtheorem{remark}[theorem]{Remark}
\newtheorem{question}[theorem]{Question}
\definecolor{maroon}{RGB}{192,80,77}
\newcommand{\maroon}[1]{\textcolor{maroon}{#1}}
\begin{document}
	
	\title{Near-Optimal Reinforcement Learning with Self-Play \\ under Adaptivity Constraints}
	\author[1]{Dan Qiao}
	\author[1]{Yu-Xiang Wang}
	\affil[1]{Department of Computer Science, UC Santa Barbara}
	\affil[ ]{\texttt{danqiao@ucsb.edu}, \;
		\texttt{yuxiangw@cs.ucsb.edu}}
	
	\date{}
	
	\maketitle
	
	\begin{abstract}
		We study the problem of multi-agent reinforcement learning (MARL) with adaptivity constraints --- a new problem motivated by real-world applications where deployments of new policies are costly and the number of policy updates must be minimized. For two-player zero-sum Markov Games, we design a (policy) elimination based algorithm that achieves a regret of $\widetilde{O}(\sqrt{H^3 S^2 ABK})$, while the batch complexity is only $O(H+\log\log K)$. In the above, $S$ denotes the number of states, $A,B$ are the number of actions for the two players respectively, $H$ is the horizon and $K$ is the number of episodes. Furthermore, we prove a batch complexity lower bound $\Omega(\frac{H}{\log_{A}K}+\log\log K)$ for all algorithms with $\widetilde{O}(\sqrt{K})$ regret bound, which matches our upper bound up to logarithmic factors. As a byproduct, our techniques naturally extend to learning bandit games and reward-free MARL within near optimal batch complexity. To the best of our knowledge, these are the first line of results towards understanding MARL with low adaptivity.
	\end{abstract}

    \newpage
    \tableofcontents
    \newpage
	

\section{Introduction}\label{sec:intro}

This paper considers the problem of multi-agent reinforcement learning (multi-agent RL), where multiple agents aim to make decisions simultaneously in an unfamiliar environment to maximize their individual cumulative rewards. Multi-agent RL is used not only 
in large-scale strategy games like Go \citep{silver2017mastering}, Poker \citep{brown2019superhuman} and MOBA games \citep{ye2020towards}, but also in various real-world applications such as autonomous driving \citep{shalev2016safe}, household power management \citep{chung2020distributed}, and computer networking \citep{bhattacharyya2019qflow}. 

The sheer amount of computation needed for self-play-based learning in these applications often demands the algorithm to run in a distributed fashion where the communication cost becomes a bottleneck.
In such circumstances, instead of syncing up after each single trajectory, a more practical alternative is to assign a larger batch of work for each machine to perform independently and sync up only sporadically.
The need for infrequent communication could be hard constraints in applications such as autonomous driving \citep{shalev2016safe}. Deploying new policies to vehicle firmware takes weeks, while new data are being collected in millions of cars every second.
These constraints render standard multi-agent RL algorithms that require altering the policy after each new data point impractical.

In the scenarios discussed above, the agent needs to minimize the number of policy deployments while learning a good policy using
(nearly) the same number of samples as its fully-adaptive counterparts.
Previous works brought up different notions to measure the adaptivity of an online RL algorithm, including switching cost \citep{bai2019provably,zhang2020almost,qiao2022sample,gao2021provably,wang2021provably,he2023nearly,qiao2023logarithmic,kong2021online,velegkas2022best,zhao2023nearly,xiong2023general}, batch complexity \citep{perchet2016batched,gao2019batched,qiao2022sample,zhang2022near,wang2021provably,johnson2023sample,xiong2023general} and deployment efficiency \citep{matsushima2020deployment,huang2021towards,qiao2022near,modi2021model}. Although algorithms with low adaptivity have been designed for various Markov decision process (MDP) settings, all of the previous works focused on the single-agent setting while the solution to multi-agent RL with low adaptivity is still unknown. Therefore it is natural to question:

\begin{question}
Is it possible to design a self-play algorithm to solve Markov games while achieving near optimal adaptivity and sample complexity simultaneously? 
\end{question}
 
\begin{table*}[!t]\label{tab:comparison}
\centering
\resizebox{\linewidth}{!}{
\begin{tabular}{ |c|c|c|c| } 
\hline
\textit{Algorithms for Markov games} & \textit{Single-agent (B=1)?} & \textit{Regret}  & \textit{Batch complexity} \\
\hline 
VI-ULCB \citep{bai2020provable} & No & $\widetilde{O}(\sqrt{H^{3}S^2 ABT})$   & $K$ \\ 
Nash VI \citep{liu2021sharp} & No & $\widetilde{O}(\sqrt{H^{2}SABT})$  & $K$ \\ 
Algorithm 1 in \citet{zhang2022near} $^*$ & Yes & $\widetilde{O}(\sqrt{H^{2}SAT})$  & $O(H+\log\log K)$ \\  
\textcolor{blue}{Our Algorithm~\ref{alg:main} (Theorem \ref{thm:main})} &\textcolor{blue}{No} &\textcolor{blue}{$\widetilde{O}(\sqrt{H^2 S^{2}ABT})$}  & \textcolor{blue}{$O(H + \log\log K)$} \\
\hline
{Lower bound \citep{bai2020provable}} & No & $\Omega(\sqrt{H^2 S(A+B)T})$ & No constraints.    \\
\textcolor{blue}{Lower bound (Theorem~\ref{thm:lower})} & \textcolor{blue}{No} & \textcolor{blue}{if $\widetilde{O}(\sqrt{T})$ (``Optimal regret'')} & \textcolor{blue}{ $\Omega(\frac{H}{\log_A K}+\log\log K)$}    \\
\hline
\textit{Algorithms for bandit games} & \textit{Single-agent (B=1)?} & \textit{Regret}  & \textit{Batch complexity}  \\
\hline
BaSE \citep{gao2019batched} $^\dagger$ & Yes & $\widetilde{O}(\sqrt{AK})$  &  $O(\log\log K)$ \\  
\textcolor{blue}{Our Algorithm~\ref{alg:bandit} (Theorem \ref{thm:bandit})} & \textcolor{blue}{No} & \textcolor{blue}{$\widetilde{O}(\sqrt{ABK})$} & \textcolor{blue}{$O(\log\log K)$} \\
\hline
\textit{Algorithms for reward-free exploration} & \textit{Single-agent (B=1)?} & \textit{Sample (episode) complexity}  & \textit{Batch complexity}  \\
\hline
VI-Explore \citep{bai2020provable} & No & $\widetilde{O}(\frac{H^{5}S^{2}AB}{\epsilon^{2}})$  & $\widetilde{O}(\frac{H^{7}S^{4}AB}{\epsilon})^\ddagger$ \\   
LARFE \citep{qiao2022sample} $^\star$ & Yes & $\widetilde{O}(\frac{H^{5}S^{2}A}{\epsilon^{2}})$  & $O(H)$ \\ 
\textcolor{blue}{Our Algorithm~\ref{alg:rfe} (Theorem \ref{thm:rfe})} & \textcolor{blue}{No} &  \textcolor{blue}{$\widetilde{O}(\frac{H^{3}S^{2}AB}{\epsilon^{2}})$} & \textcolor{blue}{$O(H)$} \\
\hline 
\end{tabular}
}
\caption{
Comparison of our results (in \textcolor{blue}{blue}) to existing work regarding problem type, regret/sample complexity, and batch complexity. A ``Yes'' in the column ``Single-agent (B=1)?'' means that the work is specific to the single-agent case ($B=1$) and cannot be directly applied under the two-player game setting ($B>1$) as in this paper. We list such works here for comparison. In the above, $S$ denotes the number of states, $A,B$ are the number of actions for the two players respectively, $H$ is the horizon and $K$ is the number of episodes ($T=HK$ is the number of steps). $*$: This result is derived under the special case of single-agent MDP (Markov game with $B=1$). In this case, our Algorithm \ref{alg:main} achieves the same batch complexity and a regret bound sub-optimal by $\sqrt{S}$. $\dagger$: The result is derived under the batched multi-armed bandits setting (bandit games with $B=1$). In this case, our Algorithm \ref{alg:bandit} achieves the same guarantees as BaSE. $\ddagger$: For the first part of the algorithm, there are $\widetilde{O}(\frac{H^{7}S^{4}AB}{\epsilon})$ episodes of data collected using EULER, which can lead to the same number of batch complexity in the worst case.  $\star$: This result is derived under the special case of single-agent MDP. Directly applying our Algorithm \ref{alg:rfe} to the setting will lead to a significant improvement of $H^2$ in sample complexity.}
\end{table*}

\noindent\textbf{Our contributions.} In this paper, we answer the above question affirmatively by proposing a new low-adaptive algorithm (Algorithm~\ref{alg:main}). Furthermore, the framework of Algorithm \ref{alg:main} naturally adapts to the bandit game setting and the more challenging low adaptive reward-free setting. Our concrete contributions are summarized as follows. 

\begin{itemize}

\item We design a new policy elimination based algorithm (Algorithm~\ref{alg:main}) that achieves $O(H+\log \log K)$ batch complexity and $\widetilde{O}(\sqrt{H^2 S^{2}ABT})$ regret bound (Theorem~\ref{thm:main}). To our knowledge, this provides the first result under multi-agent RL with low adaptivity. Moreover, the regret bound has optimal dependence on $T$ while the batch complexity is optimal up to logarithmic factors among all algorithms with $\widetilde{O}(\sqrt{T})$ regret bound (due to our Theorem \ref{thm:lower}).

\item Under the bandit game setting (a special case of Markov game with $H=S=1$), Algorithm \ref{alg:bandit} achieves $O(\log\log K)$ batch complexity and $\widetilde{O}(\sqrt{ABK})$ regret (Theorem \ref{thm:bandit}). The batch complexity is optimal and the result strictly generalizes the best known result of batched multi-armed bandits \citep{gao2019batched}.   

\item We also propose a new low-adaptive algorithm (Algorithm~\ref{alg:rfe}) for reward-free exploration in Markov games. It comes with an optimal batch complexity of $O(H)$ and provably identifies an $\epsilon$-approximate Nash policy simultaneously for all possible reward functions (Theorem~\ref{thm:rfe}). The result improves over previous results in both sample complexity and batch complexity.
\end{itemize}

\noindent\textbf{Related work.} There is a large and growing body of literature on the statistical theory of reinforcement learning that we will not attempt to thoroughly review. Detailed comparisons with existing work on multi-agent RL  \citep{bai2020provable,liu2021sharp}, batched RL \citep{zhang2022near}, batched multi-armed bandits \citep{gao2019batched} and reward-free exploration \citep{bai2020provable,qiao2022sample} are given in Table~\ref{tab:comparison}. For more details about related works, please refer to Appendix~\ref{appr} and the references therein. Notably, all existing algorithms with low adaptivity focus on the single-agent case. In comparison, our results work for the more general multi-player setting, and thus can be considered as generalization of previous results.

A recent line of works provide non-asymptotic guarantees for learning Markov Games. \citet{bai2020provable} developed the first provably-efficient algorithms in MGs based on optimistic value iteration, whose result is improved by \citet{liu2021sharp} using model-based approach. Meanwhile, model-free approaches are shown to break the curse of multiagency and improve the dependence on action space \citep{bai2020near,jin2021v,mao2022improving,wang2023breaking,cui2023breaking}. Following works also extended the results to MGs with function approximation \citep{xie2020learning,huang2022towards,jin2022power,li2022learning}. However, all these works applied fully adaptive algorithms, which can be difficult to implement in practical scenarios. In comparison, our algorithms achieve near optimal adaptivity.

In this paper, we measure the adaptivity by batch complexity \citep{zhang2022near} which requires decisions about policy updates to be made at only a few predefined checkpoints.\footnote{The formal def. of batch complexity is deferred to Section \ref{sec:setup}.} Besides, there are other measurements of adaptivity. The most prevalent measurement is \emph{switching cost}, which measures the number of policy switches. However, previous works minimizing switching cost only allow usage of deterministic policies \citep{bai2019provably,zhang2020almost,qiao2022sample}. It is known that in many Markov games like Rock paper scissors, the Nash policy can only be stochastic and running deterministic policies will lead to a linear regret. Therefore, switching cost is not an appropriate measurement for the multi-agent case. \footnote{While it is true that we can generalize the definition of switching cost to support general (stochastic) policies, in this case the guarantee of batched RL is stronger due to its nature that the timestep to switch policy is predetermined.} Meanwhile, \citet{matsushima2020deployment} proposed the notion of \emph{deployment efficiency}, which is similar to batched RL with additional requirement that each policy deployment should have similar size. Deployment efficient RL is studied by some following works \citep{huang2021towards,qiao2022near,modi2021model}. However, as pointed out by \citet{qiao2022near}, deployment complexity is not a good measurement of adaptivity when studying regret minimization.
	

\section{Problem Setup}\label{sec:setup}
\noindent\textbf{Notations.} Throughout the paper, for $n\in\mathbb{Z}^{+}$, $[n]=\{1,2,\cdots,n\}$. For any set $U$, $\Delta(U)$ denotes the set of all possible distributions over $U$. In addition, we use standard notations such as $O$ and $\Omega$ to absorb constants while $\widetilde{O}$ and $\widetilde{\Omega}$ suppress logarithmic factors.

\noindent\textbf{Markov Games.} Markov Games (MGs) generalize the standard Markov Decision Processes (MDPs) into the multi-player setting, where each player aims to maximize her own reward. We consider \emph{two-player zero-sum} episodic Markov Games, denoted by a tuple $\mathcal{MG}=(H, \mathcal{S}, \mathcal{A}, \mathcal{B}, P_h, r_h)$, where $H$ is the horizon, $\mathcal{S}$ is the state space with $S:=|\mathcal{S}|$. $\mathcal{A}$ and $\mathcal{B}$ are the action space for the max-player (who aims to maximize the total reward) and the min-player (who aims to minimize the total reward) respectively, where $A:=|\mathcal{A}|,B:=|\mathcal{B}|$ are finite. The non-stationary transition kernel has the form $P_h:\mathcal{S}\times\mathcal{A}\times\mathcal{B}\times\mathcal{S} \mapsto [0, 1]$  with $P_{h}(s^{\prime}|s,a,b)$ representing the probability of transition from state $s$, action $(a,b)$ to next state $s^\prime$ at time step $h$. In addition, $r_h(s,a,b)$ denotes the known\footnote{Our results easily generalize to the case with stochastic reward.} expected (immediate) reward function. Without loss of generality, we assume each episode starts from a fixed initial state $s_{1}$.\footnote{The generalized case where the initial distribution is an arbitrary distribution can be recovered from this setting by adding one layer to the MG.} At time step $h\in[H]$, two players observe $s_h$ and choose their actions $a_h\in\mathcal{A}$ and $b_h\in\mathcal{B}$ at the same time. Then both players observe the action of their opponent and receive reward $r_h(s_h,a_h,b_h)$, the environment will transit to $s_{h+1}\sim P_h(\cdot|s_h,a_h,b_h)$. 

\noindent\textbf{Markov policy, value function.} A (Markov) policy $\mu$ of the max-player can be seen as a series of mappings $\mu=(\mu_1,\cdots,\mu_H)$, where each $\mu_h$ maps each state $s \in \mathcal{S}$ to a probability distribution over actions $\mathcal{A}$, \emph{i.e.} $\mu_h: \mathcal{S}\rightarrow \Delta(\mathcal{A})$. A Markov policy $\nu$ for the min-player is defined similarly. 

Given a pair of policies $(\mu,\nu)$ and $h\in[H]$, the value function $V^{\mu,\nu}_h(\cdot)$ and Q-value function $Q^{\mu,\nu}_h(\cdot,\cdot,\cdot)$ are defined as:
$
V^{\mu,\nu}_h(s)=\mathbb{E}_{\mu\times\nu}[\sum_{t=h}^H r_{t}|s_h=s] ,
Q^{\mu,\nu}_h(s,a,b)=\mathbb{E}_{\mu\times\nu}[\sum_{t=h}^H  r_{t}|s_h,a_h,b_h=s,a,b],\;\forall\, s,a,b\in\mathcal{S}\times\mathcal{A}\times\mathcal{B}.
$   
Then the Bellman equation follows $\forall\, h\in[H]$:
\begin{align*}
Q^{\mu,\nu}_h(s,a,b)&=[r_{h}+P_{h}V^{\mu,\nu}_{h+1}](s,a,b),\;\\
V^{\mu,\nu}_h(s)&=[\mathbb{E}_{\mu\times\nu}Q^{\mu,\nu}_h](s).
\end{align*}

In this work, we will consider different MGs with respective transition kernels and reward functions. We define the value function for policy $\pi=(\mu,\nu)$ under MG $(\widetilde{r},\widetilde{P})$ as below
$$V^{\pi}(\widetilde{r},\widetilde{P})=\mathbb{E}_{\pi}\left[\sum_{h=1}^{H}\widetilde{r}_{h}\mid \widetilde{P}\right].$$

\noindent\textbf{Best responses, Nash equilibrium.} For any policy $\mu$ of the max-player, there exists a best response policy $\nu^\dagger(\mu)$ of the min-player such that $V_h^{\mu,\nu^\dagger(\mu)}(s)=\inf_\nu V^{\mu,\nu}_h(s)$ for all $(s,h)\in\mathcal{S}\times[H]$. For simplicity, we denote $V^{\mu,\dagger}_h:=V_h^{\mu,\nu^\dagger(\mu)}$. Also, $\mu^\dagger(\nu)$ and $V_h^{\dagger,\nu}$ can be defined similarly. It is shown \citep{filar2012competitive} that there exists a pair of policies $(\mu^\star,\nu^\star)$ that are best responses against each other, \emph{i.e.} for all $(s,h)\in\mathcal{S}\times[H]$,
$$V_h^{\mu^\star,\dagger}(s)=V_h^{\mu^\star,\nu^\star}(s)=V_h^{\dagger,\nu^\star}(s).$$
We call the pair of policies Nash equilibrium of the Markov game, which further satisfies the following minimax property: for all $(s,h)\in\mathcal{S}\times[H]$,
$$\sup_\mu\inf_\nu V_h^{\mu,\nu}(s)=V_h^{\mu^\star,\nu^\star}(s)=\inf_\nu\sup_\mu V_h^{\mu,\nu}(s).$$
The value functions of $(\mu^\star,\nu^\star)$ are called Nash value functions and we denote $V^\star_h=V_h^{\mu^\star,\nu^\star},Q^\star_h=Q_h^{\mu^\star,\nu^\star}$for simplicity. Intuitively speaking, Nash equilibrium means that no player could benefit from switching her own policy.

\noindent\textbf{Non-Markov policies.} In this work, we consider general, history-dependent policies that may not be Markov policies. A general policy $\mu$ of the max-player is a set of mappings $\mu=\{\mu_h:\Omega\times(\mathcal{S}\times\mathcal{A}\times\mathcal{B}\times\mathbb{R})^{h-1}\times\mathcal{S}\rightarrow \Delta(\mathcal{A})\}$. The choice of the action at time step $h\in[H]$ depends on the history and a random sample $w\in\Omega$ that is shared among all time steps. A special case of general policies is a mixture of Markov policies, which will be used in this work. General policies for the min-player, the best response of a general policy can be defined similarly as Markov policies. Note that the best response to a non-Markov policy may be non-Markov. 

\noindent\textbf{Learning objective: regret.} Suppose $K$ is the number of episodes the agent plan to play and $\mu^k$ is the policy executed by the max-player in the $k$-th episode. Then the regret of the max-player is defined as 
$$\text{Regret}(K):=\sum_{k=1}^K [V_1^\star(s_1)-V_1^{\mu^k,\dagger}(s_1)].$$
Same as \citet{jin2022power}, our goal is to minimize the regret of the max-player, which focuses on learning the Nash policy of the max-player. By symmetry, the techniques readily extend to learning the Nash policy of the min-player.

\noindent\textbf{Batched reinforcement learning.} We measure the adaptivity of an online RL algorithm by batch complexity.
\begin{definition}
We say an algorithm has batch complexity $M$, if the algorithm pre-determines a group of lengths $\{T_i\}_{i\in[M]}$ such that $\sum_{i=1}^M T_i=K$. At the beginning of the $i$-th batch, the agent determines a general policy $\pi^i=(\mu^i,\nu^i)$ and follows $\pi^i$ for $T_i$ episodes. 
\end{definition}

Our goal is to minimize the batch complexity of our algorithm while achieving near-optimal regret.

\begin{remark}
    We highlight that our setting strictly generalizes the batched (single-agent) RL setting in \citet{qiao2022sample,zhang2022near}. This is because when the min-player plays a fixed and known policy, the definitions of MG, regret and batch complexity will reduce to the single-agent RL setting. Therefore, our setting is more complex and technically demanding by incorporating the min-player.
\end{remark}
	

\section{Main algorithms}\label{sec:alg}


In order to attain a batch complexity that is near-optimal, we extend the batch schedule derived from the arm-elimination algorithm for bandits \citep{cesa2013online} to an elimination  based algorithm for RL. The core concept behind our policy elimination approach revolves around maintaining a \emph{version space} $\Pi_A$ that encompasses the remaining policies for the max-player. In each batch, we enhance the estimated values of all policies within $\Pi_A$ and utilize these values to eliminate policies that cannot possibly be Nash policies. The overarching goal is that as the algorithm progresses, the policies remaining after elimination are already in approximate alignment with Nash equilibrium.

\begin{algorithm}[tbh]
	\caption{Main algorithm}\label{alg:main}
	\begin{algorithmic}[1]
		\STATE \textbf{Require}: Number of episodes $K$. The known reward $r$. Universal constants $C,N$. Failure probability $\delta$.
		\STATE \textbf{Initialize}: $T^{(k)}=K^{1-\frac{1}{2^{k}}}$, $k\leq K_{0}=O(\log\log K)$, $\Pi_A^{1}:=\{ \text{All Markov policies for the max-player}\}$, $\Pi_B:=\{ \text{All Markov policies for the min-player}\}$, $\iota=\log(2HSABK/\delta)$. 
		\FOR{$k=1,2,\cdots,K_{0}$}  
		\STATE \maroon{$\diamond$ Number of episodes in $k$-th stage:}
		\IF{$(N+1)T^{(1)}+(1+(k-1)N)T^{(2)}+\sum_{i=1}^{k}T^{(i)}\geq K$} 
		\STATE $T^{(k)}=K-(N+1)T^{(1)}-(1+(k-1)N)T^{(2)}-\sum_{i=1}^{k-1}T^{(i)}$ (o.w. $T^{(k)}=K^{1-\frac{1}{2^{k}}}$).
		\ENDIF
		\ENDFOR
		\FOR{$k=1,2$}
		\STATE \maroon{$\diamond$ Update the infrequent set $\mathcal{F}$ and construct an empirical estimate $\widehat{P}$ of the absorbing MG:}
		\STATE $\mathcal{F}^{k}$, $P^{int,k}$, $\pi_k$ = Crude Exploration$(\Pi_A^k,\Pi_B,T^{(k)}).$
		\STATE $\widehat{P}^{k}$ = Fine Exploration$(\mathcal{F}^{k},P^{int,k},\Pi_A^k,\Pi_B,T^{(k)},\pi_k,NT^{(k)}).$
            \STATE \maroon{$\diamond$ Policy elimination for the max-player:}
		\STATE $\Pi_A^{k+1}\leftarrow\{\mu\in\Pi_A^k\mid \inf_{\nu\in\Pi_B} V^{\mu,\nu}(r,\widehat{P}^{k})\geq \sup_{\mu\in\Pi_A^{k}}\inf_{\nu\in\Pi_B} V^{\mu,\nu}(r,\widehat{P}^{k})-2C(\sqrt{\frac{H^{3}S^{2}AB\iota}{T^{(k)}}}+\frac{H^{5}S^{3}A^{2}B^2\iota}{T^{(k)}})\}$.
		\ENDFOR
		\FOR{$k=3,4,\cdots,K_{0}$}
		\STATE \maroon{$\diamond$ Only update the empirical transition kernel:}
		\STATE $\widehat{P}^{k}$ = Fine Exploration$(\mathcal{F}^{2},P^{int,2},\Pi_A^k,\Pi_B,T^{(k)},\pi_2,NT^{(2)}).$
		\STATE $\Pi_A^{k+1}\leftarrow\{\mu\in\Pi_A^k\mid \inf_{\nu\in\Pi_B} V^{\mu,\nu}(r,\widehat{P}^{k})\geq \sup_{\mu\in\Pi_A^{k}}\inf_{\nu\in\Pi_B} V^{\mu,\nu}(r,\widehat{P}^{k})-2C(\sqrt{\frac{H^{3}S^{2}AB\iota}{T^{(k)}}}+\frac{H^{5}S^{3}A^{2}B^2\iota}{T^{(2)}})\}$.
		\ENDFOR
	\end{algorithmic}
\end{algorithm}

\begin{algorithm}[tbh]
	\caption{Crude Exploration}\label{alg:crude}
	\begin{algorithmic}[1]
		\STATE \textbf{Input}: Policy sets $\Pi_A$ (max) and $\Pi_B$ (min). Number of episodes $T$. Universal constant $C_1$.
		\STATE \textbf{Initialize}: $T_{0}=\frac{T}{H}$, $\mathcal{F}=\emptyset$, $\mathcal{D}=\emptyset$, $\iota=\log(2HSABK/\delta)$. $1_{h,s,a,b}$ is a reward function $r^{\prime}$ where $r^{\prime}_{h^{\prime}}(s^{\prime},a^{\prime},b^\prime)=\mathds{1}[(h^{\prime},s^{\prime},a^{\prime},b^\prime)=(h,s,a,b)]$. $s^{\dagger}$ is an additional absorbing state. $P^{int}$ is a transition kernel over the extended space $\mathcal{S}\cup\{s^\dagger\}\times\mathcal{A}\times\mathcal{B}$, initialized \emph{arbitrarily}.
		\STATE \textbf{Output}: Infrequent tuples $\mathcal{F}$. Intermediate transition kernel $P^{int}$. A uniformly explorative policy $\pi$.
		\FOR{$h=1,2,\cdots,H$}
		\STATE \maroon{$\diamond$ Construct and run policies to visit each state-action:}
		\FOR{$(s,a,b)\in \mathcal{S}\times \mathcal{A}\times\mathcal{B}$}
		\STATE $\pi_{h,s,a,b}=\mathrm{argmax}_{(\mu,\nu)\in \Pi_A\times\Pi_B}V^{\mu,\nu}(1_{h,s,a,b},P^{int}).$
            \ENDFOR
		\STATE Run $\pi_{h}=$ uniform mixture of $\{\pi_{h,s,a,b}\}_{(s,a,b)}$ for $T_{0}$ episodes, and add the trajectories into data set $\mathcal{D}$. 
		\FOR{$(s,a,b,s')\in\mathcal{S}\times\mathcal{A}\times\mathcal{B}\times\mathcal{S}$}
		\STATE $N_{h}(s,a,b,s^{\prime})=\text{count of}\ (h,s,a,b,s^{\prime})$ in $\mathcal{D}$.
		\ENDFOR
		\STATE \maroon{$\diamond$ Use $\mathcal{F}$ to store the infrequent tuples:}
		\STATE $\mathcal{F}=\mathcal{F}\cup \{(h,s,a,b,s^{\prime})|N_{h}(s,a,b,s^{\prime})\leq C_{1}H^{2}\iota\}.$
        \STATE \maroon{$\diamond$ Update the intermediate transition kernel using Algorithm~\ref{algo_transition_kernel}:}
		\STATE $P^{int} = \text{EstimateTransition}(\mathcal{D}, \mathcal{F}, s^{\dagger},h,P^{int})$
 		\STATE Reset data set $\mathcal{D}=\emptyset$.
		\ENDFOR
            \STATE \maroon{$\diamond$ Construct a uniformly explorative policy:}
            \STATE Policy $\pi\leftarrow$ uniform mixture of $\{\pi_{h,s,a,b}\}_{(h,s,a,b)}$.
		\STATE \textbf{Return}: $\{\mathcal{F},P^{int},\pi\}$.
	\end{algorithmic}
\end{algorithm}

\begin{algorithm}[tbh]
	\caption{Fine Exploration}\label{alg:fine}
	\begin{algorithmic}[1]
		\STATE \textbf{Input}: Infrequent tuples $\mathcal{F}$. Intermediate transition kernel $P^{int}$. Policy sets $\Pi_A$ and $\Pi_B$. Number of episodes $T$. Auxiliary policy $\pi^\prime$ with number of episodes $T^\prime$.
		\STATE \textbf{Initialize}: $\mathcal{D}=\emptyset$. $\widehat{P}=P^{int}$. $1_{h,s,a,b}$ is a reward function $r^{\prime}$ where $r^{\prime}_{h^{\prime}}(s^{\prime},a^{\prime},b^\prime)=\mathds{1}[(h^{\prime},s^{\prime},a^{\prime},b^\prime)=(h,s,a,b)]$.
		\STATE \textbf{Output}: Empirical estimate $\widehat{P}$.
		\STATE Construct explorative policy $\pi$ according to \eqref{equ:policy}.
		\STATE Run $\pi$ for $T$ episodes and run $\pi^\prime$ for $T^\prime$ episodes, add all the trajectories into data set $\mathcal{D}$.

		\STATE \maroon{$\diamond$ Construct an empirical estimate for $\widetilde{P}$ using Algorithm~\ref{algo_transition_kernel}:}
		\FOR{$h\in [H]$}
		\STATE $\widehat{P} = \mathrm{EstimateTransition}(\mathcal{D}, \mathcal{F}, s^{\dagger},h,\widehat{P}).$
		\ENDFOR
		\STATE \textbf{Return} $\widehat{P}$.
	\end{algorithmic}
\end{algorithm}


The primary hurdle lies in efficiently estimating the value function for all policy pairs with minimal sample usage. The challenge of uniform convergence typically involves estimating transition kernels, compounded by the necessity to address an exploration problem to visit specific state-action pairs at least once. Additionally, certain states may not be frequently visited by any policy pair. To tackle these challenges, we construct a surrogate Markov Game termed an ``absorbing MG'' featuring an absorbing state. This absorbing MG replaces problematic states with an absorbing state, denoted as $s^\dagger$. Consequently, all remaining states can be adequately visited by some policy in $\Pi_A\times\Pi_B$. Furthermore, its value function uniformly approximates the original MG for all relevant policies, thereby simplifying the problem to estimating the transition kernel of the absorbing MG.

\noindent\textbf{Main algorithm.} 
Given a budget of $K$ episodes, Algorithm \ref{alg:main} divides it into multiple stages with increasing length $T^{(k)}:= K^{1-1/2^k}$ for $k=1,2,3,...,K_0$. It is known \citep{cesa2013online} that the total number of stages $K_{0}$ is bounded by $O(\log\log K)$.
 
In Algorithm \ref{alg:main}, the first two stages involve three steps while all the following stages only contain the last two steps. Before we introduce the steps, we remark that such design is due to technical reasons. First, two implementations of Crude exploration could give an ``absorbing'' MG model that is sufficient for our purpose. In addition, to achieve the optimal batch complexity, the number of Crude exploraion's can only be a constant. Therefore, we give up the cleaner schedule where each stage contains all the three steps and choose the current schedule. Below is the three steps:

\begin{description}
\itemsep0em
\item[Step 1. Crude exploration] Explore each $(h,s,a,b)$ pair layer-by-layer using policies from the current version-space $\Pi_A^k$ (and $\Pi_B$). Establish an absorbing MG $\widetilde{P}$ and a \emph{crude} intermediate estimate ($P^{int}$) of $\widetilde{P}$.  
\item[Step 2. Fine exploration] Solve for a uniformly explorative policy to explore all tuples based on the crude estimate ($P^{int}$) of the absorbing MG. Construct a more accurate estimate ($\widehat{P}$) of the absorbing MG $\widetilde{P}$.  
\item[Step 3. Policy elimination] Evaluate all policies in $\Pi_A$ using $\inf_{\nu\in\Pi_B} V^{\mu,\nu}(r,\widehat{P})$ as an estimate of $V^{\mu,\dagger}(r,P)$. Update the version space $\Pi_A$ by eliminating all policies whose empirical value $\inf_{\nu\in\Pi_B} V^{\mu,\nu}(r,\widehat{P})$ is sub-optimal by a gap depending on the batch size. 
\end{description}


As the algorithm advances, assuming the high probability event that our confidence bounds hold, the Nash policy of the max-player is unlikely to be eliminated. At each stage, the remaining policies consistently outperform the LCB of the Nash policy, which will converge to the Nash value.

Next, we explain the first two steps of Algorithm \ref{alg:main}: Crude Exploration (Alg. \ref{alg:crude}) and Fine Exploration (Alg. \ref{alg:fine}).

\noindent\textbf{Layer-by-layer Exploration in Algorithm~\ref{alg:crude}.} Our Algorithm \ref{alg:crude} is a generalization of Algorithm 2 of \citet{qiao2022sample} to the multi-agent case. The motivation is that if we visit some tuple $(h,s,a,b,s^{\prime})$ for $O(H^{2}\iota)$ times, the empirical estimate of $P_{h}(s^{\prime}|s,a,b)$ would be multiplicatively accurate. Therefore, the challenge would be to visit each tuple as much as possible using policies from the input policy sets $\Pi_A$ and $\Pi_B$. However, some $(h,s,a,b,s^{\prime})$ tuples may be hard to reach by any policy in the policy set. To tackle this issue, we construct the set $\mathcal{F}$ to be all the tuples that do not have enough visitations. Consequently, for the tuples not in $\mathcal{F}$, the empirical estimate of transition is accurate enough, while for the tuples in $\mathcal{F}$, we will prove that no policy in the policy set could visit the tuple frequently thus they have little influence on the value function. 

More specifically, we explore the MG layer-by-layer. When exploring the $h$-th layer, we construct $\pi_{h,s,a,b}$'s to be the greedy policies under $P^{int}$, \emph{i.e.} $\pi_{h,s,a,b}$ could visit $(h,s,a,b)$ with the largest probability under $P^{int}$. Then we run a uniform mixture of $\{\pi_{h,s,a,b}\}_{(s,a,b)}$ (which is a general policy) for several episodes to collect the samples $\mathcal{D}$, after which we update the $h$-th layer of $P^{int}$ (i.e. $P^{int}_h$) using Algorithm~\ref{algo_transition_kernel} and $\mathcal{D}$. Finally, a uniformly explorative policy $\pi$ that can visit all tuples is stored for future use.

Based on infrequent tuples $\mathcal{F}$, the absorbing MG is established according to Definition~\ref{def2}. The construction is equivalent to first letting $\widetilde{P}=P$, and then moving the probability of $\widetilde{P}_{h}(s^{\prime}|s,a,b)$ to $\widetilde{P}_{h}(s^{\dagger}|s,a,b)$ for $(h,s,a,b,s^{\prime})\in\mathcal{F}$.
\begin{definition}[The absorbing MG $\widetilde{P}$]\label{def2}
    Given $\mathcal{F}$ and $P$, $\forall\, (h,s,a,b,s^{\prime}) \notin \mathcal{F}$, let
	$\widetilde{P}_{h}(s^{\prime}|s,a,b)=P_{h}(s^{\prime}|s,a,b).$
	For any $(h,s,a,b,s^{\prime})\in \mathcal{F}$, $\widetilde{P}_{h}(s^{\prime}|s,a,b)=0.$ For any $(h,s,a,b)\in[H]\times\mathcal{S}\times\mathcal{A}\times\mathcal{B}$, $\widetilde{P}_{h}(s^{\dagger}|s^{\dagger},a,b)=1$ and \[ \widetilde{P}_{h}(s^{\dagger}|s,a,b)=1-\sum_{s^{\prime}\in\mathcal{S}:(h,s,a,b,s^{\prime})\notin \mathcal{F}}\widetilde{P}_{h}(s^{\prime}|s,a,b).\]
\end{definition}

\begin{remark}
    Such absorbing structure has been applied under the single-agent case \citep{qiao2022sample,zhang2022near,zhang2023policy}, and here we generalize the definition to the multi-agent case.
    It will be shown that with high probability, for $(h,s,a,b,s^{\prime})$, either  $(1-\frac{1}{H})P^{int}_{h}(s^{\prime}|s,a,b)\leq \widetilde{P}_{h}(s^{\prime}|s,a,b) \leq (1+\frac{1}{H})P^{int}_{h}(s^{\prime}|s,a,b)$  or $P^{int}_{h}(s^{\prime}|s,a,b)=\widetilde{P}_{h}(s^{\prime}|s,a,b)=0$. Therefore, $\pi_{h,s,a,b}$'s are efficient in exploration. 
\end{remark}

Algorithm~\ref{algo_transition_kernel} and corresponding explanations are deferred to Appendix~\ref{appb1} due to space limit.
Here we remark that $P^{int}$ is the empirical estimate of $\widetilde{P}$ using the samples in $\mathcal{D}$. Besides, we discuss about the transition between the original MG and the absorbing MG in Appendix~\ref{appb2}.

\noindent\textbf{Fine exploration by Algorithm~\ref{alg:fine}.} The key behind Algorithm~\ref{alg:fine} is that with high probability, $V^{\mu,\nu}(1_{h,s,a,b},P^{int})$ is multiplicatively close to $V^{\mu,\nu}(1_{h,s,a,b},\widetilde{P})$ for all $(h,s,a,b)$ and remaining $(\mu,\nu)$. Therefore, $P^{int}$ can be used to guide the exploration. We construct the exploration policy $\pi$ as:

\begin{equation}\label{equ:policy}
\begin{split}
\pi&=\mathrm{argmin}_{\pi\in \Delta(\Pi_A\times\Pi_B)}\sup_{(\mu^\prime,\nu^\prime)\in\Pi_A\times\Pi_B} \\ &\sum_{h=1}^H\sum_{s,a,b}\frac{V^{\mu^\prime,\nu^\prime}(1_{h,s,a,b},P^{int})}{V^{\pi}(1_{h,s,a,b},P^{int})},
\end{split}
\end{equation}

which is a mixture of several policies from the current version space that could provide uniform coverage of all remaining policies. Then we run both $\pi$ (which is a general policy) and an auxiliary policy $\pi^\prime$\footnote{Running a uniformly explorative policy is for technical reason.} to collect samples. At last, $\widehat{P}$ is calculated as an empirical estimate of $\widetilde{P}$ by using Algorithm~\ref{algo_transition_kernel} and the data set $\mathcal{D}$.

Details about the implementation and computational efficiency of the algorithms are deferred to Section \ref{sec:results}.


\section{Main results}\label{sec:results}

In this section, we will state our main results, which formalize the algorithmic ideas explained in the previous section.

\begin{theorem}[Regret and batch complexity of Algorithm~\ref{alg:main}]\label{thm:main}
With probability $1-\delta$, Algorithm~\ref{alg:main} will have regret bounded by $\widetilde{O}(\sqrt{H^{2}S^{2}ABT})$, where $T:=KH$ is the number of steps. Furthermore, the batch complexity of Algorithm~\ref{alg:main} is bounded by $O(H+\log\log K)$.
\end{theorem}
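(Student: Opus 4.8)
The plan is to bound the regret and the batch complexity separately, and the bulk of the work lies in the regret bound, which rests on a uniform value-estimation guarantee across all surviving policy pairs. First I would establish a high-probability ``good event'' under which the empirical transition kernel $\widehat{P}^k$ of the absorbing MG is accurate enough that $|\inf_{\nu\in\Pi_B} V^{\mu,\nu}(r,\widehat{P}^k) - V^{\mu,\dagger}(r,P)|$ is controlled, uniformly over all $\mu\in\Pi_A^k$, by the elimination threshold $C(\sqrt{H^3 S^2 AB\iota/T^{(k)}}+\text{higher-order})$. This uniform-convergence statement is what justifies the elimination rule: I would argue (i) that the true Nash policy $\mu^\star$ is never eliminated, since its estimated value always exceeds the reference $\sup_{\mu}\inf_{\nu}V^{\mu,\nu}(r,\widehat{P}^k)$ minus the slack, and (ii) that any policy surviving into stage $k$ has true suboptimality $V_1^\star(s_1)-V_1^{\mu,\dagger}(s_1)$ bounded by roughly $4C\sqrt{H^3 S^2 AB\iota/T^{(k-1)}}$ (the elimination gap from the previous stage), since its empirical value was within one threshold of the reference and the reference itself is close to the Nash value.

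Given this per-stage suboptimality bound, I would sum the regret across stages. In stage $k$ we execute surviving policies for roughly $T^{(k)}$ episodes (accounting also for the $NT^{(k)}$ auxiliary-exploration episodes inside Fine Exploration and the Crude Exploration episodes), each incurring per-episode regret $O(\sqrt{H^3 S^2 AB\iota/T^{(k-1)}})$. Because $T^{(k)}=K^{1-1/2^k}$ satisfies $T^{(k)}\approx\sqrt{K\cdot T^{(k-1)}}$, the product $T^{(k)}\cdot\sqrt{1/T^{(k-1)}}$ telescopes to $O(\sqrt{K})$ per stage, and with $O(\log\log K)$ stages the total is $\widetilde{O}(\sqrt{H^3 S^2 AB K})=\widetilde{O}(\sqrt{H^2 S^2 AB T})$. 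The first few exploration-heavy batches contribute only lower-order regret since the exploration overhead is a constant multiple $N$ of the batch length. I would fold the higher-order $H^5 S^3 A^2 B^2\iota/T^{(k)}$ terms into lower-order regret, as they dominate only when $T^{(k)}$ is small, contributing at most $\widetilde{O}(\text{poly}(H,S,A,B))$.

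For the batch complexity, the argument is essentially combinatorial and is where the $O(H)$ additive term enters. Each Crude Exploration (Algorithm~\ref{alg:crude}) proceeds layer-by-layer over $h\in[H]$, deploying a fresh mixture policy $\pi_h$ per layer, so one invocation costs $H$ policy deployments; since Crude Exploration runs only in the first two stages, this contributes $O(H)$. The Fine Exploration step and the policy-elimination update each cost $O(1)$ deployments per stage, and there are $K_0=O(\log\log K)$ stages (the standard doubly-exponential schedule bound from \citet{cesa2013online}), contributing $O(\log\log K)$. Summing gives $O(H+\log\log K)$.

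The main obstacle is the uniform value-estimation guarantee in the first paragraph, specifically controlling the error $|V^{\mu,\nu}(r,\widehat{P}^k)-V^{\mu,\nu}(r,P)|$ simultaneously over the (exponentially large, and possibly non-Markov mixture) policy space. The key technical devices are the absorbing-MG reduction of Definition~\ref{def2}, which replaces infrequent tuples with the absorbing state $s^\dagger$ and guarantees that the remaining tuples are visited $\Omega(H^2\iota)$ times so that $\widehat{P}^k$ is \emph{multiplicatively} accurate, and the min-max exploration policy of \eqref{equ:policy}, which ensures uniform coverage $V^{\mu',\nu'}(1_{h,s,a,b},P^{int})/V^{\pi}(1_{h,s,a,b},P^{int})$ is bounded for all surviving pairs. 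I expect the delicate part to be (a) showing the absorbing MG's value function approximates the true MG uniformly over policies — using that infrequent tuples carry little visitation mass and hence little value — and (b) propagating the multiplicative per-tuple accuracy through the Bellman recursion to a uniform additive value error, where the extra $\sqrt{S}$ over the single-agent rate of \citet{zhang2022near} arises from the $L_1$-concentration of the transition estimate over next-states combined with taking a union bound / simulation-lemma argument that must hold against the adversarially chosen best response $\nu^\dagger(\mu)$.
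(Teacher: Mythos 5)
Your proposal is correct and follows essentially the same route as the paper's proof: the same uniform value-estimation event (bias via the absorbing MG of Definition~\ref{def2} plus variance via the coverage policy of \eqref{equ:policy}, as in Lemma~\ref{lem16}), the same two-sided elimination argument that $\mu^\star$ survives and all survivors are $4C\epsilon_{k-1}$-approximately Nash (Lemma~\ref{lemj4}), the same per-stage summation exploiting $T^{(k)}/\sqrt{T^{(k-1)}}=\sqrt{K}$ over $O(\log\log K)$ stages (Lemma~\ref{lem19}), and the same batch accounting of $H$ deployments per Crude Exploration (run only twice) plus $O(1)$ per stage. The only minor imprecision is that the higher-order terms sum to $\widetilde{O}(H^{5}S^{3}A^{2}B^{2}K^{1/4})$ rather than $\widetilde{O}(\mathrm{poly}(H,S,A,B))$ — still lower order under the paper's assumption $K\geq\widetilde{\Omega}(H^{14}S^{8}A^{6}B^{6})$, so your conclusion stands.
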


Theorem \ref{thm:main} says that Algorithm \ref{alg:main} is able to achieve a regret bound with optimal dependence on $K$ while using only $O(\log \log K)$ batches. Due to space limit, the proof is sketched in Section \ref{sec:ps} with details in the Appendix. Now we discuss a few interesting aspects of the result.

\noindent\textbf{Near optimal batch complexity.} We state the following lower bound of batch complexity for all algorithms with $\widetilde{O}(\sqrt{K})$ regret bound, which implies that the batch complexity of our Algorithm \ref{alg:main} is nearly optimal.

\begin{theorem}[Lower bound]\label{thm:lower}
   For any algorithm with $O(\mathrm{poly}(S, A, B, H)\sqrt{K})$ regret bound, the batch complexity is at least $\Omega(H/\log_{A} K+ \log\log K)$.
\end{theorem}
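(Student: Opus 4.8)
The plan is to prove the two summands $H/\log_{A}K$ and $\log\log K$ by two independent arguments and then combine them: a lower bound on the batch complexity $M$ from each construction yields $M=\Omega(\max\{H/\log_{A}K,\ \log\log K\})=\Omega(H/\log_{A}K+\log\log K)$. Throughout I would use that single-agent instances ($B=1$) are a special case of two-player zero-sum MGs, so any lower bound proved on them is automatically a lower bound for the general class (the regret and batch-complexity definitions reduce as in the Remark in Section~\ref{sec:setup}).

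The $\log\log K$ term I would obtain by reduction to batched multi-armed bandits. The sub-case $H=S=B=1$ is precisely a $K$-round, $A$-armed stochastic bandit, and an algorithm with batch complexity $M$ run on the MG restricted to these instances is exactly a batched bandit algorithm using $M$ batches. Since any such algorithm with $\widetilde{O}(\sqrt{K})$ regret must use $\Omega(\log\log K)$ batches \citep{gao2019batched,perchet2016batched}, this bound transfers verbatim, giving $M=\Omega(\log\log K)$.

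For the horizon term I would construct a single-agent ($B=1$) ``combination-lock'' chain with states $g_{1},\dots,g_{H}$ and one absorbing dead state $d$ (so $S=H+1$ is polynomial, keeping the assumed $\mathrm{poly}(S,A,B,H)\sqrt{K}$ regret meaningful). A secret path $\theta=(a_{1}^{\star},\dots,a_{H}^{\star})\sim\mathrm{Unif}([A]^{H})$ fixes the dynamics: at $g_{h}$ the action $a_{h}^{\star}$ advances to $g_{h+1}$ while every other action transits to $d$, and $d$ is absorbing. The unique reward is $1$ for taking $a_{H}^{\star}$ at $g_{H}$, so $V_{1}^{\star}(s_{1})=1$ and an episode earns reward only by replaying $\theta$ exactly. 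The heart of the argument is an inductive \emph{frontier-advance} lemma exploiting that within a batch the policy $\pi^{m}$ is a (possibly randomized, history-dependent, mixture) policy depending only on the transcript $\tau_{m-1}$ of previous batches. Letting $D_{m}$ denote the deepest good state $g_{D_m}$ reached through batch $m$, I would first establish a posterior-uniformity claim: conditioned on $\tau_{m-1}$, the coordinates $(a_{h}^{\star})_{h>D_{m-1}}$ are still i.i.d.\ uniform, since no trajectory has yet interacted with $g_{h}$ for $h>D_{m-1}$. Reaching $g_{D_{m-1}+j}$ in a single episode of the fixed policy $\pi^{m}$ therefore requires correctly guessing $j$ unknown uniform symbols, so $\mathbb{P}[\,D_{m}-D_{m-1}\ge j\mid\tau_{m-1}\,]\le T_{m}A^{-j}\le K A^{-j}$; summing this geometric tail shows each batch advances the frontier by at most $\log_{A}K+O(1)$ with high probability, whence $D_{M}\le 1+M(\log_{A}K+O(1))$ with high probability.

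To conclude I would argue that if $M\le c\,H/\log_{A}K$ for a small constant $c$, then $D_{M}<H$ with high probability, so in every batch at least one coordinate of $\theta$ remains unknown; reaching $g_{H}$ and taking the correct final action then has probability at most $1/A$ per episode, the expected total reward is at most $K/A + o(K)$, and the regret is $\Omega(K)$, contradicting the $\mathrm{poly}(S,A,B,H)\sqrt{K}$ bound for large $K$. This forces $M=\Omega(H/\log_{A}K)$, and combining with the bandit reduction gives the stated bound. The main obstacle I anticipate is making the frontier-advance/posterior-uniformity lemma fully rigorous for arbitrary randomized, history-dependent policies and, in particular, upgrading the per-batch expectation bound to the high-probability concentration of $D_{M}$ needed to rule out the ``already fully learned'' event and thereby force $\Omega(K)$ regret; by comparison the $\log\log K$ term is essentially immediate from existing batched-bandit lower bounds.
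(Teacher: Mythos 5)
Your overall approach is sound, but it is worth knowing that the paper does not prove this theorem from scratch at all: its entire proof is the observation that a two-player zero-sum MG with $B=1$ degenerates to a single-agent MDP, so the bound follows directly from Theorem 2 of \citet{zhang2022near}. Your proposal essentially reconstructs the argument behind that cited theorem --- the $\log\log K$ term via the batched-bandit lower bounds of \citet{gao2019batched,perchet2016batched}, and the $H/\log_A K$ term via a combination-lock instance in which a non-adaptive batch can only advance the learned frontier by roughly $\log_A K$ layers. Your frontier-advance lemma is the right key step, and the posterior-uniformity claim is unproblematic here because transitions are deterministic given $\theta$, so the likelihood of any transcript factors through only the coordinates at visited states; the within-batch union bound over $T_m\le K$ episodes is the correct tool since episodes share $\theta$ but the policy cannot adapt inside a batch. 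Combining the two terms via $\max\{x,y\}\ge(x+y)/2$ is fine. So: correct route, self-contained where the paper outsources --- your version buys an explicit construction and makes visible exactly where the $\log_A K$ denominator comes from, at the cost of the concentration bookkeeping you already flagged.

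Two fixable slips, both stemming from the paper's model in which the reward function is \emph{known} to the agent. First, placing the reward on the action $a_H^\star$ at $g_H$ reveals the last coordinate of $\theta$ through the known reward; put the reward on reaching $g_H$ (i.e., $r_H(g_H,a)=1$ for all $a$) and let the secret live entirely in the transitions, losing only one coordinate and leaving the $\Omega(H/\log_A K)$ conclusion intact. Second, the sub-case $H=S=1$ is \emph{not} a stochastic bandit under known deterministic rewards --- there is nothing to learn --- so the $\log\log K$ reduction should embed the bandit in transitions (e.g., $H=2$, three states, where action $a$ moves to a rewarding state with unknown probability $p_a$), which transfers the batched-bandit lower bound verbatim. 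Finally, a cosmetic offset: since the frontier coordinate $a_{D_{m-1}}^\star$ may be partially eliminated, reaching $g_{D_{m-1}+j}$ only forces $j-1$ fresh uniform guesses, so the tail bound is $T_m A^{-(j-1)}$ rather than $T_m A^{-j}$; this changes nothing beyond constants.
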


Due to space limit, the proof is deferred to Appendix \ref{appc}.

\noindent\textbf{Transformation to PAC guarantee.} Various MARL applications require outputting a near optimal policy at the end of the algorithm. Below we provide a sample complexity upper bound for finding an $\epsilon$-\textbf{approximate Nash} policy for the max-player. We highlight that although we run general policies in the algorithm, the output policy can be a single Markov policy that is convenient to store and execute.

\begin{theorem}[Sample complexity]\label{thm:sample}
    For any $\epsilon>0$ and $\delta>0$, with probability $1-\delta$, Algorithm \ref{alg:main} could output a \textbf{Markov} policy $\mu$ of the max-player in $\widetilde{O}\left(\frac{H^3 S^2 AB}{\epsilon^2}\right)$ episodes such that $\mu$ is $\epsilon$-approximate Nash, \emph{i.e.}
    $$V^\star(r,P)-V^{\mu,\dagger}(r,P)\leq \epsilon.$$
\end{theorem}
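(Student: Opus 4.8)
The plan is to convert the regret guarantee of Theorem~\ref{thm:main} into a PAC bound by exploiting the elimination structure of Algorithm~\ref{alg:main}: rather than reporting a single uniformly-good episode, I will show that \emph{every} policy surviving in the final version space $\Pi_A^{K_0+1}$ is $\epsilon$-approximate Nash, and then output any such policy. Since $\Pi_A^1$ consists of all \textbf{Markov} policies for the max-player and elimination only removes policies, any survivor is automatically Markov, which immediately gives the claimed structure of the output.

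First I would condition on the high-probability event $\mathcal{E}$ (probability $\geq 1-\delta$) already established in the proof of Theorem~\ref{thm:main}, under which the absorbing-MG estimates $\widehat{P}^k$ enjoy the uniform-convergence property: for every stage $k$, every $\mu\in\Pi_A^k$ and every $\nu\in\Pi_B$,
$$\bigl| V^{\mu,\nu}(r,\widehat{P}^k) - V^{\mu,\nu}(r,P) \bigr| \leq \xi_k, \qquad \xi_k := C\Bigl(\sqrt{\tfrac{H^3 S^2 AB\iota}{T^{(k)}}} + \text{(lower order)}\Bigr).$$
Because the best response to a Markov policy is Markov and $\inf$ is $1$-Lipschitz under uniform perturbations, taking $\inf_{\nu\in\Pi_B}$ on both sides yields $\bigl|\inf_\nu V^{\mu,\nu}(r,\widehat{P}^k)-V^{\mu,\dagger}(r,P)\bigr|\leq \xi_k$. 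This is the central technical input; it is precisely the uniform-convergence lemma underpinning the regret proof (via the absorbing MG of Definition~\ref{def2} and the coverage guarantee of Fine Exploration), so I may assume it.

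Next I would verify that $\mu^\star$ is never eliminated: its empirical best-response value lies within $\xi_k$ of $V^\star(r,P)$, while the empirical optimum $\sup_{\mu'\in\Pi_A^k}\inf_\nu V^{\mu',\nu}(r,\widehat{P}^k)$ cannot exceed $V^\star(r,P)+\xi_k$, so $\mu^\star$ clears the elimination threshold $2C(\cdots)=2\xi_k$ at every stage. Hence $\mu^\star\in\Pi_A^k$ throughout and the empirical optimum is at least $V^\star(r,P)-\xi_k$. For any survivor $\mu\in\Pi_A^{K_0+1}$, chaining the last-stage survival inequality with the two translations gives
$$V^{\mu,\dagger}(r,P) \geq \inf_\nu V^{\mu,\nu}(r,\widehat{P}^{K_0}) - \xi_{K_0} \geq \sup_{\mu'}\inf_\nu V^{\mu',\nu}(r,\widehat{P}^{K_0}) - 3\xi_{K_0} \geq V^\star(r,P) - 4\xi_{K_0},$$
so $V^\star(r,P)-V^{\mu,\dagger}(r,P)=O(\xi_{K_0})$.

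It remains to size $K$. The final batch satisfies $T^{(K_0)}=K^{1-1/2^{K_0}}=\Theta(K)$ (with $K_0=O(\log\log K)$ this is $\approx K/2$), while the lower-order term, which carries denominator $T^{(2)}=K^{3/4}$, decays like $K^{-3/4}$ and is dominated by the leading $K^{-1/2}$ term once $K$ is large. Thus $\xi_{K_0}=\widetilde{O}(\sqrt{H^3 S^2 AB/K})$, and imposing $\xi_{K_0}\leq \epsilon/4$ forces $K=\widetilde{O}(H^3 S^2 AB/\epsilon^2)$, matching the claim. The main obstacle is the uniform-convergence step above, but since it is the crux of Theorem~\ref{thm:main} and thus available, the remaining effort is only the constant bookkeeping in the elimination argument together with confirming that the final-stage confidence radius is the dominant error term.
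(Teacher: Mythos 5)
Your proposal is correct and follows essentially the same route as the paper: output any survivor of the final version space $\Pi_A^{K_0+1}$ (automatically Markov since $\Pi_A^1$ contains only Markov policies), invoke the uniform-convergence and elimination guarantee (the paper's Lemma~\ref{lemj4}, which you re-derive inline via the never-eliminate-$\mu^\star$ argument and Lemma~\ref{lem:minmax}), and size $K$ so that the dominant $\sqrt{H^3S^2AB\iota/T^{(k)}}$ term is at most $\epsilon$. The only cosmetic difference is that you anchor the final bound to the last stage via $T^{(K_0)}=\Theta(K)$, whereas the paper more robustly uses the maximal stage length, which is $\Omega(K/\log\log K)$ --- this sidesteps the edge case where the algorithm's truncation rule makes the final stage short, though since survivors of $\Pi_A^{K_0+1}$ satisfy the bound at \emph{every} stage $k$, your argument is repaired by simply chaining at the longest stage instead.
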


The proof is deferred to Appendix \ref{appc}. By symmetry, an $\epsilon$-approximate Markov Nash policy for the min-player can be found using the same number of episodes.

\noindent\textbf{Dependence on $H,S,A,B$ in the regret.} Our regret bound is optimal in $T$. However, there is a gap of $\sqrt{S\min\{A,B\}}$ when compared to the information-theoretic limit of $\Omega(\sqrt{H^2S(A+B)T})$ that covers all algorithms (including those without adaptivity constraints) \citep{bai2020provable}. When applying our Algorithm \ref{alg:main} to the single-agent MDP setting where the min-player plays a fixed and known policy ($B=1$), the regret bound and batch complexity will be $\widetilde{O}(\sqrt{H^{2}S^{2}AT})$ and $O(H+\log\log K)$, respectively. The batch complexity is known to be optimal \citep{zhang2022near} while the regret is suboptimal by $\sqrt{S}$ when compared to the lower bound of $\Omega(\sqrt{H^2SAT})$ \citep{jin2018q}. We believe our analysis is tight and further improvements on $S$ will require new algorithmic ideas to handle the larger policy space compared to the single agent case.  It is an intriguing open problem whether we can design an algorithm with both optimal batch complexity and optimal regret bound.

\noindent\textbf{Computational efficiency.} Our Main Algorithm (Algorithm \ref{alg:main}) is not computationally efficient in general, since the algorithm needs to explicitly go over the remaining policy set to implement policy elimination, construct policy $\pi_{h,s,a,b}$'s in Crude exploration and $\pi$ in Fine exploration. These steps can be solved approximately in exponential time by enumerating over a tight covering set of Markov policies. For efficient surrogate of Algorithm \ref{alg:main}, we remark that a possible method is to apply softmax (or other differentiable) representation of the policy space and use gradient-based optimization techniques to find approximate solutions.
	

\section{Some discussions}\label{sec:dis}
\subsection{Application to bandit games ($H=S=1$)}
If for a two-player zero-sum Markov game, there is no multiple time steps and no states ($H=S=1$), then the Markov game reduces to a two-player zero-sum bandit game, which has wide real-world applications. According to Theorem \ref{thm:main}, a direct application of Algorithm \ref{alg:main} under the bandit setting will lead to a regret bound of $\widetilde{O}(\sqrt{ABK})$ and a batch complexity bound of $O(\log\log K)$. Moreover, with some mild revision to Algorithm \ref{alg:main}, the implementation can be computationally efficient. The result is summarized in Theorem \ref{thm:bandit} below. Note that the definition of batch complexity and regret is identical to the Markov game case.

\begin{theorem}\label{thm:bandit}
If we run Algorithm \ref{alg:bandit} (adapted from Algorithm \ref{alg:main}) under a two-player zero-sum bandit game for $K$ episodes, the batch complexity is bounded by $O(\log\log K)$ while the regret is $\widetilde{O}(\sqrt{ABK})$ with high probability. Furthermore, the algorithm is computationally efficient.
\end{theorem}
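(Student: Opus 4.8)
The plan is to read off the regret and batch-complexity claims directly from Theorem~\ref{thm:main} by specializing to $H=S=1$, and to spend the real effort on the computational-efficiency claim, which is the only genuinely new ingredient. Plugging $H=S=1$ and $T=K$ into Theorem~\ref{thm:main} turns the regret bound $\widetilde{O}(\sqrt{H^2S^2ABT})$ into $\widetilde{O}(\sqrt{ABK})$ and the batch complexity $O(H+\log\log K)$ into $O(\log\log K)$, so the first task is to check that the bandit revision in Algorithm~\ref{alg:bandit} leaves that analysis intact. This is where the simplifications occur: with $S=1$ there is no transition kernel, no hard-to-reach tuple, and hence no role for the absorbing-MG construction or the infrequent set $\mathcal{F}$, so Crude and Fine Exploration collapse into directly averaging the observed noisy rewards into an empirical payoff matrix $\widehat{R}^k$, and the higher-order term $\tfrac{H^5S^3A^2B^2\iota}{T^{(k)}}$ reduces to $\tfrac{A^2B^2\iota}{T^{(k)}}$, which is of lower order. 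The statistical skeleton is then the matrix-game form of the three-step elimination argument: a uniform value bound $|V^{\mu,\nu}(\widehat{R}^k)-V^{\mu,\nu}(r)|=|\mu^\top(\widehat{R}^k-r)\nu|\le\max_{a,b}|\widehat{R}^k(a,b)-r(a,b)|\le\widetilde{O}(\sqrt{AB/T^{(k)}})=:\epsilon_k$ holds simultaneously over all $(\mu,\nu)$; a UCB/LCB sandwich with radius $\rho_k=\Theta(\epsilon_k)$ shows the Nash strategy $\mu^\star$ is never eliminated and that every survivor $\mu\in\Pi_A^{k+1}$ satisfies $V^\star(r)-V^{\mu,\dagger}(r)=O(\epsilon_k)$; and since each played max-marginal lies in the convex version space $\Pi_A^k$, batch $k$ contributes $T^{(k)}\cdot O(\epsilon_{k-1})=\widetilde{O}(\sqrt{AB}\cdot T^{(k)}/\sqrt{T^{(k-1)}})=\widetilde{O}(\sqrt{ABK})$ under the schedule $T^{(k)}=K^{1-1/2^{k}}$, which sums to $\widetilde{O}(\sqrt{ABK})$ over the $O(\log\log K)$ batches.

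For computational efficiency I would exploit that every object the algorithm manipulates becomes a linear or convex program once $H=S=1$. The empirical best-response value $\widehat{f}_k(\mu):=\inf_{\nu\in\Pi_B}V^{\mu,\nu}(\widehat{R}^k)=\min_{b}\sum_a\mu(a)\widehat{R}^k(a,b)$ is a minimum of $B$ linear forms, hence concave and piecewise linear in $\mu$, so the elimination set is the explicit polytope $\Pi_A^{k+1}=\{\mu\in\Pi_A^k:\sum_a\mu(a)\widehat{R}^k(a,b)\ge\theta_k\ \text{for all }b\in\mathcal{B}\}$ with $\theta_k:=\sup_{\mu\in\Pi_A^k}\widehat{f}_k(\mu)-\rho_k$. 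Each batch contributes only $B$ new halfspaces, so after $k$ stages $\Pi_A^k$ is described by $O(B\log\log K)$ facets over $\Delta(\mathcal{A})$, a polynomial-size representation whose vertices we never need to enumerate. The threshold $\theta_k$ is then the optimum of an LP (maximize $t$ subject to $\mu\in\Pi_A^k$ and $\sum_a\mu(a)\widehat{R}^k(a,b)\ge t$ for all $b$), and the exploration design~\eqref{equ:policy} likewise reduces to a convex program: for fixed design marginals the inner supremum $\sup_{(\mu',\nu')}\sum_{a,b}\mu'(a)\nu'(b)/(\bar\mu(a)\bar\nu(b))$ is an LP over the polytope, while the outer minimization is convex in the reciprocals of the design and can be solved by cutting-plane or Frank--Wolfe methods whose iterations each call that LP. Thus all three primitives run in time polynomial in $A$, $B$, and $\log\log K$, replacing the exponential-time policy enumeration of Algorithm~\ref{alg:main}.

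The main obstacle I anticipate is not any single calculation but faithfully certifying this reduction: I must show that the polynomial-time convex-program implementation computes exactly the objects whose statistical guarantees Theorem~\ref{thm:main} establishes, rather than merely similar-looking surrogates. Concretely, the efficiently computed design must realize the same coverage property that the proof of Theorem~\ref{thm:main} uses to obtain uniform accuracy $\widetilde{O}(\sqrt{AB/T^{(k)}})$, so that the $\sqrt{ABK}$ rate is genuinely preserved under the revision, and the facet description of $\Pi_A^k$ must coincide exactly with the abstract elimination set, including the subtlety that the min-player ranges over all of $\Delta(\mathcal{B})$ so that $\widehat{f}_k$ is a true best-response value and its superlevel set is the claimed polytope. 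Once the equivalence between the abstract elimination/exploration primitives and their LP/convex-program counterparts is verified, the regret and batch-complexity conclusions follow by inheritance from Theorem~\ref{thm:main} and the efficiency conclusion from the explicit convex structure exhibited above.
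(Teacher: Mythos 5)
Your overall architecture --- specializing the elimination schedule to $H=S=1$, experimental-design exploration over the version space, and an induction showing $\Pi_A^k$ is a polytope with $O(B\log\log K)$ facets maintained by polynomial-size LPs --- is essentially the paper's proof (the paper fixes the min-player to the uniform policy $\nu_0$ and designs only over the max-player's marginal via Lemma \ref{lem:design}, rather than your two-sided product design, but that difference is cosmetic; your efficiency argument matches the paper's almost line for line). The genuine gap is in your central statistical step: the chain $|\mu^\top(\widehat{R}^k-r)\nu|\le\max_{a,b}|\widehat{R}^k(a,b)-r(a,b)|\le\widetilde{O}(\sqrt{AB/T^{(k)}})$. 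The second inequality is false under design-based exploration over a shrinking version space. The design $\mu^k$ only guarantees $\sum_a \mu'(a)/\mu^k(a)\le A$ for $\mu'\in\Pi_A^k$; once an action $a_0$ is effectively eliminated from $\Pi_A^k$, the design may place arbitrarily small mass on $a_0$, so $N^k(a_0,b)$ is only $O(\iota)$ (coming solely from the $T^0=2AB\iota$ auxiliary uniform episodes, an ingredient of Algorithm \ref{alg:bandit} your write-up omits entirely) and the entrywise error at $(a_0,b)$ is $\Theta(1)$, not $\widetilde{O}(\sqrt{AB/T^{(k)}})$. Nor can you force entrywise accuracy by exploring all entries $\Omega(T^{(k)}/(AB))$ times each: that means repeatedly playing eliminated actions and reintroduces linear regret. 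So the max-norm bound cannot be patched; it must be replaced.

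The correct step, which is what the paper does, is a version-space-weighted bound: for $\mu\in\Pi_A^k$ and each pure column $b$ (pure $b$ suffices because $\nu_0$ is uniform, which also yields $N^k(a,b)\ge T^{(k)}\mu^k(a)/(2B)$ via the martingale Chernoff bound of Lemma \ref{lem8}, with the uniform episodes absorbing the additive $\iota$ slack),
\begin{equation*}
\left|\widehat{r}^k(\mu,b)-r(\mu,b)\right|\le\sum_{a}\mu(a)\sqrt{\frac{\iota}{N^k(a,b)}}\le\sqrt{B\iota}\cdot\sqrt{\sum_{a}\frac{\mu(a)}{T^{(k)}\mu^k(a)}}\le C\sqrt{\frac{AB\iota}{T^{(k)}}},
\end{equation*}
by Cauchy--Schwarz together with the design property. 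This gives uniform accuracy only over $\mu\in\Pi_A^k$ --- which is all the argument needs, since the Nash policy $\mu^\star$, the surviving policies, and the played mixture all lie in $\Pi_A^k$. You half-anticipate this in your final paragraph (``the design must realize the same coverage property''), but the inequality you actually state is the wrong instantiation of it. With this substitution, the rest of your proposal --- the elimination sandwich with radius $\Theta(\epsilon_k)$, the summation $T^{(k)}/\sqrt{T^{(k-1)}}=\sqrt{K}$ over $O(\log\log K)$ stages, and the LP/convex-program efficiency induction --- goes through as in the paper.
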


Due to space limit, Algorithm \ref{alg:bandit} and proof of Theorem \ref{thm:bandit} are deferred to Appendix \ref{sec:bandit}. To the best of our knowledge, Theorem \ref{thm:bandit} is the first result for learning  multi-player bandit games with low adaptivity, and the batch complexity is optimal while the regret has optimal dependence on $K$. The special case of bandit games where $B=1$ is the well-studied multi-armed bandits setting \citep{auer2002finite}. Previous works \citep{perchet2016batched,gao2019batched} designed algorithms for the MAB setting which achieve $O(\log\log K)$ batch complexity and $\widetilde{O}(\sqrt{AK})$ regret simultaneously, where both bounds are shown to be minimax optimal. In comparison, the above upper bounds can be achieved by directly plugging in $B=1$ to Theorem \ref{thm:bandit}. Therefore, our result can be considered as a provably efficient generalization of \citet{gao2019batched} to the game setting.

\subsection{Extension to the reward-free case}
In this part, we further consider the setting of low adaptive reward-free exploration \citep{jin2020reward}, where the goal is to output a near Nash policy for any possible reward function. Specifically, due to its nature that Crude Exploration (Algorithm \ref{alg:crude}) and Fine Exploration (Algorithm \ref{alg:fine}) do not use any information about the reward function $r$, these two algorithms can be leveraged in reward-free setting. Algorithm \ref{alg:rfe} uses Crude Exploration to construct the infrequent tuples $\mathcal{F}$ and the intermediate MG $P^{int}$. Then the algorithm uses Fine Exploration to get an empirical estimate $\widehat{P}$ of the absorbing MG $\widetilde{P}$. At last, for any reward function $r$, the algorithm outputs the Nash policy under the empirical MG.

\begin{algorithm}
	\caption{Algorithm for reward-free case}\label{alg:rfe}
	\begin{algorithmic}[1]
		\STATE \textbf{Input}: Episodes for crude exploration $N_{0}$, episodes for fine exploration $N_1$. Universal constant $N$. Failure probability $\delta$.
		\STATE \textbf{Initialize}: $\iota=\log(2HSAB(N_{0}+N_1)/\delta)$, $\Pi_A:=\{ \text{All Markov policies for the max-player}\}$, $\Pi_B:=\{ \text{All Markov policies for the min-player}\}$.
		\STATE \textbf{Output}: $\pi^{r}=(\mu^r,\nu^r)$ for any reward function $r$.
		\STATE $\mathcal{F}$, $P^{int}$, $\overline{\pi}$ = $\text{Crude Exploration}(\Pi_A,\Pi_B,N_{0})$.
		\STATE $\widehat{P}$ = $\text{Fine Exploration}(\mathcal{F},P^{int},\Pi_A,\Pi_B,N_1,\overline{\pi},NN_0).$
	\STATE $\mu^r=\mathrm{argmax}_{\mu\in\Pi_A}\inf_{\nu\in\Pi_B}V^{\mu,\nu}(r,\widehat{P})$ for any $r$.	
        \STATE $\nu^{r}=\mathrm{argmin}_{\nu\in\Pi_B}\sup_{\mu\in\Pi_A}V^{\mu,\nu}(r,\widehat{P})$ for any $r$.
		\STATE \textbf{Return} $\{\pi^{r}=(\mu^r,\nu^r)\}_{r}$.
	\end{algorithmic}
\end{algorithm}

The batch complexity and sample complexity of Algorithm \ref{alg:rfe} is summarized in Theorem \ref{thm:rfe} below (whose proof is deferred to Appendix \ref{sec:rfe} due to space limit).

\begin{theorem}\label{thm:rfe}
The batch complexity of Algorithm \ref{alg:rfe} is bounded by $H+2$. There exists a constant $c>0$ such that, for any $\epsilon >0$ and any $\delta>0$, if the number of total episodes $K$ satisfies that $$K>c\left(\frac{H^3 S^{2}AB \iota^{\prime}}{\epsilon^{2}}+\frac{H^{5}S^3 A^2 B^2 \iota^{\prime}}{\epsilon}\right),$$ where $\iota^{\prime}=\log(\frac{HSAB}{\epsilon\delta})$, then there exists a choice of $N_{0}$ and $N_1$ such that $N_{0}+N_1=K$ and with probability $1-\delta$, for any reward function $r$, Algorithm~\ref{alg:rfe} will output a policy pair $\pi^{r}=(\mu^r,\nu^r)$ that is $\epsilon$-approximate Nash.
\end{theorem}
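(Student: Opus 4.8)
The plan is to separate the two claims. The batch-complexity bound $H+2$ follows directly from inspecting the schedule of Algorithm~\ref{alg:rfe}, while the sample-complexity bound reduces to a \emph{reward-free uniform value approximation} guarantee that is essentially already established in the analysis of Algorithm~\ref{alg:main}.

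For the batch complexity I would simply count the policy switches. Crude Exploration (Algorithm~\ref{alg:crude}) proceeds layer-by-layer: for each $h\in[H]$ it fixes the mixture policy $\pi_h$ at the start of the layer and runs it for $T_0$ episodes before updating $P^{int}$, so it consumes exactly $H$ batches. Fine Exploration (Algorithm~\ref{alg:fine}) determines the two policies $\pi$ (from~\eqref{equ:policy}) and $\overline{\pi}$ at the outset and runs each for a predetermined number of episodes, contributing $2$ further batches. Since $\mu^r,\nu^r$ are computed in closed form from $\widehat{P}$ and need no further interaction with the environment, the total batch complexity is $H+2$.

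The core of the sample-complexity claim is the following reward-free approximation, which I would state and prove first: on a high-probability event (probability at least $1-\delta$), the empirical absorbing model $\widehat{P}$ satisfies
\[
\bigl| V^{\mu,\nu}(r,\widehat{P}) - V^{\mu,\nu}(r,P)\bigr| \le \tfrac{\epsilon}{2}
\qquad \text{for all } (\mu,\nu)\in\Pi_A\times\Pi_B \text{ and all } r.
\]
I would obtain this by the triangle inequality through the absorbing MG $\widetilde{P}$ of Definition~\ref{def2}. The bias term $|V^{\mu,\nu}(r,\widetilde{P})-V^{\mu,\nu}(r,P)|$ is controlled because any tuple redirected to $s^\dagger$ lies in the infrequent set $\mathcal{F}$, and the crude-exploration guarantee (the same one invoked for Theorem~\ref{thm:main}) shows that no policy pair visits $\mathcal{F}$ with non-negligible probability. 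The estimation term $|V^{\mu,\nu}(r,\widehat{P})-V^{\mu,\nu}(r,\widetilde{P})|$ is controlled by a simulation-lemma decomposition $V^{\mu,\nu}(r,\widehat{P})-V^{\mu,\nu}(r,\widetilde{P})=\sum_h \mathbb{E}_{(\mu,\nu),\widehat{P}}[(\widehat{P}_h-\widetilde{P}_h)V^{\mu,\nu}_{h+1}(r,\widetilde{P})]$ together with the fine-exploration coverage guarantee, which forces every non-absorbing tuple to be estimated multiplicatively accurately. \textbf{Crucially, both bounds depend on $r$ only through $\|V^{\mu,\nu}_{\cdot}(r,\cdot)\|_\infty\le H$}, so they hold uniformly over all reward functions at once; this is exactly why the reward-free exploration of Algorithms~\ref{alg:crude} and~\ref{alg:fine} suffices, and it is the step I expect to require the most care.

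Given the uniform approximation, the $\epsilon$-Nash conclusion follows by a standard comparison argument, which I would carry out for the max-player and extend to the min-player by symmetry. For $\mu^r=\arg\max_{\mu}\inf_{\nu}V^{\mu,\nu}(r,\widehat{P})$, applying the approximation to every $\nu$ gives $V^{\mu^r,\dagger}(r,P)=\inf_\nu V^{\mu^r,\nu}(r,P)\ge \inf_\nu V^{\mu^r,\nu}(r,\widehat{P})-\epsilon/2=V^\star(r,\widehat{P})-\epsilon/2$; and applying it to the true Nash policy $\mu^\star$ yields $V^\star(r,\widehat{P})\ge \inf_\nu V^{\mu^\star,\nu}(r,\widehat{P})\ge V^\star(r,P)-\epsilon/2$, whence $V^\star(r,P)-V^{\mu^r,\dagger}(r,P)\le\epsilon$. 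The symmetric bound on $\nu^r$ shows the pair is $\epsilon$-approximate Nash. Finally, to meet the error target I would choose the fine-exploration budget $N_1$ so that the approximation error $\widetilde{O}(\sqrt{H^3S^2AB\iota'/N_1}+H^5S^3A^2B^2\iota'/N_1)$ — the same quantity appearing in the elimination threshold of Algorithm~\ref{alg:main} — drops below $\epsilon/2$, and pick $N_0$ of the same order to secure the crude-exploration event; summing the two yields the stated requirement $K=N_0+N_1\gtrsim H^3S^2AB\iota'/\epsilon^2+H^5S^3A^2B^2\iota'/\epsilon$.
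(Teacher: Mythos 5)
Your proposal is correct and follows essentially the same route as the paper's proof: batch counting ($H$ batches for Crude Exploration plus $2$ for Fine Exploration), a triangle inequality through the absorbing MG $\widetilde{P}$ combining the reward-uniform bias bound (Lemma~\ref{lemfirst}) and estimation bound (Lemma~\ref{lem:dif}), the same three-term max-min comparison for $\mu^r$ (the paper's Lemma~\ref{lem:minmax}), and the analogous budget split. The only slight imprecision is attributing the $H^5S^3A^2B^2\iota'/\epsilon$ term to the fine-exploration budget $N_1$ --- in the paper this is the bias term governed by the crude-exploration budget $N_0$ --- but since you take $N_0$ of the same order, the stated requirement on $K=N_0+N_1$ still follows.
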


The batch complexity of Algorithm \ref{alg:rfe} is known to be optimal up to logarithmic factors \citep{huang2021towards} while the sample complexity is optimal in $\epsilon$ \citep{jin2020reward}. Moreover, when applied to the special case of single-agent MDP, our sample complexity is $\widetilde{O}(\frac{H^3 S^2 A}{\epsilon^2})$, which matches the best known results even without adaptivity constraints \citep{menard2021fast,zhang2020nearly}. We highlight that our sample complexity improves significantly over previous algorithms with near optimal adaptivity \citep{qiao2022sample,qiao2022near}, whose sample complexities are both $\widetilde{O}(\frac{H^5 S^2 A}{\epsilon^2})$.
Last but not least, compared to Algorithm \ref{alg:main}, our Algorithm \ref{alg:rfe} could output near Nash policies for both players simultaneously by running a single algorithm.
	

\section{Proof overview}\label{sec:ps}

In this section, we summarize the proof of Theorem~\ref{thm:main}. The analysis contains two main parts: the batch complexity bound and the regret bound. The batch complexity bound can be directly derived from the schedule of Algorithm~\ref{alg:main}.

\noindent\textbf{Upper bound for batch complexity.} Crude exploration (Algorithm \ref{alg:crude}) can be run in $H$ batches while Fine exploration (Algorithm \ref{alg:fine}) can be run in $2$ batches. Since Algorithm \ref{alg:main} contains 2 Crude exploration's and $O(\log\log K)$ Fine exploration's, we have the conclusion that the batch complexity of Algorithm~\ref{alg:main} is bounded by $O(H+\log\log K)$.

However, such an elimination schedule requires the algorithm to run the same exploration policy for a large number of episodes before being able to switch to another policy, which is the main technical hurdle to the regret analysis.

\noindent\textbf{Regret upper bound.} The core of the regret analysis is to construct a uniform off-policy evaluation bound that covers all remaining policies.
The remaining policy set (for the max-player) at the beginning of stage $k$ is $\Pi_A^{k}$. Assume with high probability, for all $(\mu,\nu)\in\Pi_A^{k}\times\Pi_B$, we can estimate $V^{\mu,\nu}(r,P)$  to $\epsilon_{k}$ accuracy using $\widehat{P}^k$, then the difference between $\inf_{\nu\in\Pi_B}V^{\mu,\nu}(r,\widehat{P}^k)$ and $V^{\mu,\dagger}(r,P)$ is bounded by $\epsilon_k$ uniformly for all $\mu\in\Pi_A^k$. Therefore if we eliminate all policies $\mu$ that are at least $2\epsilon_{k}$ sub-optimal in the sense of $\inf_{\nu\in\Pi_B}V^{\mu,\nu}(r,\widehat{P}^k)$, the Nash policy will not be eliminated and all the policies remaining will be $4\epsilon_{k}$-approximate Nash. Summing up the regret of all stages, we have with high probability, the total regret is bounded by
\begin{equation}\label{equr}
\text{Regret}(K)\leq O(HT^{(1)})+O\left(\sum_{k=2}^{K_{0}}T^{(k)}\times \epsilon_{k-1}\right).
\end{equation}
The following key lemma provides an upper bound of $\epsilon_{k-1}$ given that we use the empirical transition kernel $\widehat{P}^k$ in Algorithm \ref{alg:main} to estimate $V^{\mu,\nu}(r,P)$. 
\begin{lemma}\label{lem16}
	W.h.p, for any $k$ and $(\mu,\nu)\in\Pi_A^{k}\times\Pi_B$,
	$$\left|V^{\mu,\nu}(r,\widehat{P}^{k})-V^{\mu,\nu}(r,P)\right|\leq \widetilde{O}\left(\sqrt{\frac{H^{3}S^{2}AB}{T^{(k)}}}\right).$$
\end{lemma}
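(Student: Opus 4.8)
The plan is to interpose the absorbing MG $\widetilde{P}$ between $\widehat{P}^{k}$ and $P$ and bound the two resulting terms separately via the triangle inequality,
\[
\left|V^{\mu,\nu}(r,\widehat{P}^{k})-V^{\mu,\nu}(r,P)\right|\leq \left|V^{\mu,\nu}(r,\widehat{P}^{k})-V^{\mu,\nu}(r,\widetilde{P})\right|+\left|V^{\mu,\nu}(r,\widetilde{P})-V^{\mu,\nu}(r,P)\right|.
\]
The second term is the \emph{absorbing bias} and the first is the \emph{estimation error}; I would show each is $\widetilde{O}(\sqrt{H^{3}S^{2}AB/T^{(k)}})$. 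Throughout, the workhorse is the fixed-policy simulation (value-difference) identity: since $(\mu,\nu)$ are both held fixed, the induced process is a Markov reward process and $V^{\mu,\nu}(r,P_1)-V^{\mu,\nu}(r,P_2)=\sum_{h}\mathbb{E}_{(s,a,b)\sim d_h^{\mu,\nu,P_1}}[(P_{1,h}-P_{2,h})(\cdot\mid s,a,b)^\top V_{h+1}^{\mu,\nu}(r,P_2)]$.

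For the absorbing bias, recall that $\widetilde{P}$ agrees with $P$ except on the infrequent tuples $\mathcal{F}$, where mass is rerouted to $s^\dagger$. Because each $\pi_{h,s,a,b}$ is greedy for reaching $(h,s,a,b)$ under $P^{int}$, and $P^{int}$ is multiplicatively close to $\widetilde{P}$ (the Remark after Definition~\ref{def2}), a tuple whose Crude-Exploration count falls below $C_1H^2\iota$ must have vanishing true occupancy under \emph{every} $(\mu,\nu)\in\Pi_A^{k}\times\Pi_B$: a Bernstein/Freedman comparison of empirical counts to occupancies shows even the occupancy-maximizing greedy policy cannot reach it frequently, and greediness dominates all other policies. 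Plugging the resulting occupancy bound on $\mathcal{F}$ into the simulation identity then controls the bias.

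For the estimation error, I would again invoke the simulation identity to write the difference as $\sum_{h}\mathbb{E}_{(s,a,b)\sim d_h^{\mu,\nu,\widetilde{P}}}[(\widehat{P}_h-\widetilde{P}_h)(\cdot\mid s,a,b)^\top V_{h+1}^{\mu,\nu}(r,\widehat{P}^{k})]$ and control each per-tuple transition error by a Bernstein bound of order $\sqrt{\mathrm{Var}_{\widetilde{P}}(V_{h+1})/N_h(s,a,b)}+H/N_h(s,a,b)$. Making this hold uniformly over the version space is the crux: the Fine-Exploration policy $\pi$ of \eqref{equ:policy} is chosen to minimize $\sup_{(\mu',\nu')}\sum_{h,s,a,b}V^{\mu',\nu'}(1_{h,s,a,b},P^{int})/V^{\pi}(1_{h,s,a,b},P^{int})$, i.e.\ to bound the summed coverage ratio $d_h^{\mu,\nu}(s,a,b)/d_h^{\pi}(s,a,b)$ by $\widetilde{O}(SAB)$ simultaneously for every policy pair. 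Transferring coverage from $P^{int}$ to $\widetilde{P}$ via multiplicative closeness, and using that $\pi$ runs for $T^{(k)}$ episodes so $N_h(s,a,b)\gtrsim T^{(k)}d_h^{\pi}(s,a,b)$, a Cauchy--Schwarz over $(s,a,b)$ against the coverage bound together with a law-of-total-variance summation of the $\mathrm{Var}_{\widetilde{P}}(V_{h+1})$ terms collapses the horizon dependence to $H^3$ (rather than $H^4$) and yields the leading $\sqrt{H^{3}S^{2}AB/T^{(k)}}$ rate. The $H/N$ Bernstein correction plus a union bound over a covering net of the continuum policy space contributes only the lower-order $H^{5}S^{3}A^{2}B^{2}/T^{(k)}$ term that surfaces in the elimination threshold of Algorithm~\ref{alg:main}; for stages $k\geq 3$ the same $\mathcal{F}$ and $P^{int}$ from stage $2$ are reused, so only the refreshed estimate changes and the identical argument applies.

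The main obstacle is precisely this uniform-over-policies control: the version space is an (essentially exponential) continuum, so a naive union bound over policies is hopeless, and the whole purpose of the two-stage Crude/Fine construction is to replace it by a single exploration distribution whose coverage ratio against every target is simultaneously bounded. The delicate bookkeeping lies in verifying that the coverage guarantee survives the three approximations in play --- from $P^{int}$ to $\widetilde{P}$ (multiplicative), from $\widetilde{P}$ to $\widehat{P}^{k}$ (Bernstein), and from $\widetilde{P}$ to $P$ (absorbing) --- without the errors compounding beyond the claimed rate.
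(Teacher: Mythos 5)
Your decomposition is exactly the paper's: interpose the absorbing MG $\widetilde{P}$, bound the bias $|V^{\mu,\nu}(r,\widetilde{P})-V^{\mu,\nu}(r,P)|$ by $\widetilde{O}(H^5S^3A^2B^2/T^{(k)})$ via the infrequent-tuple/occupancy argument, and bound the estimation error via the simulation lemma, per-tuple Bernstein bounds, the minimax coverage design of \eqref{equ:policy} (giving a summed occupancy ratio of $O(HSAB)$, transferred from $P^{int}$ to $\widetilde{P}$ by multiplicative closeness), and a Cauchy--Schwarz plus law-of-total-variance step that caps the variance sum at $H^2$. Your handling of stages $k\geq 3$ (reusing $\mathcal{F}^2$, $P^{int,2}$) also matches.

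There is, however, one concrete misstep in your uniformity mechanism: the ``union bound over a covering net of the continuum policy space.'' The paper uses no net anywhere, and your version would not deliver the claimed rate. If you apply Bernstein to the scalar $(\widehat{P}_h-\widetilde{P}_h)(\cdot\mid s,a,b)^\top V_{h+1}^{\mu,\nu}$ to get the $\sqrt{\mathrm{Var}_{\widetilde{P}}(V_{h+1})/N_h}$ form you quote, the test function is policy-dependent, so a net is forced --- but the metric entropy of Markov policy pairs is $\Theta(HS(A+B)\log(1/\epsilon))$, and this log-cardinality enters the \emph{leading} Bernstein term multiplicatively under the square root, yielding roughly $\sqrt{H^4S^2AB(A+B)\iota/T^{(k)}}$ rather than the claimed $\sqrt{H^3S^2AB\iota/T^{(k)}}$; it cannot be relegated to the lower-order $H^5S^3A^2B^2/T^{(k)}$ term as you assert. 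The paper's device (proof of Lemma~\ref{lem:dif}) is instead to bound the transition error \emph{entrywise}, $|\widehat{P}_h(s'\mid s,a,b)-\widetilde{P}_h(s'\mid s,a,b)|\leq\sqrt{2\widetilde{P}_h(s'\mid s,a,b)\iota/N_h(s,a,b)}+2\iota/3N_h(s,a,b)$, a union bound over only the $O(HS^2AB)$ tuples, and then Cauchy--Schwarz over $s'$ against $|f_{h+1}(s')-\widetilde{P}_h\cdot f_{h+1}|$, paying an extra $\sqrt{S}$ to obtain $\sqrt{2S\iota\,\mathrm{Var}_{\widetilde{P}_h}f_{h+1}/N_h}$ \emph{deterministically for every value function}, hence for every policy pair with no net at all. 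That $\sqrt{S}$ is precisely the origin of the $S^2$ in the final bound --- a trade the paper makes deliberately, and the reason it remarks that improving the $S$-dependence would require new ideas for the larger policy space. A smaller omission: your claim $N_h(s,a,b)\gtrsim T^{(k)}d_h^{\pi}(s,a,b)$ needs the auxiliary-policy runs ($\pi'$ for $NT^{(k)}$ episodes, Lemmas~\ref{lem:policy2}--\ref{lem:keycount}) to absorb the $-\iota$ slack in the martingale concentration for tuples with tiny occupancy; that is why Fine Exploration runs $\pi'$ at all.
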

The proof of Lemma~\ref{lem16} requires controlling both the ``bias'' and ``variance'' part of estimation error. The ``bias'' refers to the difference between the true MG and the absorbing MG, while the ``variance'' refers to the statistical error in estimating the value functions of the absorbing MG using 
$\widehat{P}^{k}$. For simplicity, in the following discussion, we omit the stage number $k$ and the discussion holds true for all $k$.

\noindent\textbf{The ``bias'': difference between $P$ and $\widetilde{P}$.} First, it holds that if the visitation number of a tuple $(h,s,a,b,s^{\prime})$ is larger than $O(H^{2}\iota)$, with high probability, 
{\small
\begin{align*}
(1-\frac{1}{H})P^{int}_{h}(s^{\prime}|s,a,b)\leq \widetilde{P}_{h}(s^{\prime}|s,a,b) \leq (1+\frac{1}{H})P^{int}_{h}(s^{\prime}|s,a,b).
\end{align*}
}
According to the definition of $\mathcal{F}$ in Algorithm~\ref{alg:crude}, we have the above inequality is true for any $(h,s,a,b,s^{\prime})$. Then we have for any $(h,s,a,b)$ and any policy $(\mu,\nu)\in\Pi_A\times\Pi_B$, 
\begin{align*}
\frac{1}{4}V^{\mu,\nu}(1_{h,s,a,b},P^{int}) \leq V^{\mu,\nu}(1_{h,s,a,b},\widetilde{P}) \leq 3V^{\mu,\nu}(1_{h,s,a,b},P^{int}).
\end{align*}
Due to the construction of $\pi_{h,s,a,b}$, we can see that it is efficient in visiting the tuple $(h,s,a,b)$ under the true MG. Therefore, we are able to bound the difference between $P$ and $\widetilde{P}$ in the sense of value function. 

\begin{lemma}
W.h.p, for any policy $(\mu,\nu)\in\Pi_A\times\Pi_B$, 
$$\left|V^{\mu,\nu}(r,\widetilde{P})-V^{\mu,\nu}(r,P)\right|\leq \widetilde{O}\left(\frac{H^{5}S^{3}A^{2}B^2}{T}\right).$$ 
\end{lemma}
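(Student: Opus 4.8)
The plan is to control the value gap by a \emph{first-leak} argument: a trajectory drawn under the true model $P$ and one drawn under $\widetilde P$ can be coupled so that they coincide exactly until the first time an edge $(h,s,a,b,s')\in\mathcal F$ is traversed, and the whole discrepancy is charged to this leak event. Writing $d_h^{\mu,\nu,Q}(s,a,b):=V^{\mu,\nu}(1_{h,s,a,b},Q)$ for the visitation probability of $(h,s,a,b)$ under model $Q$, and using that $\widetilde P$ agrees with $P$ outside $\mathcal F$, redirects the $\mathcal F$-edges to $s^\dagger$, and $s^\dagger$ accrues zero reward, coupling until the first leak yields
\begin{equation*}
\left|V^{\mu,\nu}(r,\widetilde P)-V^{\mu,\nu}(r,P)\right|\le H\sum_{h=1}^{H}\sum_{s,a,b} d_h^{\mu,\nu,\widetilde P}(s,a,b)\!\!\sum_{s':(h,s,a,b,s')\in\mathcal F}\!\! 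P_h(s'|s,a,b),
\end{equation*}
since each uncoupled trajectory loses at most $H$ future reward and the weight $d_h^{\mu,\nu,\widetilde P}$ is precisely the mass of not-yet-leaked paths reaching $(h,s,a,b)$ (these coincide under $P$ and $\widetilde P$).

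The next step is to pass from the arbitrary policy $(\mu,\nu)\in\Pi_A\times\Pi_B$ to the greedy exploration policy $\pi_{h,s,a,b}=\mathrm{argmax}_{(\mu',\nu')}V^{\mu',\nu'}(1_{h,s,a,b},P^{int})$ built in Algorithm~\ref{alg:crude}. Chaining the two-sided multiplicative closeness $\tfrac14 V^{\mu,\nu}(1_{h,s,a,b},P^{int})\le V^{\mu,\nu}(1_{h,s,a,b},\widetilde P)\le 3 V^{\mu,\nu}(1_{h,s,a,b},P^{int})$ established above, the $\mathrm{argmax}$ property, and $d_h^{\pi_{h,s,a,b},\widetilde P}\le d_h^{\pi_{h,s,a,b},P}$ (as $\widetilde P$ only removes mass), I obtain a universal constant $c_0$ with
\begin{equation*}
d_h^{\mu,\nu,\widetilde P}(s,a,b)\ \le\ c_0\, d_h^{\pi_{h,s,a,b},P}(s,a,b)\qquad\text{for all }(\mu,\nu)\text{ and }(h,s,a,b).
\end{equation*}

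It remains to bound the exploration-policy visitation of each $\mathcal F$-tuple. At layer $h$, Algorithm~\ref{alg:crude} runs the uniform mixture of the $SAB$ policies $\{\pi_{h,s,a,b}\}$ for $T_0=T/H$ episodes, so $\mathbb E[N_h(s,a,b,s')]\ge \tfrac{T}{HSAB}\,d_h^{\pi_{h,s,a,b},P}(s,a,b)\,P_h(s'|s,a,b)$. A multiplicative Chernoff bound and a union bound over all $HS^2AB$ tuples show that whenever a tuple lands in $\mathcal F$ (observed count $N_h\le C_1H^2\iota$) its expectation is $O(H^2\iota)$ as well, hence $d_h^{\pi_{h,s,a,b},P}(s,a,b)\,P_h(s'|s,a,b)\lesssim H^3 SAB\,\iota/T$ for every $(h,s,a,b,s')\in\mathcal F$. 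Substituting into the previous two displays and summing the at most $HS^2AB$ tuples of $\mathcal F$, with the leading factor $H$, gives $H\cdot HS^2AB\cdot\frac{H^3SAB\iota}{T}=\widetilde O\!\big(\frac{H^5S^3A^2B^2}{T}\big)$, as claimed.

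The main obstacle is this last step: converting the \emph{observed} undersampling $N_h\le C_1H^2\iota$ into a bound on the \emph{expected} visitation, uniformly over all tuples. This demands a concentration argument robust to the data-dependent definition of $\mathcal F$ and to the extra randomness of selecting components of the mixture policy, and it must be folded, under a single high-probability event, together with the multiplicative-closeness statement of the Remark that drives the reduction to $\pi_{h,s,a,b}$. By comparison, the coupling in the first step and the chaining of closeness/$\mathrm{argmax}$ inequalities in the second are conceptually routine once the closeness guarantee is in place.
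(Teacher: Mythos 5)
Your proposal is correct and takes essentially the same route as the paper, which establishes this statement as Lemma~\ref{lemfirst} by importing Lemma E.12 of \citet{qiao2022sample} to the product class $\Pi_A\times\Pi_B$: the same three ingredients appear there — charging the value gap to the mass leaked to $s^\dagger$ through $\mathcal{F}$-tuples, reducing an arbitrary $(\mu,\nu)$ to the greedy policies $\pi_{h,s,a,b}$ via the multiplicative closeness $\frac{1}{4}V^{\mu,\nu}(1_{h,s,a,b},P^{int})\leq V^{\mu,\nu}(1_{h,s,a,b},\widetilde{P})\leq 3V^{\mu,\nu}(1_{h,s,a,b},P^{int})$ together with $V^{\pi,\nu'}(1_{h,s,a,b},\widetilde{P})\leq V^{\pi,\nu'}(1_{h,s,a,b},P)$, and converting the observed undersampling defining $\mathcal{F}$ into an expected-visitation bound of order $H^3SAB\iota/T$ per tuple before summing over the at most $HS^2AB$ tuples. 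The concentration subtlety you flag is handled there exactly as you anticipate, via the martingale lower-tail bound (Lemma~\ref{lem8}) applied with a union bound over all tuples to an event whose definition does not reference the data-dependent set $\mathcal{F}$.
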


\noindent\textbf{The ``variance'': difference between $\widehat{P}$ and $\widetilde{P}$.} Since $P^{int}$ is close to $\widetilde{P}$, we can prove the following lemma.
\begin{lemma}
W.h.p, for any policy $(\mu,\nu)\in\Pi_A\times\Pi_B$,
$$\left|V^{\mu,\nu}(r,\widehat{P})-V^{\mu,\nu}(r,\widetilde{P})\right|\leq\widetilde{O}\left(\sqrt{\frac{H^{3}S^{2}AB}{T}}\right).$$
\end{lemma}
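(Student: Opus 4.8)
The plan is to control the value gap through the standard value-difference (simulation) decomposition, which reduces a statement that must hold uniformly over the exponentially large class $\Pi_A\times\Pi_B$ to a polynomial collection of per-coordinate transition-estimation bounds, and then to sum those bounds using the coverage guarantee engineered into Fine Exploration (Algorithm~\ref{alg:fine}) together with the law of total variance. First I would apply the value difference lemma: a fixed pair $(\mu,\nu)$ drives the same Markov reward process on $\mathcal S$ that a single policy would drive on an MDP with product action space $\mathcal A\times\mathcal B$, so all single-agent identities transfer verbatim. Telescoping over layers gives
$$V^{\mu,\nu}(r,\widehat{P})-V^{\mu,\nu}(r,\widetilde{P})=\sum_{h=1}^{H}\sum_{s,a,b}d_h^{\mu,\nu}(s,a,b)\,\big[(\widehat{P}_h-\widetilde{P}_h)(\cdot\mid s,a,b)\big]^{\top}V_{h+1}^{\mu,\nu}(r,\widetilde{P}),$$
where $d_h^{\mu,\nu}(s,a,b)=V^{\mu,\nu}(1_{h,s,a,b},\widehat P)$ is the occupancy of $(\mu,\nu)$ under $\widehat P$. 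The dependence on the policy pair now sits entirely in the occupancy weights and value vectors, while the randomness is confined to the $O(HS^2AB)$ transition entries.

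Next I would establish per-coordinate concentration. For every retained tuple $(h,s,a,b,s')\notin\mathcal F$, a Bernstein bound on the estimator gives $|(\widehat P_h-\widetilde P_h)(s'\mid s,a,b)|\lesssim\sqrt{\widetilde P_h(s'\mid s,a,b)\,\iota/N_h(s,a,b)}+\iota/N_h(s,a,b)$, while for $(h,s,a,b,s')\in\mathcal F$ both sides vanish by Definition~\ref{def2}. Since this is a union bound over only polynomially many coordinates, the event holds with high probability \emph{simultaneously}, and the gap bound for \emph{every} $(\mu,\nu)$ then follows deterministically — this is exactly how we sidestep an impossible union bound over all policies. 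To keep the horizon dependence sharp I would subtract the conditional mean $\mathbb E_{\widetilde P_h(\cdot\mid s,a,b)}[V_{h+1}^{\mu,\nu}]$ from $V_{h+1}^{\mu,\nu}$ inside the inner product (legitimate because both rows are probability distributions) and apply Cauchy–Schwarz over the $S$ coordinates $s'$. This converts the per-tuple contribution into $\sqrt{S\,\mathbb V_h^{\mu,\nu}(s,a,b)\,\iota/N_h(s,a,b)}$, where $\mathbb V_h^{\mu,\nu}(s,a,b)=\mathrm{Var}_{\widetilde P_h(\cdot\mid s,a,b)}(V_{h+1}^{\mu,\nu})$; the factor $\sqrt S$ is precisely the price of needing a bound valid for all value vectors at once.

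Finally I would bring in the coverage guarantee. Multiplicative closeness of $P^{int}$ to $\widetilde P$ transfers to the single-step functionals, so the minimizer $\pi$ of \eqref{equ:policy} certifies the min–max coverage bound $\sup_{(\mu,\nu)}\sum_{h,s,a,b}d_h^{\mu,\nu}(s,a,b)/\rho_h(s,a,b)=\widetilde O(HSAB)$ with $\rho_h(s,a,b):=V^\pi(1_{h,s,a,b},\widetilde P)$, and a count concentration yields $N_h(s,a,b)\gtrsim T\,\rho_h(s,a,b)$ on retained tuples. Substituting and applying Cauchy–Schwarz in the form $\sum_{h,s,a,b}d_h\sqrt{\mathbb V_h/\rho_h}\le(\sum_{h,s,a,b}d_h/\rho_h)^{1/2}(\sum_{h,s,a,b}d_h\mathbb V_h)^{1/2}$, then bounding $\sum_{h,s,a,b}d_h^{\mu,\nu}\mathbb V_h^{\mu,\nu}\le H^2$ by the law of total variance (the summed conditional variances are at most the variance of the total return), gives a leading term $\sqrt{(S\iota/T)\cdot HSAB\cdot H^2}=\widetilde O(\sqrt{H^3S^2AB/T})$; the residual $1/N_h$ terms are lower order and are absorbed.

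The main obstacle is the uniformity over $\Pi_A\times\Pi_B$, and its resolution is structural rather than probabilistic: once the polynomial-size union bound over transition coordinates is in place, the value gap for any policy pair is a fixed, deterministic function of those coordinate errors, so no per-policy concentration is ever needed. The two quantitative subtleties are that routing through $L_1$/per-coordinate estimates (rather than a policy-specific scalar Bernstein bound) forces the extra $\sqrt S$ that yields $S^2$ in place of $S$, and that the conditional variances must be aggregated by the law of total variance to pin the horizon dependence at $H^3$ rather than the naive $H^4$.
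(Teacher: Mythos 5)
Your proposal is correct and follows essentially the same route as the paper's proof of this lemma (Lemma~\ref{lem:dif}): the simulation-lemma decomposition with occupancies $V^{\mu,\nu}(1_{h,s,a,b},\widehat{P})$, per-coordinate Bernstein bounds with a polynomial union bound (tuples in $\mathcal{F}$ contributing zero), centering by the conditional mean followed by Cauchy--Schwarz over $s'$ to extract $\sqrt{S\,\mathrm{Var}\,\iota/N_h}$, the coverage bound $\widetilde{O}(HSAB)$ from the minimizer of \eqref{equ:policy} combined with $N_h(s,a,b)\gtrsim T\,V^{\pi}(1_{h,s,a,b},\widetilde{P})$, and the law-of-total-variance bound $\sum_{h,s,a,b} d_h \mathbb{V}_h \le H^2$. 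The only step you gloss over is the constant-factor conversion of the occupancies from $\widehat{P}$ to $\widetilde{P}$ (the paper does this via the $\frac{1}{H}$-multiplicative accuracy of $\widehat{P}$, Lemmas~\ref{lem:close2} and~\ref{lem2}), which your appeal to multiplicative closeness covers in substance.
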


\noindent\textbf{Put everything together.} Combining the bounds of the ``bias'' term and the ``variance'' term, because of triangular inequality, we have the conclusion in Lemma~\ref{lem16} holds (the ``bias'' term is lower order). Then the proof of regret bound is completed by plugging in $\epsilon_{k}=\widetilde{O}\left(\sqrt{\frac{H^{3}S^{2}AB}{T^{(k)}}}\right)$ in \eqref{equr}.
	

\section{Conclusion}\label{sec:conclude}
In this paper, we take the initial step to study the well-motivated problem of low adaptive multi-agent reinforcement learning. We design Algorithm \ref{alg:main} that achieves the optimal batch complexity of $O(H+\log\log K)$ and near optimal regret of $\widetilde{O}(\sqrt{H^2 S^2 ABT})$. Furthermore, our techniques naturally extend to bandit games and reward-free exploration with low adaptivity, where our results generalize or improve various previous works in the single-agent case. Future extensions are numerous, it remains open to address the computational efficiency, study Markov Games with general function approximation, as well as make the algorithms practical. We leave those as future works.

	\section*{Acknowledgments}
	The research is partially supported by NSF Awards \#2007117 and \#2003257. 
	
	\bibliographystyle{plainnat}
	\bibliography{example_paper}
	
	\newpage
	\appendix

\section{Extended related work}\label{appr}

\textbf{Low regret reinforcement learning algorithms.}
There has been a long line of works \citep{azar2017minimax,jin2018q,dann2019policy,zhang2020almost,zanette2019tighter,xu2023doubly} focusing on regret minimization for online reinforcement learning. \citet{azar2017minimax} first achieved the optimal regret bound for stationary tabular MDP using model-based approach. \citet{dann2019policy} designed an algorithm to get optimal regret bound and policy certificates at the same time. \citet{jin2018q} proved a regret bound for Q-learning while the regret bound was improved by \citet{zhang2020almost} via incorporating advantange decomposition. \citet{zanette2019tighter} designed the algorithm EULER to get an instance dependent regret bound, which also matches the lower bound in the worst case. There are also works \citep{qiao2022offline,qiao2023near} designing RL algorithms with differential privacy guarantees \citep{dwork2006calibrating,zhao2022differentially}.

\textbf{Low switching algorithms for bandits and RL.} \citet{auer2002finite} first studied multi-armed bandits with low switching cost by proposing the famous UCB2 algorithm. The optimal switching cost bound is later achieved by \citet{cesa2013online}. For multi-armed bandits with $A$ arms and $T$ episodes, they designed an algorithm with the optimal $\widetilde{O}(\sqrt{AT})$ regret while the switching cost is only $O(A\log\log T)$. \citet{simchi2019phase} generalized the result by showing that a switching cost of order $A\log\log T$ is necessary for getting the optimal $\widetilde{O}(\sqrt{T})$ regret bound. For stochastic linear bandits, \citet{abbasi2011improved} achieved the optimal regret $\widetilde{O}(d\sqrt{T})$ with $O(d\log T)$ policy switches by applying doubling trick. Under a slightly different setting, \citet{ruan2021linear} improved the result by reducing the switching cost to $O(\log\log T)$ while keeping the regret bound. \citet{bai2019provably} first studied the problem under tabular MDP. They reached regret bound of $\widetilde{O}(\sqrt{H^3SAT})$ with local switching cost $O(H^3SA\log T)$ by applying doubling trick to Q-learning. Both regret and switching cost are improved by \citet{zhang2020almost} using advantage decomposition. \citet{qiao2022sample} established that a global switching cost of order $HSA\log\log T$ is enough and necessary to achieve the optimal $\widetilde{O}(\sqrt{T})$ regret. For linear MDP, \citet{gao2021provably} arrived at regret bound $\widetilde{O}(\sqrt{d^3H^3T})$ with global switching cost $O(dH\log T)$ by applying doubling trick to LSVI-UCB. \citet{wang2021provably} generalized the above result to work for arbitrary budget of switching cost. Later, the regret bound is improved to the minimax optimal rate $\widetilde{O}(\sqrt{d^2 H^2 T})$ by \citet{he2023nearly}, while the switching cost remains the same. Beyond the linear MDP model, \citet{qiao2023logarithmic} designed low switching algorithms for linear Bellman complete MDP and MDP with general linear approximation, while \citet{kong2021online,velegkas2022best,zhao2023nearly,xiong2023general} considered MDPs with general function approximation. All of these algorithms achieved a switching cost bound depending only logarithmically on $T$. \citet{shi2023near} considered low switching adversarial reinforcement learning. On the empirical side, \citet{xu2022beyond} constructed an empirical benchmark for low switching RL. In addition, low switching algorithms can be further applied to deal with RL with delayed feedback \citep{yang2023reduction} or to achieve short burn-in time under discounted MDPs \citep{ji2023regret}.

\textbf{Batched bandits and RL.} For multi-armed bandits with $A$ arms and $T$ episodes, \citet{cesa2013online} designed an algorithm with $\widetilde{O}(\sqrt{AT})$ regret while the batch complexity is only $O(\log\log T)$. \citet{perchet2016batched} proved that $\Omega(\log\log T)$ batches are necessary for a regret bound of $\widetilde{O}(\sqrt{T})$ under 2-armed bandits. The result is generalized to $K$-armed bandits by \citet{gao2019batched}. For stochastic linear bandits, \citet{han2020sequential} achieved a regret bound of $\widetilde{O}(\sqrt{T})$ while the batch complexity is only $O(\log\log T)$. The result is improved by \citet{ruan2021linear} via using weaker assumptions. For batched RL setting, \citet{qiao2022sample} showed that their algorithm uses the optimal $O(H+\log\log T)$ batches to achieve the optimal $\widetilde{O}(\sqrt{T})$ regret. Later, \citet{zhang2022near} incorporated the idea of optimal experimental design to get near optimal regret bound and computational efficiency. For linear MDP, \citet{wang2021provably} first designed an algorithm with low batch complexity. Later, \citet{johnson2023sample} proved a dimension-dependent lower bound of batch complexity for any sample-efficient algorithms. The deployment efficient algorithms for pure exploration by \citet{huang2021towards,qiao2022near} also satisfy the definition of batched RL. Beyond linear MDPs, \citet{modi2021model} designed deployment efficient algorithms for low rank MDPs while \citet{xiong2023general} proposed batched algorithms for MDPs with general function approximation.

\textbf{Reward-free exploration.} The problem of reward-free exploration is first studied by \citet{jin2020reward}. They used the EULER algorithm \citep{zanette2019tighter} to visit each state-action pair as much as possible, and their sample complexity is $\widetilde{O}(H^{5}S^{2}A/\epsilon^{2})$. \citet{kaufmann2021adaptive} designed an algorithm which requires $\widetilde{O}(H^4 S^{2}A/\epsilon^{2})$ episodes to output a near-optimal policy for any reward function. \citet{menard2021fast} further improved the sample complexity to $\widetilde{O}(H^3 S^{2}A/\epsilon^{2})$. \citet{zhang2020nearly} considered a more general horizon-free setting with stationary transition kernel. They constructed a novel condition to achieve the optimal sample complexity $\widetilde{O}(S^{2}A/\epsilon^{2})$ under their specific setting. Also, their result can be transferred to achieve $\widetilde{O}(H^3 S^{2}A/\epsilon^{2})$ sample complexity under the standard setting where $r_{h}\in [0,1]$ and the transition kernel is non-stationary. When considering low adaptivity, \citet{qiao2022sample} designed an algorithm with the optimal $O(HSA)$ switching cost and $\widetilde{O}(H^{5}S^{2}A/\epsilon^{2})$ sample complexity. There is a relevant setting named task-agnostic exploration. \citet{zhang2020task} designed an algorithm that could find $\epsilon$-optimal policies for $N$ arbitrary tasks within at most $\widetilde{O} (H^{5}SA\log N/\epsilon^{2})$ exploration episodes. Recently, \citet{li2023minimax} studied reward-agnostic exploration in reinforcement learning, which incorporates the two settings above. For MDP with linear function approximation, a series of papers \citep{wang2020reward,zanette2020provably,zhang2021reward,chen2021near,wagenmaker2022reward,huang2021towards,qiao2022near,hu2022towards} analyzed reward-free exploration.
Among the works, \citet{huang2021towards,qiao2022near} achieved near optimal adaptivity at the same time. Beyond linear MDPs, \citet{qiu2021reward,chen2022statistical} considered reward-free exploration in MDPs with general function approximation. Reward-free RL is also known to be helpful to constrained reinforcement learning \citep{miryoosefi2022simple}.

\section{Missing algorithm: EstimateTransition (Algorithm~\ref{algo_transition_kernel}) and some explanation}\label{appb1}

Our Algorithm \ref{algo_transition_kernel} is a generalization of Algorithm 5 in \citet{qiao2022sample} to the multi-agent case. The algorithm takes a data set $\mathcal{D}$, a set of infrequent tuples $\mathcal{F}$, a transition model $P$ and the target layer $h$ to update as inputs. The output is an updated transition model where the $h$-th layer is derived according to $\mathcal{D}$ while the remaining layers stay the same. The construction of $P_{h}$ is for the tuples in $\mathcal{F}$, $P_{h}(s^{\prime}|s,a,b)=0$. For the tuples not in $\mathcal{F}$, $P_{h}(s^{\prime}|s,a,b)$ is the empirical estimate from $\mathcal{D}$. At last, $P_{h}(s^{\dagger}|s,a,b)=1-\sum_{s^{\prime}\in\mathcal{S}:(h,s,a,b,s^{\prime})\notin \mathcal{F}}P_{h}(s^{\prime}|s,a,b)$ holds so that $P_{h}$ is a valid transition kernel. In other words, the construction is similar to the construction of $\widetilde{P}$. We first let $P$ be the empirical estimate based on $\mathcal{D}$, then for $(h,s,a,b,s^{\prime})\in\mathcal{F}$, we move the probability of $P_{h}(s^{\prime}|s,a,b)$ to $P_{h}(s^{\dagger}|s,a,b)$.

\begin{algorithm}
	\caption{Compute Transition Kernel (EstimateTransition)}\label{algo_transition_kernel}
	\begin{algorithmic}[1]
		\STATE \textbf{Require}: Data set $\mathcal{D}$, infrequent tuples $\mathcal{F}$, absorbing state $s^{\dagger}$, the target layer $h$, transition kernel $P$.
		\STATE \textbf{Output}: Estimated transition kernel $P$ from data set $\mathcal{D}$.
		\STATE \maroon{$\diamond$ Count the visitation number of each state-action pairs from the target layer $h$:}
		\FOR{$(s,a,b,s')\in\mathcal{S}\times\mathcal{A}\times\mathcal{B}\times\mathcal{S}$}
		\STATE $N_{h}(s,a,b,s^{\prime})=\text{count of}\ (h,s,a,b,s^{\prime})$ in $\mathcal{D}$.
		\STATE $N_{h}(s,a,b)=\text{count of}\ (h,s,a,b)$ in $\mathcal{D}$.
		\ENDFOR
        \STATE \maroon{$\diamond$ Update the $h$-th layer of the transition kernel:}
		\FOR{$(s,a,b,s^{\prime})\in \mathcal{S}\times \mathcal{A}\times\mathcal{B}\times\mathcal{S}$ s.t. $(h,s,a,b,s^{\prime})\in \mathcal{F}$}
		\STATE $P_{h}(s^{\prime}|s,a,b)=0.$
		\ENDFOR
		\FOR{$(s,a,b,s^{\prime})\in \mathcal{S}\times \mathcal{A}\times\mathcal{B}\times\mathcal{S}$ s.t. $(h,s,a,b,s^{\prime})\notin \mathcal{F}$}
		\STATE $P_{h}(s^{\prime}|s,a,b)=\frac{N_{h}(s,a,b,s^{\prime})}{N_{h}(s,a,b)}.$ 
		\ENDFOR
		\FOR{$(s,a,b)\in \mathcal{S}\times \mathcal{A}\times\mathcal{B}$}
		\STATE $P_{h}(s^{\dagger}|s,a,b)=1-\sum_{s^{\prime}\in\mathcal{S}:(h,s,a,b,s^{\prime})\notin \mathcal{F}}P_{h}(s^{\prime}|s,a,b).$
		\ENDFOR
		\FOR{$(a,b)\in \mathcal{A}\times\mathcal{B}$}
		\STATE $P_{h}(s^{\dagger}|s^{\dagger},a,b)=1.$
		\ENDFOR
		\STATE \textbf{Return} $P$.
	\end{algorithmic}
\end{algorithm}

\section{Transition between original MG and absorbing MG}\label{appb2}
For any reward function $r$ defined under the original MG $P$, we overload the notation and also use it under the absorbing MG. The extended definition of $r$ under the absorbing MG is shown below: \\
$$r(s,a,b) = \begin{cases} r(s,a,b), &   s\in \mathcal{S},  \\ 0,& s=s^{\dagger}. \end{cases}$$
For any policy $\pi= (\mu,\nu)$ defined under the original MG $P$, we overload the notation and also use it under the absorbing MG. The extended definition of $(\mu,\nu)$ under the absorbing MG is shown below: \\
$$\mu(\cdot|s) = \begin{cases} \mu(\cdot|s), &  s\in \mathcal{S}, \\ \text{arbitrary distribution,} & s=s^{\dagger}. \end{cases}$$
$$\nu(\cdot|s) = \begin{cases} \nu(\cdot|s), &  s\in \mathcal{S}, \\ \text{arbitrary distribution,} & s=s^{\dagger}. \end{cases}$$
With the definition of $r$ and $\pi$, the value function under the absorbing MG is independent of the \emph{arbitrary distribution} since there will be no more reward once the agent enters the absorbing state $s^{\dagger}$. The policies defined under the absorbing MG can be directly applied under the real MG because such policies have definition for any $s\in \mathcal{S}$.

In this paper, $P$ is the real MG, which is under original MG. In each stage, $\widetilde{P}$ is an absorbing MG constructed based on infrequent tuples $\mathcal{F}$ and the real MG $P$. When we run the algorithm, we do not have access to $\widetilde{P}$, but we know exactly the intermediate transition kernel $P^{int}$, which is also an absorbing MG. In Algorithm~\ref{alg:fine}, the $\widehat{P}$ we construct is the empirical estimate of $\widetilde{P}$, which is also an absorbing MG. In the proof of this paper, a large part of discussion is under the framework of absorbing MG. When we specify that the discussion is under absorbing MG with absorbing state $s^{\dagger}$, any transition kernel $P^{\prime}$ satisfies $P^{\prime}_{h}(s^{\dagger}|s^{\dagger},a,b)=1$ for any $(a,b,h)\in\mathcal{A}\times\mathcal{B}\times[H]$. For the reward functions in this paper, they are all defined under original MG, when applied under absorbing MG, the transition rule follows what we just discussed.

\section{Proof of lemmas regarding Crude Exploration  (Algorithm~\ref{alg:crude})}

We first prove an upper bound for batch complexity.
\begin{lemma}\label{lem1}
	The batch complexity of Algorithm~\ref{alg:crude} is bounded by $H$.
\end{lemma}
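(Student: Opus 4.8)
The plan is to read the batch complexity directly off the structure of the \textbf{for}-loop over layers $h=1,\dots,H$ in Algorithm~\ref{alg:crude}, and to argue that within each iteration of this loop, the algorithm deploys exactly one (general) policy and collects data for a predetermined number of episodes $T_0 = T/H$. Recall that by Definition (batch complexity), an algorithm has batch complexity $M$ if it pre-determines lengths $\{T_i\}_{i\in[M]}$ summing to the total horizon, and at the start of each batch commits to a single general policy played for the whole batch. So the task reduces to showing that Algorithm~\ref{alg:crude} commits to a fresh policy at most $H$ times, each for a pre-specified length.

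First I would observe that all computation performed \emph{before} running any episodes in iteration $h$ --- namely constructing $\pi_{h,s,a,b}=\mathrm{argmax}_{(\mu,\nu)}V^{\mu,\nu}(1_{h,s,a,b},P^{int})$ for every $(s,a,b)$, and forming the uniform mixture $\pi_h$ --- depends only on the current $P^{int}$, which is itself a function of data gathered in strictly earlier layers $1,\dots,h-1$. Crucially, this planning uses \emph{no} new samples from the environment. The only interaction with the environment inside the loop body is the single instruction ``Run $\pi_h = $ uniform mixture of $\{\pi_{h,s,a,b}\}$ for $T_0$ episodes''. Thus each pass through the loop corresponds to the deployment of exactly one general policy $\pi_h$ for the fixed, predetermined count $T_0 = T/H$ episodes. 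Since $\pi_h$ is a uniform mixture of Markov policies, it is a valid general policy (a mixture of Markov policies, as permitted by the setup), so committing to it for the batch is legitimate.

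Next I would count: the loop runs for $h=1,\dots,H$, yielding $H$ deployments, with lengths $T_1 = \cdots = T_H = T_0 = T/H$ and $\sum_{h=1}^H T_h = H\cdot (T/H) = T$, matching the required total. The per-layer post-processing steps (updating counts $N_h$, augmenting $\mathcal{F}$, and calling EstimateTransition to refresh $P^{int}_h$) are purely computational and consume no episodes, so they do not create additional batches. Likewise the final construction of the uniformly explorative policy $\pi$ after the loop is computational only. Therefore the total number of policy deployments, hence the batch complexity, is exactly $H$.

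The only genuinely substantive point --- and thus the one I would make explicit rather than gloss over --- is verifying that the policy played in layer $h$ is allowed to \emph{depend} on the data from layers $1,\dots,h-1$ without inflating the batch count. This is precisely what the batched-RL definition permits: the \emph{lengths} $\{T_i\}$ must be predetermined (they are, all equal to $T/H$), but the \emph{policy} chosen at the start of batch $i$ may adapt to everything observed in batches $1,\dots,i-1$. Since the construction of $\pi_h$ reads off $P^{int}$ (a statistic of earlier batches) but consumes no fresh samples, the layer-by-layer schedule is exactly a valid $H$-batch schedule, and no subtlety about interleaving exploration with estimation can raise the count above $H$. I do not anticipate a serious obstacle here; the lemma is essentially a bookkeeping statement, and the proof is a short structural argument rather than a calculation.
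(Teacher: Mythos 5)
Your proposal is correct and follows essentially the same argument as the paper's proof, which simply notes that each $\pi_h$ (the uniform mixture over $\{\pi_{h,s,a,b}\}_{(s,a,b)}$) is a single general policy and that $H$ such policies are run in total; your version merely spells out the bookkeeping (predetermined lengths $T_0=T/H$, planning consuming no episodes, adaptivity of the policy across batches being permitted) in more detail.
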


\begin{proof}[Proof of Lemma \ref{lem1}]
    Since each $\pi_{h}=$ uniform mixture of $\{\pi_{h,s,a,b}\}_{(s,a,b)}$ is one general policy and we run $H$ such $\pi_h$'s in total, the number of batches is bounded by $H$.
\end{proof}

Note that our Crude Exploration (Algorithm \ref{alg:crude}) is adapted from Algorithm 2 in \citet{qiao2022sample} for the single-agent RL case, and the difference is that here we replace the single version space $\phi$ in \citet{qiao2022sample} with a product policy space $\Pi_A\times\Pi_B$. Therefore, the results in \citet{qiao2022sample} directly extend to the multi-agent case. We begin with the property of $P^{int}$ and $\widetilde{P}$.

\begin{lemma}[Lemma E.3 in \citet{qiao2022sample}]\label{lem6}
With probability $1-\frac{S\delta}{K}$, $\forall\, (h,s,a,b,s^{\prime})\in[H]\times \mathcal{S}\times\mathcal{A}\times\mathcal{B}\times\mathcal{S}$ such that $(h,s,a,b,s^{\prime})\notin \mathcal{F}$, it holds that
	$$ (1-\frac{1}{H})P^{int}_{h}(s^{\prime}|s,a,b)\leq \widetilde{P}_{h}(s^{\prime}|s,a,b) \leq (1+\frac{1}{H})P^{int}_{h}(s^{\prime}|s,a,b). $$
 In addition, we have that $\forall\, (h,s,a,b,s^{\prime})\in \mathcal{F}$, $\widetilde{P}_{h}(s^{\prime}|s,a,b)=P^{int}_{h}(s^{\prime}|s,a,b)=0$.
\end{lemma}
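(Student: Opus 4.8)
The plan is to reduce the lemma to a standard multiplicative concentration inequality for empirical transition counts, dispatch the $\mathcal{F}$ case purely by construction, and close with a union bound.

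First I would observe that the second assertion is deterministic and needs no probabilistic argument. By the construction in EstimateTransition (Algorithm~\ref{algo_transition_kernel}), for every $(h,s,a,b,s')\in\mathcal{F}$ we explicitly set $P^{int}_h(s'|s,a,b)=0$, and by Definition~\ref{def2} of the absorbing MG we likewise have $\widetilde{P}_h(s'|s,a,b)=0$. Hence $\widetilde{P}_h(s'|s,a,b)=P^{int}_h(s'|s,a,b)=0$ with probability one. For the first assertion, fix a tuple $(h,s,a,b,s')\notin\mathcal{F}$. Being non-infrequent means its empirical next-state count satisfies $N_h(s,a,b,s')>C_1H^2\iota$. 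Moreover, for such tuples Definition~\ref{def2} gives $\widetilde{P}_h(s'|s,a,b)=P_h(s'|s,a,b)$ and Algorithm~\ref{algo_transition_kernel} gives $P^{int}_h(s'|s,a,b)=N_h(s,a,b,s')/N_h(s,a,b)$, so the claim is exactly the two-sided multiplicative bound $|P_h(s'|s,a,b)-P^{int}_h(s'|s,a,b)|\le\frac{1}{H}P^{int}_h(s'|s,a,b)$.

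The structural fact I would exploit is that, when Crude Exploration runs $\pi_h$ and records the layer-$h$ transitions, the next states drawn from a fixed $(s,a,b)$ form a conditionally i.i.d.\ (martingale) sequence of samples of $P_h(\cdot|s,a,b)$. This holds regardless of how the general, possibly non-Markov mixture policy selected $(a,b)$, since the Markov kernel only depends on the current $(s,a,b)$. This is precisely where the multi-agent generalization is painless: replacing the single version space of \citet{qiao2022sample} by the product space $\Pi_A\times\Pi_B$ does not alter the conditional-independence structure of the transitions, so the concentration argument of \citet{qiao2022sample} applies verbatim. Given this, I would invoke an empirical-Bernstein / multiplicative-Chernoff bound stated \emph{relative to the empirical mean}, yielding a relative error of order $\sqrt{\iota/N_h(s,a,b,s')}+\iota/N_h(s,a,b,s')$ with per-tuple failure probability $e^{-\iota}$. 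Choosing the universal constant $C_1$ large enough, the threshold $N_h(s,a,b,s')>C_1H^2\iota$ forces this relative error below $1/H$. A union bound over the at most $HS^2AB$ tuples, with $\iota=\log(2HSABK/\delta)$, then yields a total failure probability at most $S\delta/K$ as claimed.

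The main obstacle is the conditioning issue: $\mathcal{F}$ is defined through the \emph{random} count $N_h(s,a,b,s')$, so I cannot simply freeze $n=N_h(s,a,b)$ and apply a fixed-sample bound involving the unknown $p=P_h(s'|s,a,b)$. The clean way around this is twofold: (i) use a concentration inequality expressed directly in the observed empirical quantity, so that the error is controlled relative to $P^{int}$ rather than to $P$, which matches the statement exactly and removes the dependence on the unknown $p$; and (ii) combine it with a uniform-in-$n$ (anytime) union bound over the possible number of visits to $(s,a,b)$. Carrying out this time-uniformity correctly while keeping the cumulative failure probability at the stated $S\delta/K$ level is the only delicate point; every other step is routine and mirrors the single-agent analysis.
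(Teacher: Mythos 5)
Your overall architecture is sound and in fact mirrors what the paper itself does: the paper does not reprove this lemma but imports it verbatim as Lemma E.3 of \citet{qiao2022sample}, justified by exactly the observation you make --- replacing the single version space by the product space $\Pi_A\times\Pi_B$ leaves the conditional law of the layer-$h$ transitions untouched, so the single-agent concentration argument transfers. Your deterministic dispatch of the $\mathcal{F}$ case is also correct: EstimateTransition (Algorithm~\ref{algo_transition_kernel}) explicitly zeroes $P^{int}_h(s^\prime|s,a,b)$ for tuples in $\mathcal{F}$, Definition~\ref{def2} zeroes $\widetilde{P}$ there, and since $\mathcal{F}$ restricted to layer $h$ is finalized before the layer-$h$ call (the data set is reset each layer), no probabilistic argument is needed for that part.

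The genuine gap is in the failure-probability bookkeeping of precisely the step you flag as delicate. With $\iota=\log(2HSABK/\delta)$, each fixed-$n$ empirical-Bernstein event has probability $e^{-\iota}=\delta/(2HSABK)$; your anytime union over the up-to-$K$ possible values of $n=N_h(s,a,b)$ multiplies this by $K$, so after the further union over the $HS^2AB$ tuples you land at total failure probability of order $S\delta/2$, not $S\delta/K$ --- you lose exactly the factor $K$ that the union over $n$ costs, so the last sentence of your argument is arithmetically false as written. Two standard repairs: (i) avoid the union over $n$ entirely by exploiting that, conditioned on the past, the $T_0$ episodes within layer $h$ are i.i.d.\ under the frozen mixture $\pi_h$, so both $N_h(s,a,b)$ and $N_h(s,a,b,s^\prime)$ are conditionally Binomial with fixed means; split on whether $T_0\,\mathbb{P}_{\pi_h}[(h,s,a,b,s^\prime)\mid P]$ is below or above $cH^2\iota$ --- in the small-mean case an upper multiplicative Chernoff bound (Lemma~\ref{lem:chernoff}) places the tuple in $\mathcal{F}$ w.h.p., making the claim vacuous, and in the large-mean case two-sided multiplicative Chernoff on numerator and denominator yields the $(1\pm\frac{1}{H})$ sandwich with a \emph{single} event per tuple, recovering $HS^2AB\cdot\delta/(2HSABK)\leq S\delta/K$; or (ii) keep your scheme but run the per-tuple-per-$n$ events at level $e^{-2\iota}$, which restores the budget at the cost of a constant factor absorbed into $C_1$. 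Either fix is routine, but the $1/K$ factor is not decorative: the downstream union in Lemma~\ref{lem:valuedif} spends exactly an $O(S\delta/K)\times O(\log\log K)$ budget, and your version would break that accounting.
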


From Lemma~\ref{lem6}, we can see that for those tuples $(h,s,a,b,s^{\prime})$ not in $\mathcal{F}$, the estimate of the transition kernel satisfies $ (1-\frac{1}{H})P^{int}_{h}(s^{\prime}|s,a,b)\leq \widetilde{P}_{h}(s^{\prime}|s,a,b) \leq (1+\frac{1}{H})P^{int}_{h}(s^{\prime}|s,a,b)$ with high probability. In addition, for those states $(h,s,a,b,s^{\prime})\in \mathcal{F}$, $P^{int}_{h}(s^{\prime}|s,a,b)=\widetilde{P}_{h}(s^{\prime}|s,a,b)=0$, which means this inequality holds for all $(h,s,a,b,s^{\prime})\in [H]\times\mathcal{S}\times\mathcal{A}\times\mathcal{B}\times\mathcal{S}$. For simplicity, we use a new definition $\theta$-multiplicatively accurate to describe the relationship between $P^{int}$ and $\widetilde{P}$.

\begin{definition}[$\theta$-multiplicatively accurate for transition kernels (under absorbing MGs)]\label{def3}
	Under the absorbing MG with absorbing state $s^{\dagger}$, a transition kernel $P^{\prime}$ is $\theta$-multiplicatively accurate to another transition kernel $P^{\prime\prime}$ if $$(1-\theta)P^{\prime}_{h}(s^{\prime}|s,a,b) \leq P^{\prime\prime}_{h}(s^{\prime}|s,a,b) \leq (1+\theta)P^{\prime}_{h}(s^{\prime}|s,a,b)$$ for all $(h,s,a,b,s^{\prime})\in[H]\times\mathcal{S}\times\mathcal{A}\times\mathcal{B}\times\mathcal{S}$ and there is no requirement for the case when $s^{\prime}=s^{\dagger}$.
\end{definition}

Because of Lemma~\ref{lem6}, we have that with probability $1-\frac{S\delta}{K}$, $P^{int}$ is $\frac{1}{H}$-multiplicatively accurate to $\widetilde{P}$. Next, we will compare the visitation probability of each state $(h,s,a,b)$ under two transition kernels that are close to each other. 

\begin{lemma}[Lemma E.5 in \citet{qiao2022sample}]\label{lem2}
	Define $1_{h,s,a,b}$ to be the reward function $r^{\prime}$ such that $r^{\prime}_{h^{\prime}}(s^{\prime},a^{\prime},b^\prime)=\mathds{1}[(h^{\prime},s^{\prime},a^{\prime},b^\prime)=(h,s,a,b)]$. Similarly, define  $1_{h,s}$ to be the reward function $r^{\prime}$ such that $r^{\prime}_{h^{\prime}}(s^{\prime},a^{\prime},b^\prime)=\mathds{1}[(h^{\prime},s^{\prime})=(h,s)]$. Then $V^{\mu,\nu}(1_{h,s,a,b},P^{\prime})$ and $V^{\mu,\nu}(1_{h,s},P^{\prime})$ denote the visitation probability of $(h,s,a,b)$ and $(h,s)$, respectively, under $(\mu,\nu)$ and $P^{\prime}$. Under the absorbing MG with absorbing state $s^{\dagger}$, if $P^{\prime}$ is $\frac{1}{H}$-multiplicatively accurate to $P^{\prime\prime}$, for any policy pair $(\mu,\nu)$ and any $(h,s,a,b)\in[H]\times\mathcal{S}\times\mathcal{A}\times\mathcal{B}$, it holds that
	$$\frac{1}{4}V^{\mu,\nu}(1_{h,s,a,b},P^{\prime}) \leq V^{\mu,\nu}(1_{h,s,a,b},P^{\prime\prime}) \leq 3V^{\mu,\nu}(1_{h,s,a,b},P^{\prime}).$$
\end{lemma}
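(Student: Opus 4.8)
The plan is to reduce the statement to a layer-by-layer comparison of visitation probabilities under two multiplicatively-close transition kernels, and then propagate the per-layer multiplicative error through the horizon. Write $d^{\mu,\nu}_{P'}(h,s,a,b)$ for the visitation probability of $(h,s,a,b)$ under policy $(\mu,\nu)$ and kernel $P'$; this is exactly $V^{\mu,\nu}(1_{h,s,a,b},P')$. The key observation is the recursive identity
\begin{equation*}
d^{\mu,\nu}_{P'}(h,s,a,b)=\mu_h(a|s)\nu_h(b|s)\!\!\sum_{\tilde s,\tilde a,\tilde b} d^{\mu,\nu}_{P'}(h-1,\tilde s,\tilde a,\tilde b)\,P'_{h-1}(s|\tilde s,\tilde a,\tilde b),
\end{equation*}
and similarly for $P''$. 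First I would prove by induction on $h$ that the \emph{state-action} visitation ratio $d^{\mu,\nu}_{P''}/d^{\mu,\nu}_{P'}$ stays trapped in a shrinking multiplicative band. The policy factors $\mu_h(a|s)\nu_h(b|s)$ are identical under both kernels, so they cancel in the ratio; only the transition factors $P'$ versus $P''$ differ, and by hypothesis each such factor is off by at most $(1\pm\tfrac1H)$.

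Concretely, the induction hypothesis at layer $h$ would be that for every state $s$ (equivalently every $(s,a,b)$), $(1-\tfrac1H)^{h-1}d^{\mu,\nu}_{P'}(h,s,a,b)\le d^{\mu,\nu}_{P''}(h,s,a,b)\le(1+\tfrac1H)^{h-1}d^{\mu,\nu}_{P'}(h,s,a,b)$. The base case $h=1$ is trivial since the visitation of the first layer is determined solely by the fixed initial state and the (shared) policy, giving ratio exactly $1$. For the inductive step, one plugs the recursion above into the ratio: the inner sum over $(\tilde s,\tilde a,\tilde b)$ is a nonnegative combination, so replacing each $d^{\mu,\nu}_{P'}(h-1,\cdot)$ by its bound (factor $(1\pm\tfrac1H)^{h-2}$) and each transition factor by its bound (factor $(1\pm\tfrac1H)$) multiplies the whole sum termwise, yielding the $(1\pm\tfrac1H)^{h-1}$ band at layer $h$. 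The absorbing state $s^\dagger$ causes no trouble because once mass flows to $s^\dagger$ it contributes zero reward under $1_{h,s,a,b}$, and Definition~\ref{def3} explicitly exempts the $s'=s^\dagger$ coordinate, so we never need to control the absorbing transitions.

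To finish, I would bound the accumulated factor using the standard inequality $(1+\tfrac1H)^{H-1}\le(1+\tfrac1H)^{H}\le e\le 3$ and $(1-\tfrac1H)^{H-1}\ge(1-\tfrac1H)^{H}\ge\tfrac14$ (the latter holding for $H\ge2$; the case $H=1$ gives ratio exactly $1$, which is well inside $[\tfrac14,3]$). Since $h-1\le H-1$, the band at any layer is contained in $[\tfrac14,3]$, which is precisely the claimed conclusion $\tfrac14 V^{\mu,\nu}(1_{h,s,a,b},P')\le V^{\mu,\nu}(1_{h,s,a,b},P'')\le 3V^{\mu,\nu}(1_{h,s,a,b},P')$.

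The main obstacle is setting up the induction cleanly so that the multiplicative errors compose rather than add: one must be careful that the per-layer factor $(1\pm\tfrac1H)$ multiplies an already-accumulated factor $(1\pm\tfrac1H)^{h-2}$ \emph{uniformly across all summands}, which is what lets the nonnegative sum inherit the bound without cross terms. This is exactly where the multiplicative (rather than additive) form of Definition~\ref{def3} is essential, and where the cancellation of the shared policy factors $\mu_h(a|s)\nu_h(b|s)$ does the real work. Everything else is routine bookkeeping, and since this is a verbatim two-player restatement of Lemma~E.5 in \citet{qiao2022sample} — with the single-agent action $a$ replaced by the pair $(a,b)$ and the transition kernel indexed by $(s,a,b)$ — the argument carries over essentially unchanged.
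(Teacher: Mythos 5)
Your proof is correct and is essentially the same argument as the one behind the paper's treatment: the paper does not reprove this lemma but imports it verbatim from Lemma~E.5 of \citet{qiao2022sample}, and your layer-by-layer induction --- cancelling the shared policy factors, propagating the per-transition $(1\pm\tfrac1H)$ factor through nonnegative sums (with transitions into $s^\dagger$ never appearing for a target $(h,s,a,b)$ with $s\in\mathcal{S}$), and closing with $(1+\tfrac1H)^{H}\le e\le 3$ and $(1-\tfrac1H)^{H}\ge\tfrac14$ --- is precisely the standard argument that citation rests on, carried over to the two-player kernel indexed by $(s,a,b)$.
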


Combining Lemma~\ref{lem6} and Lemma~\ref{lem2}, we have with high probability, for any policy pair $(\mu,\nu)$ and any $(h,s,a,b)\in [H]\times \mathcal{S}\times \mathcal{A}\times\mathcal{B}$, 
\begin{equation}\label{equ:close}
    \frac{1}{4}V^{\mu,\nu}(1_{h,s,a,b},P^{int}) \leq V^{\mu,\nu}(1_{h,s,a,b},\widetilde{P}) \leq 3V^{\mu,\nu}(1_{h,s,a,b},P^{int}).
\end{equation} 
The structure of the absorbing MG also gives rise to the following lemma
 about the relationship between $\widetilde{P}$ and $P$.
\begin{lemma}[Lemma E.6 in \citet{qiao2022sample}]\label{rem10}
 For any policy pair $(\mu,\nu)$ and any $(h,s,a,b)\in [H]\times \mathcal{S}\times \mathcal{A}\times\mathcal{B}$, $$V^{\mu,\nu}(1_{h,s,a,b},P)\geq V^{\mu,\nu}(1_{h,s,a,b},\widetilde{P}).$$
\end{lemma}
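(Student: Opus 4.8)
The plan is to read both sides probabilistically: for any kernel $P'$,
$$V^{\mu,\nu}(1_{h,s,a,b},P')=\Pr_{\mu,\nu,P'}\big[(s_h,a_h,b_h)=(s,a,b)\big],$$
i.e.\ the probability that the trajectory occupies the configuration $(s,a,b)$ at step $h$ under $P'$. The goal is therefore to show that this occupation probability can only \emph{decrease} when we pass from the true kernel $P$ to the absorbing kernel $\widetilde P$.

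The structural fact driving everything is immediate from Definition~\ref{def2}: for all real $s,s'\in\mathcal{S}$ and all $(h,a,b)$,
$$\widetilde P_h(s'|s,a,b)\le P_h(s'|s,a,b),$$
with equality except on tuples in $\mathcal{F}$, and the entire deficit $\sum_{s'\in\mathcal{S}}\big(P_h(s'|s,a,b)-\widetilde P_h(s'|s,a,b)\big)$ is routed into the absorbing state $s^\dagger$, i.e.\ equals $\widetilde P_h(s^\dagger|s,a,b)$. Intuitively, $\widetilde P$ only ever loses probability mass into a sink from which no real state is reachable, so it cannot reach any real configuration more often than $P$.

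I would make this precise via a coupling of the two trajectory distributions, run under the same (possibly history-dependent) policy $(\mu,\nu)$ with shared policy randomness. Inductively maintain the invariant that, as long as the $\widetilde P$-trajectory has not yet entered $s^\dagger$, the two trajectories have identical histories and hence sample identical actions. At each transition from a common real $(s_h,a_h,b_h)$, apply the maximal coupling: with probability $\widetilde P_h(s'|s_h,a_h,b_h)$ both copies move to the same real $s'$ and stay synced; with the remaining probability---which by the structural fact equals exactly $\widetilde P_h(s^\dagger|s_h,a_h,b_h)$---the $\widetilde P$-copy jumps to $s^\dagger$ while the $P$-copy moves to a real state drawn from the normalized residual $P_h(\cdot|s_h,a_h,b_h)-\widetilde P_h(\cdot|s_h,a_h,b_h)$. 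Since $s^\dagger$ is absorbing and never coincides with a real state, the event that the $\widetilde P$-copy occupies $(s,a,b)$ at step $h$ forces the two copies to have remained synced through step $h$, so the $P$-copy occupies $(s,a,b)$ at step $h$ as well. Taking probabilities in the coupled space yields $V^{\mu,\nu}(1_{h,s,a,b},\widetilde P)\le V^{\mu,\nu}(1_{h,s,a,b},P)$.

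For Markov policies there is a shorter route I would record as a sanity check: prove $d_h^{\mu,\nu,P}(s)\ge d_h^{\mu,\nu,\widetilde P}(s)$ for all real $s$ by induction on $h$ using the state-visitation recursion, the structural inequality above, nonnegativity of all factors, and the fact that the $s^\dagger$ term contributes zero to any real successor; the identical action factors $\mu_h(a|s)\nu_h(b|s)$ then give the claim, and mixtures of Markov policies follow by linearity. The only subtlety---and the reason I prefer the coupling as the primary argument---is that the lemma is stated for general history-dependent policies, where the ``state-visitation distribution'' must be replaced by a distribution over histories; the coupling handles this uniformly by syncing entire histories rather than just states, so it is the single step I would expect to require the most care to state cleanly.
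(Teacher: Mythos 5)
Your proof is correct, but note first that the paper contains no internal proof of this statement at all: it is imported verbatim as Lemma E.6 of \citet{qiao2022sample}, so the natural comparison is with that original single-agent argument (which transfers to the two-player case exactly as your proof does, since $(a,b)$ can be treated as a single joint action). The original route is more elementary than your coupling: by Definition~\ref{def2} one has $\widetilde{P}_{h'}(s'|s,a,b)\le P_{h'}(s'|s,a,b)$ entrywise for all \emph{real} $s'$, and the probability of any partial trajectory $(s_1,a_1,b_1,\dots,s_h,a_h,b_h)$ that lies entirely in $\mathcal{S}$ factorizes into policy terms times kernel terms; the policy terms are identical under $P$ and $\widetilde{P}$ even for history-dependent policies, because the history is determined by the state-action sequence (rewards are deterministic and the shared seed $w$ integrates out), while each kernel term can only shrink when passing to $\widetilde{P}$. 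Since reaching $(h,s,a,b)$ with $s\in\mathcal{S}$ requires staying in real states, summing this termwise domination over all such partial trajectories gives $V^{\mu,\nu}(1_{h,s,a,b},\widetilde{P})\le V^{\mu,\nu}(1_{h,s,a,b},P)$ directly --- so the ``subtlety'' you flag about general policies dissolves without any coupling machinery. Your maximal coupling is nonetheless a valid and clean alternative (the residual branch with mass $\widetilde{P}_h(s^\dagger|\cdot)$ is well-defined whenever that mass is positive and vacuous otherwise, and the $P$-copy's marginal is correct since $\widetilde{P}_h(s'|\cdot)$ plus the residual reconstitutes $P_h(s'|\cdot)$); what it buys is a single argument that is manifestly insensitive to how histories are generated, at the cost of being heavier than the one-line factorization the cited lemma rests on. Your Markov-policy induction sanity check is also fine, but as you observe it does not by itself cover the general policies the lemma is stated for.
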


Now we are ready to state the key lemma about the difference between $\widetilde{P}$ and $P$. 

\begin{lemma}[Lemma E.12 in \citet{qiao2022sample}]\label{lemfirst}
	There exists a universal constant $c_1>0$, such that with probability $1-\frac{2S\delta}{K}$, it holds that for any policy pair $(\mu,\nu)\in\Pi_A\times\Pi_B$ and reward function $r^{\prime}$,
  $$0\leq V^{\mu,\nu}(r^{\prime},P)-V^{\mu,\nu}(r^{\prime},\widetilde{P})\leq \frac{c_1 H^5 S^{3}A^{2}B^2\iota}{T}.$$
\end{lemma}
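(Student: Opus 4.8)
The plan is to prove the two inequalities separately: the nonnegativity $V^{\mu,\nu}(r',P)-V^{\mu,\nu}(r',\widetilde P)\ge 0$ is essentially free, while the upper bound requires reducing the value gap to the total probability that a trajectory ever hits an \emph{infrequent} tuple of $\mathcal F$, and then controlling that probability through the way $\mathcal F$ and the exploration policy are built in Algorithm~\ref{alg:crude}. For both directions I would start from the standard value-difference (simulation) identity, rolling out under $\widetilde P$ and using the $P$-value function,
\[V^{\mu,\nu}(r',P)-V^{\mu,\nu}(r',\widetilde P)=\sum_{h=1}^H \mathbb E_{\widetilde P,(\mu,\nu)}\Big[\big\langle (P_h-\widetilde P_h)(\cdot\mid s_h,a_h,b_h),\,V_{h+1}^{\mu,\nu}(r',P)\big\rangle\Big].\]
By Definition~\ref{def2}, the signed measure $P_h-\widetilde P_h$ puts nonnegative mass $P_h(s'\mid\cdot)$ on each real state $s'$ with $(h,\cdot,s')\in\mathcal F$ and the complementary negative mass on the reward-free absorbing state $s^\dagger$, where $V_{h+1}^{\mu,\nu}(r',P)(s^\dagger)=0$. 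Since $r'\ge 0$ gives $V_{h+1}^{\mu,\nu}(r',P)\ge 0$, every summand is nonnegative, which yields the lower bound (and re-derives Lemma~\ref{rem10} as the case $r'=1_{h,s,a,b}$).

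For the upper bound, the same identity together with $V_{h+1}^{\mu,\nu}(r',P)\le H$ (as $r'_h\in[0,1]$) gives
\[V^{\mu,\nu}(r',P)-V^{\mu,\nu}(r',\widetilde P)\le H\sum_{h,s,a,b} V^{\mu,\nu}(1_{h,s,a,b},\widetilde P)\!\!\sum_{s':(h,s,a,b,s')\in\mathcal F}\!\!P_h(s'\mid s,a,b).\]
The crux is to convert the visitation $V^{\mu,\nu}(1_{h,s,a,b},\widetilde P)$ of an \emph{arbitrary} remaining policy pair into a visitation under the concrete exploration policy $\pi_h$ constructed inside Algorithm~\ref{alg:crude}. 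I would chain four facts, all valid on a single high-probability event and therefore holding \emph{uniformly} over the infinite class $\Pi_A\times\Pi_B$: (i) \eqref{equ:close} bounds $V^{\mu,\nu}(1_{h,s,a,b},\widetilde P)\le 3\,V^{\mu,\nu}(1_{h,s,a,b},P^{int})$; (ii) since $\pi_{h,s,a,b}$ is the $P^{int}$-greedy maximizer of the visitation of $(h,s,a,b)$ and $\pi_h$ is the uniform mixture of the $SAB$ such policies, $V^{\mu,\nu}(1_{h,s,a,b},P^{int})\le SAB\cdot V^{\pi_h}(1_{h,s,a,b},P^{int})$; (iii) \eqref{equ:close} again gives $V^{\pi_h}(1_{h,s,a,b},P^{int})\le 4\,V^{\pi_h}(1_{h,s,a,b},\widetilde P)$; and (iv) Lemma~\ref{rem10} gives $V^{\pi_h}(1_{h,s,a,b},\widetilde P)\le V^{\pi_h}(1_{h,s,a,b},P)$. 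Composing these, $V^{\mu,\nu}(1_{h,s,a,b},\widetilde P)\le 12\,SAB\cdot V^{\pi_h}(1_{h,s,a,b},P)$, so each summand is at most $12\,SAB$ times the true visitation $V^{\pi_h}(1_{h,s,a,b},P)\,P_h(s'\mid s,a,b)$ of the five-tuple $(h,s,a,b,s')$ under $(\pi_h,P)$.

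It then remains to bound that true visitation for tuples in $\mathcal F$. Since $\pi_h$ is run for $T_0=T/H$ episodes and $\mathcal F$ collects exactly the tuples whose empirical count obeys $N_h(s,a,b,s')\le C_1H^2\iota$, a multiplicative Chernoff bound and a union bound over all $HS^2AB$ tuples show that, with high probability, every $(h,s,a,b,s')\in\mathcal F$ has per-episode visitation $O(H^2\iota/T_0)=O(H^3\iota/T)$ under $(\pi_h,P)$. Substituting this and counting the at most $HS^2AB$ tuples in $\mathcal F$, together with the outer factor $H$ and the $12\,SAB$ over-coverage factor, gives
\[V^{\mu,\nu}(r',P)-V^{\mu,\nu}(r',\widetilde P)=O\!\Big(H\cdot SAB\cdot\tfrac{H^3\iota}{T}\cdot HS^2AB\Big)=O\!\Big(\tfrac{H^5S^3A^2B^2\iota}{T}\Big),\]
matching the claim. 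I expect the main obstacle to be steps (ii)--(iv): transferring the guarantee from an arbitrary (possibly poorly explored) policy to the fixed exploration policy $\pi_h$ while keeping every constant and the $SAB$ over-coverage factor uniform over the whole policy class. This is precisely where the greedy construction of the $\pi_{h,s,a,b}$'s and the two-sided multiplicative accuracy of $P^{int}$ (Lemmas~\ref{lem6} and~\ref{lem2}) are indispensable, since they let the estimate hold \emph{simultaneously} for all $(\mu,\nu)\in\Pi_A\times\Pi_B$ without any union bound over the infinite policy class.
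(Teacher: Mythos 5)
Your proof is correct and follows exactly the route the paper relies on: the paper does not re-prove this lemma but imports it as Lemma E.12 of \citet{qiao2022sample} (noting the single-agent argument extends directly once the version space is replaced by $\Pi_A\times\Pi_B$), and that underlying argument is precisely your decomposition --- the simulation lemma with rollouts under $\widetilde{P}$, the multiplicative-closeness chain through $P^{int}$ via \eqref{equ:close} and Lemma~\ref{rem10} with the $SAB$ greedy-coverage factor, and a Chernoff lower-tail bound showing every tuple in $\mathcal{F}$ has visitation probability $O(H^3\iota/T)$ under $(\pi_h,P)$. Your bookkeeping also matches, including the failure probability $\frac{2S\delta}{K}$ (one $\frac{S\delta}{K}$ event for Lemma~\ref{lem6}, one for the union bound over the at most $HS^2AB$ tuples) and the final count $H\cdot 12SAB\cdot HS^2AB\cdot O(H^3\iota/T)=O(H^5S^3A^2B^2\iota/T)$.
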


Recall that at the end of Crude Exploration (Algorithm \ref{alg:crude}), we store a policy $\pi=$ uniform mixture of $\{\pi_{h,s,a,b}\}_{(h,s,a,b)}$ for future use. Below we prove some properties of the uniformly explorative policy $\pi$.

\begin{lemma}\label{lem:policy}
    There exists a universal constant $c_2$, with probability at least $1-\frac{S\delta}{K}$, the policy $\pi=$ uniform mixture of $\{\pi_{h,s,a,b}\}_{(h,s,a,b)}$ satisfies that for all $(h,s,a,b,s^\prime)\in[H]\times\mathcal{S}\times\mathcal{A}\times\mathcal{B}\times\mathcal{S}$ such that $(h,s,a,b,s^\prime)\notin \mathcal{F}$,
    $$T\cdot\mathbb{P}_{\pi}[(h,s,a,b,s^\prime)|P]\geq c_2H^2\iota,$$
    where $\mathbb{P}_{\pi}[(h,s,a,b,s^\prime)|P]$ means the probability of reaching $(s,a,b,s^\prime)$ at time step $h$ under policy $\pi$ and original MG $P$.
\end{lemma}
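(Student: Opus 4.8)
The plan is to combine two ingredients: an \emph{exact, deterministic} identity relating the visitation under the final mixture policy $\pi$ to the visitation under the layer-$h$ exploration policy $\pi_h$ actually run inside Algorithm~\ref{alg:crude}, and a \emph{concentration} argument that converts the empirical count $N_h(s,a,b,s^\prime)$ (which is what the algorithm thresholds to build $\mathcal{F}$) into a lower bound on the true expected visitation. First I would unfold the mixture structure. Since $\pi$ is the uniform mixture of the $HSAB$ policies $\{\pi_{h^\prime,s^\prime,a^\prime,b^\prime}\}$ while $\pi_h$ is the uniform mixture of only the $SAB$ policies $\{\pi_{h,s^\prime,a^\prime,b^\prime}\}_{(s^\prime,a^\prime,b^\prime)}$, for every tuple I can write
\[
\mathbb{P}_{\pi}[(h,s,a,b,s^\prime)|P]=\frac{1}{HSAB}\sum_{(h^\prime,s^\prime,a^\prime,b^\prime)}\mathbb{P}_{\pi_{h^\prime,s^\prime,a^\prime,b^\prime}}[(h,s,a,b,s^\prime)|P]\geq \frac{1}{H}\,\mathbb{P}_{\pi_h}[(h,s,a,b,s^\prime)|P],
\]
where the inequality keeps only the $SAB$ terms with $h^\prime=h$ and recognizes their uniform average as exactly $\mathbb{P}_{\pi_h}$. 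This step loses a single factor of $1/H$ and is where the target $H^2$ (rather than $H^3$) scaling will come from.

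Next I would control the count. Conditioned on the history up to the start of layer $h$ (in particular on the fixed policy $\pi_h$, which is determined by the already-constructed $P^{int}$), the $T_0=T/H$ episodes run under $\pi_h$ are i.i.d.; and since layer $h$ is traversed exactly once per episode, $N_h(s,a,b,s^\prime)$ is a sum of $T_0$ independent Bernoulli variables with mean $\rho:=\mathbb{P}_{\pi_h}[(h,s,a,b,s^\prime)|P]$. A multiplicative Chernoff (Bernstein) bound then gives, with probability at least $1-e^{-\iota}$, a one-sided control in the count-to-mean direction,
\[
N_h(s,a,b,s^\prime)\leq 2\,T_0\rho + c\,\iota,\qquad\text{equivalently}\qquad T_0\rho\geq \tfrac{1}{2}\big(N_h(s,a,b,s^\prime)-c\,\iota\big),
\]
for a universal constant $c$. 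For a frequent tuple $(h,s,a,b,s^\prime)\notin\mathcal{F}$ we have $N_h(s,a,b,s^\prime)>C_1H^2\iota$, so taking $C_1$ a large enough universal constant yields $T_0\rho\geq \frac{C_1}{4}H^2\iota$.

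Finally, combining the two displays with $T_0=T/H$ gives
\[
T\cdot \mathbb{P}_{\pi}[(h,s,a,b,s^\prime)|P]\geq \frac{T}{H}\,\rho = T_0\rho \geq \frac{C_1}{4}H^2\iota,
\]
so the claim holds with $c_2=C_1/4$. The high-probability guarantee follows from a union bound over all $HS^2AB$ tuples $(h,s,a,b,s^\prime)$: with $\iota=\log(2HSABK/\delta)$ the per-tuple failure probability is $e^{-\iota}=\delta/(2HSABK)$, and $HS^2AB\cdot e^{-\iota}\leq S\delta/K$, matching the advertised failure probability. I expect the \textbf{main obstacle} to be the concentration step: one must argue that the empirical threshold used to define $\mathcal{F}$ faithfully reflects the true expected visitation even though $\pi_h$ is chosen adaptively from earlier layers' data. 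This is resolved by conditioning on the history so that $\pi_h$ is frozen and the $T_0$ trajectories become genuinely i.i.d., after which the standard multiplicative inequality in the correct direction applies. By contrast, the mixture-decomposition identity is exact and requires no probabilistic argument, so it contributes no additional loss beyond the explicit $1/H$ factor.
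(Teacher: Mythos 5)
Your proof is correct and follows essentially the same route as the paper's: the paper likewise lower-bounds $T\cdot\mathbb{P}_{\pi}$ by $\frac{T}{H}\cdot\mathbb{P}_{\pi_h}$ via the uniform-mixture structure, inverts the threshold $N_h(s,a,b,s^\prime)>C_1H^2\iota$ defining $\mathcal{F}$ through an upper-tail multiplicative Chernoff bound (Lemma~\ref{lem:chernoff}), and finishes with the same union bound yielding failure probability $\frac{S\delta}{K}$. The only differences are cosmetic — you use the additive form $N\leq 2T_0\rho+c\iota$ where the paper uses $\overline{E}+\sqrt{3\overline{E}\iota}\geq N$, and you make explicit the conditioning that freezes the adaptively chosen $\pi_h$, which the paper leaves implicit.
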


\begin{proof}[Proof of Lemma \ref{lem:policy}]
    Recall that we run $\pi_{h}=$ uniform mixture of $\{\pi_{h,s,a,b}\}_{(s,a,b)}$ for $T_{0}=\frac{T}{H}$ episodes in Algorithm \ref{alg:crude}. Denote $E_{h,s,a,b,s^\prime}=T\cdot\mathbb{P}_{\pi}[(h,s,a,b,s^\prime)|P]$ and $\overline{E}_{h,s,a,b,s^\prime}=\frac{T}{H}\cdot\mathbb{P}_{\pi_h}[(h,s,a,b,s^\prime)|P]$. Then it holds that $E_{h,s,a,b,s^\prime}\geq\overline{E}_{h,s,a,b,s^\prime}$, because  $\pi$ is equivalent to a uniform mixture of $\{\pi_h\}_{h\in[H]}$. Therefore, for any $(h,s,a,b,s^\prime)\notin \mathcal{F}$, with probability $1-\frac{\delta}{HSABK}$,
\begin{equation}\label{equ:expect}
\begin{split}
    &E_{h,s,a,b,s^\prime}+\sqrt{3E_{h,s,a,b,s^\prime}\iota}\geq \overline{E}_{h,s,a,b,s^\prime}+\sqrt{3\overline{E}_{h,s,a,b,s^\prime}\iota}\\\geq& \text{visitation count of }(h,s,a,b,s^\prime) \geq C_1 H^2\iota, 
\end{split}
\end{equation}   
 where the second inequality holds with probability $1-\frac{\delta}{HSABK}$ due to Multiplicative Chernoff bound (Lemma \ref{lem:chernoff}). The last inequality results from the definition of infrequent tuples $\mathcal{F}$.

 Finally, based on \eqref{equ:expect}, with probability $1-\frac{\delta}{HSABK}$, $E_{h,s,a,b,s^\prime}\geq c_2H^2\iota$. The conclusion holds because of a union bound over $(h,s,a,b,s^\prime)\notin \mathcal{F}$. 
\end{proof}

\section{Proof of lemmas regarding Fine Exploration  (Algorithm~\ref{alg:fine})}\label{appb}
We first state a conclusion about the batch complexity of Algorithm~\ref{alg:fine}.
\begin{lemma}\label{lem11}
	The batch complexity of Algorithm~\ref{alg:fine} is bounded by 2.
\end{lemma}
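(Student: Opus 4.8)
The plan is to verify the claim directly from the definition of batch complexity, by inspecting which steps of Algorithm~\ref{alg:fine} actually commit the learner to executing a policy in the environment. Recall that an algorithm has batch complexity $M$ if it pre-determines the batch lengths and, at the start of each batch, fixes a general policy that it then follows for the prescribed number of episodes without any intermediate adaptation. Hence I only need to count the number of distinct policy deployments inside Fine Exploration and argue that each such policy is chosen \emph{without} looking at data collected within this same subroutine.

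First I would observe that the explorative policy $\pi$ defined in \eqref{equ:policy} is a function solely of the inputs. It is the minimizer of an objective built entirely from the quantities $V^{\mu',\nu'}(1_{h,s,a,b},P^{int})$ and $V^{\pi}(1_{h,s,a,b},P^{int})$ over the fixed policy sets $\Pi_A$ and $\Pi_B$. Since $P^{int}$, $\Pi_A$, and $\Pi_B$ are all passed in as inputs, $\pi$ can be computed before the algorithm draws a single trajectory; solving the optimization in \eqref{equ:policy} requires no environment interaction. Likewise, the auxiliary policy $\pi'$ and its episode count $T'$ are supplied as inputs. Therefore both policies that will ever be run are fixed up front.

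Next I would count the deployments in the data-collection step: the subroutine runs $\pi$ for $T$ episodes and runs $\pi'$ for $T'$ episodes, which is exactly two pre-determined policy deployments of lengths $T$ and $T'$. The remaining work, namely the loop over $h$ that invokes EstimateTransition (Algorithm~\ref{algo_transition_kernel}), is pure offline computation on the already-collected data set $\mathcal{D}$; it deploys no new policy and thus contributes nothing to the batch count. Summing, the batch complexity of Algorithm~\ref{alg:fine} is $2$.

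The only subtlety, and hence the point I would be careful to spell out, is confirming that constructing $\pi$ from \eqref{equ:policy} is genuinely non-adaptive, i.e., that it depends only on the input $P^{int}$ and not on any data gathered during Fine Exploration itself. Once this is granted the count is immediate, and no real obstacle remains.
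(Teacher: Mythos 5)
Your proposal is correct and matches the paper's argument, which simply notes that Algorithm~\ref{alg:fine} runs exactly two general policies, $\pi$ and $\pi^\prime$, each for a pre-determined number of episodes. Your additional verification that $\pi$ from \eqref{equ:policy} depends only on the input $P^{int}$ (not on data gathered within the subroutine) and that the EstimateTransition loop is purely offline is a careful spelling-out of what the paper leaves implicit, but it is the same proof.
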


\begin{proof}[Proof of Lemma~\ref{lem11}]
   Algorithm~\ref{alg:fine} will just run two general policies: $\pi$ and $\pi^\prime$ for several episodes. 
\end{proof}

Below we analyze the properties of the two policies $\pi$ and $\pi^\prime$. 
Recall that $\pi$ is constructed as below.

\begin{equation}
\pi=\mathrm{argmin}_{\pi\in \Delta(\Pi_A\times\Pi_B)}\sup_{(\mu^\prime,\nu^\prime)\in\Pi_A\times\Pi_B}\sum_{h=1}^H\sum_{s,a,b}\frac{V^{\mu^\prime,\nu^\prime}(1_{h,s,a,b},P^{int})}{V^{\pi}(1_{h,s,a,b},P^{int})}.
\end{equation}

We begin with the uniform coverage property of the policy $\pi$ in Algorithm \ref{alg:fine}. 

\begin{lemma}\label{lem:uniform}
The policy $\pi$ in Algorithm \ref{alg:fine} satisfies that
$$\sup_{(\mu^\prime,\nu^\prime)\in\Pi_A\times\Pi_B}\sum_{h=1}^H\sum_{s,a,b}\frac{V^{\mu^\prime,\nu^\prime}(1_{h,s,a,b},P^{int})}{V^{\pi}(1_{h,s,a,b},P^{int})}\leq HSAB.$$
\end{lemma}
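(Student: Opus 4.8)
The plan is to recognize the min--max program defining $\pi$ as the G-optimality characterization dual to a D-optimal experimental design, and to certify the constant $HSAB$ via the first-order (Kiefer--Wolfowitz-type) optimality condition. Since $\pi$ is defined as the \emph{minimizer} over $\Delta(\Pi_A\times\Pi_B)$ of the quantity $\sup_{(\mu',\nu')}\sum_{h,s,a,b} V^{\mu',\nu'}(1_{h,s,a,b},P^{int})/V^{\pi}(1_{h,s,a,b},P^{int})$, it suffices to exhibit a \emph{single} mixture policy whose objective value is at most $HSAB$; the algorithm's $\pi$ then does at least as well. So the whole task reduces to producing one good design.

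First I would introduce the D-optimal design objective $\Phi(\rho):=\sum_{h,s,a,b}\log V^{\rho}(1_{h,s,a,b},P^{int})$ for $\rho\in\Delta(\Pi_A\times\Pi_B)$, where the sum ranges only over the \emph{reachable} tuples, i.e.\ those with $\sup_{(\mu,\nu)}V^{\mu,\nu}(1_{h,s,a,b},P^{int})>0$; for an unreachable tuple the numerator $V^{\mu',\nu'}(1_{h,s,a,b},P^{int})$ vanishes for every $(\mu',\nu')$, so adopting the convention $0/0=0$ lets us discard it from the sum in the lemma without changing its value. The number $N$ of reachable tuples is at most $HSAB$. Because a mixture of policies induces the corresponding mixture of occupancy measures, $V^{\rho}(1_{h,s,a,b},P^{int})$ is \emph{linear} in $\rho$, and composing with the concave $\log$ makes $\Phi$ concave. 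A maximizer $\rho^\star$ exists: the set of achievable visitation vectors $(V^{\rho}(1_{h,s,a,b},P^{int}))_{h,s,a,b}$ is a compact convex polytope in $\mathbb{R}^{HSAB}$, and $\Phi\to-\infty$ as any reachable coordinate tends to $0$, so the supremum is attained at a point where all reachable coordinates $V^{\rho^\star}(1_{h,s,a,b},P^{int})$ are strictly positive (hence $\Phi$ is differentiable there).

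Next I would write the first-order optimality condition at $\rho^\star$. Fix any $(\mu',\nu')$ and consider the path $\rho_t=(1-t)\rho^\star+t\,\delta_{(\mu',\nu')}$ toward the point mass at that policy pair. Using linearity of the occupancy measure, $\tfrac{d}{dt}\Phi(\rho_t)\big|_{t=0}=\sum_{h,s,a,b}\frac{V^{\mu',\nu'}(1_{h,s,a,b},P^{int})}{V^{\rho^\star}(1_{h,s,a,b},P^{int})}-N$. Since $\rho^\star$ maximizes the concave $\Phi$ over the simplex, this directional derivative is $\le 0$, whence $\sum_{h,s,a,b}\frac{V^{\mu',\nu'}(1_{h,s,a,b},P^{int})}{V^{\rho^\star}(1_{h,s,a,b},P^{int})}\le N\le HSAB$ for \emph{every} $(\mu',\nu')$. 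Taking the supremum over $(\mu',\nu')$ shows $\rho^\star$ attains objective value at most $HSAB$ in the min--max program, and since the algorithm's $\pi$ is the argmin of the identical objective, its value is no larger, giving the claim.

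The main obstacle I anticipate is not the optimality computation but the careful bookkeeping around reachability and the boundary: one must argue that the maximizer $\rho^\star$ keeps every reachable tuple bounded away from $0$ (so differentiability holds and the $0/0$ convention is harmless), and confirm that the index set actually contributing to the sum has cardinality $N\le HSAB$. A secondary technical point is to establish existence and compactness at the level of occupancy measures rather than the (infinite) space of policy mixtures, which I would handle by passing to the polytope of achievable state-action occupancies under $P^{int}$; this also makes the concavity of $\Phi$ and the attainment of the maximum rigorous.
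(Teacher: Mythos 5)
Your proof is correct. The paper itself disposes of this lemma in one line: it plugs $\mathcal{X}=\{\{V^{\mu,\nu}(1_{h,\cdot,\cdot,\cdot},P^{int})\}_{h=1}^H \mid (\mu,\nu)\in\Pi_A\times\Pi_B\}$, $d=SAB$, $m=H$ into Lemma~\ref{lem:design} (Lemma 1 of \citet{zhang2022near}), which asserts the existence of a mixture whose worst-case ratio sum is exactly $md=HSAB$; since $\pi$ is the argmin of the identical objective, the bound follows. What you have done is reprove that black-box lemma from scratch: your concave maximization of $\Phi(\rho)=\sum\log V^{\rho}(1_{h,s,a,b},P^{int})$ together with the directional-derivative (Kiefer--Wolfowitz) condition $\sum_i x_i/y_i - N \le 0$ is precisely the standard proof of such design lemmas, so the mathematical content coincides with the citation's internals, and your reduction ``exhibit one good design, then argmin does at least as well'' is exactly the paper's implicit logic. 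Your version buys two things the one-line citation glosses over: an explicit treatment of unreachable tuples via the $0/0=0$ convention (note that under the absorbing MG the layer-$h$ visitation probabilities over $\mathcal{S}\times\mathcal{A}\times\mathcal{B}$ need not sum to one because mass leaks to $s^\dagger$, so strictly speaking the hypothesis $\mathcal{X}\subset(\Delta^d)^m$ of Lemma~\ref{lem:design} requires a dummy coordinate or a sub-distribution reading, a wrinkle your self-contained argument sidesteps entirely, and it even yields the slightly sharper constant $N\le HSAB$ counting only reachable tuples), and a rigorous existence argument by passing to the compact convex set of achievable visitation vectors. One small point worth stating explicitly if you write this up: attainment of the maximum of $\Phi$ uses upper semicontinuity of the concave $\Phi$ (valued in $[-\infty,\infty)$) on that compact convex set, plus the observation that a uniform mixture over per-tuple maximizing pairs makes $\Phi$ finite, so the optimum has all reachable coordinates strictly positive --- which is the interiority you need for differentiability along the paths $\rho_t$.
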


\begin{proof}[Proof of Lemma \ref{lem:uniform}]
    The proof is by plugging $\mathcal{X}=\{\{V^{\mu,\nu}(1_{h,\cdot,\cdot,\cdot},P^{int})\}_{h=1}^H \mid (\mu,\nu)\in\Pi_A\times\Pi_B\}$, $d=SAB$ and $m=H$ into Lemma \ref{lem:design}.
\end{proof}

Together with the relationship between $P^{int}$ and $\widetilde{P}$ (\eqref{equ:close}), we can replace the $P^{int}$ with $\widetilde{P}$.

\begin{lemma}\label{lem:uniform2}
    Conditioned on the high-probability case in Lemma \ref{lem6}, the policy $\pi$ in Algorithm \ref{alg:fine} satisfies that
$$\sup_{(\mu^\prime,\nu^\prime)\in\Pi_A\times\Pi_B}\sum_{h=1}^H\sum_{s,a,b}\frac{V^{\mu^\prime,\nu^\prime}(1_{h,s,a,b},\widetilde{P})}{V^{\pi}(1_{h,s,a,b},\widetilde{P})}\leq 12HSAB.$$
\end{lemma}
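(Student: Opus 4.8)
The plan is to transfer the uniform coverage guarantee from Lemma~\ref{lem:uniform}, which is stated in terms of $P^{int}$, over to the absorbing MG $\widetilde{P}$ by using the two-sided multiplicative comparison between visitation probabilities established in \eqref{equ:close}. Since we are conditioning on the high-probability event of Lemma~\ref{lem6}, that comparison holds simultaneously for every policy pair $(\mu,\nu)\in\Pi_A\times\Pi_B$ and every tuple $(h,s,a,b)$, which is exactly what is needed to bound each summand uniformly.

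The key steps, in order, are as follows. First I would fix an arbitrary pair $(\mu',\nu')\in\Pi_A\times\Pi_B$ and look at a single summand $V^{\mu',\nu'}(1_{h,s,a,b},\widetilde{P})/V^{\pi}(1_{h,s,a,b},\widetilde{P})$. For the numerator, \eqref{equ:close} applied to $(\mu',\nu')$ gives the upper bound $V^{\mu',\nu'}(1_{h,s,a,b},\widetilde{P})\leq 3\,V^{\mu',\nu'}(1_{h,s,a,b},P^{int})$. For the denominator, note that $\pi$ is a mixture over $\Pi_A\times\Pi_B$, and \eqref{equ:close} holds for every constituent policy pair, so by linearity of the visitation probability in the mixing distribution the same two-sided bound lifts to $\pi$; in particular $V^{\pi}(1_{h,s,a,b},\widetilde{P})\geq \tfrac14 V^{\pi}(1_{h,s,a,b},P^{int})$. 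Combining these two directions, each summand satisfies
\begin{equation*}
\frac{V^{\mu',\nu'}(1_{h,s,a,b},\widetilde{P})}{V^{\pi}(1_{h,s,a,b},\widetilde{P})}\leq \frac{3\,V^{\mu',\nu'}(1_{h,s,a,b},P^{int})}{\tfrac14\,V^{\pi}(1_{h,s,a,b},P^{int})}=12\cdot\frac{V^{\mu',\nu'}(1_{h,s,a,b},P^{int})}{V^{\pi}(1_{h,s,a,b},P^{int})}.
\end{equation*}
Summing over all $h\in[H]$ and all $(s,a,b)$, and then taking the supremum over $(\mu',\nu')$, the factor of $12$ pulls out and Lemma~\ref{lem:uniform} caps the remaining $P^{int}$-sum at $HSAB$, yielding the claimed bound $12HSAB$.

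The main obstacle, and the step I would be most careful about, is the denominator: I need the \emph{lower} bound $V^{\pi}(1_{h,s,a,b},\widetilde{P})\geq \tfrac14 V^{\pi}(1_{h,s,a,b},P^{int})$ to hold for the mixture policy $\pi$ itself, not just for deterministic members of $\Pi_A\times\Pi_B$. Since $\pi\in\Delta(\Pi_A\times\Pi_B)$ and the visitation probability $V^{\pi}(1_{h,s,a,b},\cdot)$ is an affine (in fact linear) function of the mixing weights for any fixed transition kernel, applying \eqref{equ:close} pairwise to each component and re-averaging preserves the inequality; I would state this linearity explicitly to justify that the per-policy comparison survives the mixture. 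A secondary point worth a sentence is handling any summand where $V^{\pi}(1_{h,s,a,b},P^{int})=0$: in that degenerate case $V^{\mu',\nu'}(1_{h,s,a,b},P^{int})$ must also vanish (otherwise the $P^{int}$-ratio in Lemma~\ref{lem:uniform} would be infinite, contradicting its finiteness), so by \eqref{equ:close} both $\widetilde{P}$-visitation probabilities vanish as well and the summand is interpreted as $0$, matching the convention already implicit in Lemma~\ref{lem:uniform}.
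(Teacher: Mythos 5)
Your proposal is correct and follows essentially the same route as the paper: apply the two-sided multiplicative comparison from \eqref{equ:close} to bound the numerator by $3V^{\mu',\nu'}(1_{h,s,a,b},P^{int})$ and the denominator below by $\tfrac14 V^{\pi}(1_{h,s,a,b},P^{int})$, then invoke Lemma~\ref{lem:uniform} to cap the resulting $P^{int}$-sum at $HSAB$. Your explicit justification that the per-policy bound lifts to the mixture $\pi$ by linearity of visitation probabilities, and your handling of zero-visitation summands, are points the paper leaves implicit, but they do not change the argument.
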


\begin{proof}[Proof of Lemma \ref{lem:uniform2}]
    For some pair of $(\mu^\prime,\nu^\prime)$, it holds that
\begin{equation}
    \begin{split}
    L.H.S &= \sum_{h=1}^H\sum_{s,a,b}\frac{V^{\mu^\prime,\nu^\prime}(1_{h,s,a,b},\widetilde{P})}{V^{\pi}(1_{h,s,a,b},\widetilde{P})} \\
    &\leq \sum_{h=1}^H\sum_{s,a,b}\frac{3V^{\mu^\prime,\nu^\prime}(1_{h,s,a,b},P^{int})}{\frac{1}{4}V^{\pi}(1_{h,s,a,b},P^{int})} \\&\leq 12HSAB,
    \end{split}
\end{equation}
where the first inequality is because of Lemma \ref{lem2}, the last inequality holds due to Lemma \ref{lem:uniform}.
\end{proof}

Now we state the properties of the auxiliary policy $\pi^\prime$. Recall that according to the main algorithm (Algorithm \ref{alg:main}), when the input infrequent set and intermediate transition kernel are $\mathcal{F}^k$ and $P^{int,k}$ respectively ($k=1,2$), we have the auxiliary policy $\pi^\prime=\pi_k$ and the number of episodes $T^\prime=NT^{(k)}$ ($N$ is the constant in Algorithm \ref{alg:main}). Due to the conclusion of Lemma \ref{lem:policy}, we have the following lemma for $\pi^\prime$ and $T^\prime$.

\begin{lemma}\label{lem:policy2}
    There exists a universal constant $N>0$, such that with probability at least $1-\frac{2S\delta}{K}$, for all possible $\mathcal{F}$, $\pi^\prime$, $T^\prime$ that appear in Algorithm \ref{alg:fine}, it holds that for all $(h,s,a,b,s^\prime)\notin \mathcal{F}$,
    $$T^\prime\cdot\mathbb{P}_{\pi^\prime}[(h,s,a,b,s^\prime)|P]\geq 3C_1 H^2\iota,$$
    where $C_1$ is the universal constant in Algorithm \ref{alg:crude}, $\mathbb{P}_{\pi}[(h,s,a,b,s^\prime)|P]$ means the probability of reaching $(s,a,b,s^\prime)$ at time step $h$ under policy $\pi$ and original MG $P$.
\end{lemma}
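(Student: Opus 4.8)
The plan is to derive Lemma~\ref{lem:policy2} as a direct corollary of Lemma~\ref{lem:policy}, since the auxiliary policy $\pi^\prime$ is precisely one of the uniformly explorative policies $\pi=\pi_k$ produced by Crude Exploration, and $T^\prime=NT^{(k)}$ is a constant multiple of the exploration budget $T=T^{(k)}$ on which Lemma~\ref{lem:policy} was proved. First I would recall the guarantee of Lemma~\ref{lem:policy}: with probability $1-\frac{S\delta}{K}$, for every $(h,s,a,b,s^\prime)\notin\mathcal{F}$ the explorative policy returned by Crude Exploration satisfies
$$T^{(k)}\cdot\mathbb{P}_{\pi_k}[(h,s,a,b,s^\prime)|P]\geq c_2 H^2\iota.$$
The quantity $\mathbb{P}_{\pi^\prime}[(h,s,a,b,s^\prime)|P]$ is a fixed visitation probability that does not depend on the number of episodes, so multiplying the episode budget by $N$ simply scales the left-hand side: $T^\prime\cdot\mathbb{P}_{\pi^\prime}[(h,s,a,b,s^\prime)|P]=N\,T^{(k)}\cdot\mathbb{P}_{\pi_k}[(h,s,a,b,s^\prime)|P]\geq N c_2 H^2\iota$.

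The remaining work is bookkeeping: choose the universal constant $N$ large enough that $N c_2\geq 3C_1$, so that the bound upgrades from $c_2 H^2\iota$ to the desired $3C_1 H^2\iota$. Concretely I would set $N:=\lceil 3C_1/c_2\rceil$, where $C_1$ is the constant in Algorithm~\ref{alg:crude} and $c_2$ is the constant furnished by Lemma~\ref{lem:policy}; this is consistent with $N$ being the universal constant already threaded through Algorithm~\ref{alg:main} and Algorithm~\ref{alg:fine}.

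The only genuine subtlety — and the step I expect to require the most care — is the phrase ``for all possible $\mathcal{F}$, $\pi^\prime$, $T^\prime$ that appear in Algorithm~\ref{alg:fine}''. In the main algorithm there are exactly two distinct Crude Exploration calls (stages $k=1,2$), each producing its own pair $(\mathcal{F}^k,\pi_k)$ with its own budget $T^{(k)}$; the stages $k\geq 3$ reuse $(\mathcal{F}^2,\pi_2,NT^{(2)})$, so they introduce no new explorative policy. Thus the statement must hold simultaneously for the two independent invocations. I would therefore apply Lemma~\ref{lem:policy} once to each of the two calls and take a union bound over these two high-probability events. Since each instance fails with probability at most $\frac{S\delta}{K}$, the union bound yields a total failure probability of at most $\frac{2S\delta}{K}$, exactly matching the constant in the statement. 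On the surviving event, the scaled inequality above holds for every $(h,s,a,b,s^\prime)\notin\mathcal{F}$ and for both admissible choices of $(\mathcal{F},\pi^\prime,T^\prime)$, which completes the proof.
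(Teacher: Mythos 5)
Your proposal is correct and matches the paper's own proof essentially step for step: both apply Lemma~\ref{lem:policy} to each of the two Crude Exploration calls, rescale by choosing the universal constant $N\geq 3C_1/c_2$, observe that stages $k\geq 3$ reuse $(\mathcal{F}^2,\pi_2,NT^{(2)})$ so no new events arise, and take a union bound over $k=1,2$ to obtain the $\frac{2S\delta}{K}$ failure probability. No gaps.
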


\begin{proof}[Proof of Lemma \ref{lem:policy2}]
According to Lemma \ref{lem:policy}, for a fixed pair of $\mathcal{F}^k$, $\pi_k$ and $T^{(k)}$, with probability at least $1-\frac{S\delta}{K}$, it holds that for all $(h,s,a,b,s^\prime)\notin \mathcal{F}^k$,
    $$T^{(k)}\cdot\mathbb{P}_{\pi_k}[(h,s,a,b,s^\prime)|P]\geq c_2H^2\iota.$$
Therefore, choosing $\pi^\prime=\pi_k$ and $T^\prime=NT^{(k)}$ with $N>\frac{3C_1}{c_2}$ implies that for a fixed pair of $\mathcal{F}^k$, $\pi_k$ and $T^{(k)}$, with probability at least $1-\frac{S\delta}{K}$, it holds that for all $(h,s,a,b,s^\prime)\notin \mathcal{F}^k$,
    $$NT^{(k)}\cdot\mathbb{P}_{\pi_k}[(h,s,a,b,s^\prime)|P]\geq 3C_1 H^2\iota.$$
    Finally, with a union bound over $k=1,2$, the proof is complete.
\end{proof}

The goal of running the auxiliary policy $\pi^\prime$ for $T^\prime$ episodes is to visit each state action pair not in the infrequent set for sufficient times. The property is summarized as below. 

\begin{lemma}\label{lem:count}
    Under the high-probability case in Lemma \ref{lem:policy2}, if we run policy $\pi^\prime$ for $T^\prime$ episodes, with probability $1-\frac{S\delta}{K}$, it holds that for all $(h,s,a,b,s^\prime)\notin \mathcal{F}$,
    $$N_h(s,a,b,s^\prime)\geq C_1 H^2\iota,$$
    where $N_h(s,a,b,s^\prime)$ is the visitation count of $(h,s,a,b,s^\prime)$. 
\end{lemma}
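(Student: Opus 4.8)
\textbf{Proof plan for Lemma~\ref{lem:count}.} The goal is to convert the lower bound on the \emph{expected} visitation count (guaranteed by Lemma~\ref{lem:policy2}) into a lower bound on the \emph{realized} count $N_h(s,a,b,s^\prime)$ that holds with high probability. The plan is to apply a multiplicative Chernoff bound to each fixed tuple $(h,s,a,b,s^\prime)\notin\mathcal{F}$ and then take a union bound over all such tuples. Conditioned on the high-probability event of Lemma~\ref{lem:policy2}, we have the expectation lower bound $T^\prime\cdot\mathbb{P}_{\pi^\prime}[(h,s,a,b,s^\prime)|P]\geq 3C_1H^2\iota$, which gives us a comfortable factor-of-three slack between the expected count and the target threshold $C_1H^2\iota$.

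First I would observe that running $\pi^\prime$ for $T^\prime$ episodes makes $N_h(s,a,b,s^\prime)$ a sum of $T^\prime$ independent indicator random variables, one per episode, each with success probability $\mathbb{P}_{\pi^\prime}[(h,s,a,b,s^\prime)|P]$; hence $\mathbb{E}[N_h(s,a,b,s^\prime)]=T^\prime\cdot\mathbb{P}_{\pi^\prime}[(h,s,a,b,s^\prime)|P]=:E_{h,s,a,b,s^\prime}\geq 3C_1H^2\iota$. Next I would invoke the multiplicative Chernoff bound (Lemma~\ref{lem:chernoff}, already used in the proof of Lemma~\ref{lem:policy}) in its lower-tail form: for any $\gamma\in(0,1)$, $\mathbb{P}[N_h < (1-\gamma)E_{h,s,a,b,s^\prime}]\leq \exp(-\gamma^2 E_{h,s,a,b,s^\prime}/2)$. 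Choosing $\gamma=\tfrac{2}{3}$ so that $(1-\gamma)E_{h,s,a,b,s^\prime}=\tfrac{1}{3}E_{h,s,a,b,s^\prime}\geq C_1H^2\iota$, the failure probability is at most $\exp(-\tfrac{2}{9}\cdot 3C_1H^2\iota/2)=\exp(-\tfrac{C_1}{3}H^2\iota)$. Since $\iota=\log(2HSABK/\delta)$, this is polynomially small in $HSABK/\delta$ provided $C_1$ is a large enough universal constant, so it can be driven below $\tfrac{\delta}{HSABK}$.

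With the per-tuple failure probability bounded by $\tfrac{\delta}{HSABK}$, I would finish by taking a union bound over all tuples $(h,s,a,b,s^\prime)\notin\mathcal{F}$, of which there are at most $HS^2AB$ (in fact fewer, but this crude count suffices). This yields a total failure probability of at most $HS^2AB\cdot\tfrac{\delta}{HSABK}=\tfrac{S\delta}{K}$, matching the claimed bound. The argument is carried out conditionally on the event of Lemma~\ref{lem:policy2}, which already holds with probability $1-\tfrac{2S\delta}{K}$; the stated conclusion is the conditional statement, so no further probability bookkeeping is needed here.

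The step I expect to require the most care is calibrating the universal constant $C_1$ (and confirming its consistency with its use in Algorithm~\ref{alg:crude} and in Lemma~\ref{lem:policy2}) so that the Chernoff exponent $\tfrac{C_1}{3}H^2\iota$ dominates the union-bound factor $\log(HS^2AB)$; since $\iota$ already contains a $\log(HSABK/\delta)$ term, this is a routine but important bookkeeping check rather than a genuine mathematical obstacle. The only genuine subtlety is ensuring the independence structure across episodes is clean — the $T^\prime$ episodes of $\pi^\prime$ are run as a single batch with a fixed (general) policy, so the per-episode indicators are indeed i.i.d., and the Chernoff bound applies directly without any martingale machinery.
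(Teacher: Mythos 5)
Your proposal is correct and follows essentially the same route as the paper: condition on the event of Lemma \ref{lem:policy2}, apply a Chernoff-type lower-tail bound to each fixed tuple $(h,s,a,b,s^\prime)\notin\mathcal{F}$, and union bound over the at most $HS^2AB$ such tuples to obtain failure probability $\frac{S\delta}{K}$. The only immaterial differences are that the paper invokes the anytime martingale bound of Lemma \ref{lem8}, yielding $N_h(s,a,b,s^\prime)\geq \frac{1}{2}T^\prime\,\mathbb{P}_{\pi^\prime}[(h,s,a,b,s^\prime)|P]-\iota\geq C_1H^2\iota$, whereas you use the i.i.d.\ multiplicative Chernoff lower tail (equally valid here, since the $T^\prime$ episodes in the batch are i.i.d.\ given the previously constructed $\pi^\prime$, as you correctly note), and your exponent contains a harmless arithmetic slip: with $\gamma=\tfrac{2}{3}$ one has $\gamma^2=\tfrac{4}{9}$, so the per-tuple failure probability is $\exp\left(-\tfrac{2}{3}C_1H^2\iota\right)$, even smaller than you stated.
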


\begin{proof}[Proof of Lemma \ref{lem:count}]
    Under the high-probability case in Lemma \ref{lem:policy2}, for a fixed $(h,s,a,b,s^\prime)\notin \mathcal{F}$, 
    $$T^\prime\cdot\mathbb{P}_{\pi^\prime}[(h,s,a,b,s^\prime)|P]\geq 3C_1 H^2\iota.$$
    Then according to Lemma \ref{lem8}, with probability $1-\frac{\delta}{HSABK}$, it holds that
    $$N_h(s,a,b,s^\prime)\geq \frac{1}{2}T^\prime\cdot\mathbb{P}_{\pi^\prime}[(h,s,a,b,s^\prime)|P]-\iota \geq C_1 H^2\iota.$$
    Finally, the proof is complete due to a union bound over $(h,s,a,b,s^\prime)\notin \mathcal{F}$.
\end{proof}

Therefore, we could guarantee that with high probability, the refined transition kernel estimate $\widehat{P}$ is $\frac{1}{H}$-multiplicatively accurate to the absorbing MG $\widetilde{P}$, which is summarized in the lemma below.

\begin{lemma}\label{lem:close2}
    Under the high probability case in Lemma \ref{lem:count}, with probability $1-\frac{S\delta}{K}$, the output $\widehat{P}$ of Algorithm \ref{alg:fine} is $\frac{1}{H}$-multiplicatively accurate to $\widetilde{P}$.
\end{lemma}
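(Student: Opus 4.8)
The plan is to reduce the claim to a per-tuple multiplicative concentration bound, mirroring the argument behind Lemma~\ref{lem6} but now driven by the visitation guarantee of the auxiliary policy $\pi^\prime$ (Lemma~\ref{lem:count}) rather than the crude layer-by-layer schedule.

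First I would dispose of the trivial case. By the construction in Algorithm~\ref{algo_transition_kernel} and Definition~\ref{def2}, for every $(h,s,a,b,s^\prime)\in\mathcal{F}$ we have $\widehat{P}_h(s^\prime|s,a,b)=\widetilde{P}_h(s^\prime|s,a,b)=0$, so the inequality in Definition~\ref{def3} holds trivially; moreover Definition~\ref{def3} imposes no constraint on the $s^\prime=s^\dagger$ column, into which both kernels route their removed mass. Hence only tuples with $(h,s,a,b,s^\prime)\notin\mathcal{F}$ need attention, and for such a tuple $\widetilde{P}_h(s^\prime|s,a,b)=P_h(s^\prime|s,a,b)=:p$ while $\widehat{P}_h(s^\prime|s,a,b)=N_h(s,a,b,s^\prime)/N_h(s,a,b)$ is the empirical frequency in $\mathcal{D}$.

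Next I would fix such a tuple and set up the concentration. A key observation is that, by the Markov property, every visit to $(h,s,a,b)$ --- whether generated by $\pi$ or by $\pi^\prime$ --- produces a next state drawn conditionally independently from the true $P_h(\cdot|s,a,b)$, so the $N_h(s,a,b)$ observed transitions are i.i.d.\ draws with success probability $p$ for landing in $s^\prime$. Conditioning on the high-probability event of Lemma~\ref{lem:count}, the observed count satisfies $N_h(s,a,b,s^\prime)\ge C_1H^2\iota$, whence the denominator $N_h(s,a,b)\ge N_h(s,a,b,s^\prime)\ge C_1H^2\iota$ is large as well. Applying a two-sided multiplicative Chernoff bound (Lemma~\ref{lem:chernoff}) then gives $(1-\tfrac{1}{H})\widehat{P}_h(s^\prime|s,a,b)\le \widetilde{P}_h(s^\prime|s,a,b)\le (1+\tfrac{1}{H})\widehat{P}_h(s^\prime|s,a,b)$, exactly the form demanded by Definition~\ref{def3}; the $\tfrac1H$ relative accuracy is affordable precisely because the effective sample size $N_h(s,a,b)\cdot p\approx N_h(s,a,b,s^\prime)$ is of order $H^2\iota$, so the relative deviation is of order $\sqrt{\iota/(C_1H^2\iota)}=1/(\sqrt{C_1}\,H)\le 1/H$ once $C_1$ is a large enough constant. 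A union bound over the at most $HS^2AB$ tuples, with per-tuple failure probability $\delta/(HSABK)$ secured by the choice $\iota=\log(2HSABK/\delta)$, yields the overall probability $1-\tfrac{S\delta}{K}$.

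The main obstacle is the familiar self-referential gap in turning a lower bound on the \emph{observed} count $N_h(s,a,b,s^\prime)$ into multiplicative control of the estimate, since the Chernoff deviation is naturally phrased through the \emph{expected} count $N_h(s,a,b)\,p$ while Lemma~\ref{lem:count} only certifies the realized count. I would close this gap exactly as in the proof of Lemma~\ref{lem6} (Lemma~E.3 of Qiao \& Wang): run the two-sided bound so that a large observed count already forces the expected count to be large, then invert to control $|\widehat{P}-\widetilde{P}|/\widetilde{P}$. The constants $C_1$ (in the definition of $\mathcal{F}$) and $N$ (in $T^\prime=NT^{(k)}$, governing $\pi^\prime$ via Lemma~\ref{lem:policy2}) are then tuned to absorb the factor of $3$ in the Chernoff exponent and deliver relative error at most $1/H$.
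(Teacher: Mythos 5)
Your proposal is correct and takes essentially the same route as the paper: the paper proves Lemma~\ref{lem:close2} by declaring it identical to the proof of Lemma~\ref{lem6} (Lemma~E.3 of \citet{qiao2022sample}), i.e., exactly the per-tuple multiplicative Chernoff argument you describe, with the visitation guarantee of Lemma~\ref{lem:count} supplying the count lower bound $C_1H^2\iota$ in place of the crude-exploration schedule. Your treatment of the tuples in $\mathcal{F}$, the observed-versus-expected count inversion, and the union bound over $(h,s,a,b,s^\prime)$ yielding failure probability $\frac{S\delta}{K}$ all match that argument.
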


\begin{proof}[Proof of Lemma \ref{lem:close2}]
    The proof is identical to the proof of Lemma \ref{lem6} (and Lemma E.3 in \citet{qiao2022sample}).
\end{proof}

It is known that the uncertainty of estimating $\widetilde{P}_h(\cdot|s,a,b)$ is proportional to $\sqrt{\frac{1}{N_h(s,a,b)}}$, where $N_h(s,a,b)$ is the visitation count of $(h,s,a,b)$ in the data set. Therefore, we prove the following lemma regarding $N_h(s,a,b)$'s.

\begin{lemma}\label{lem:keycount}
    Under the high probability case in Lemma \ref{lem:policy2}, if we run $\pi$ for $T$ episodes and $\pi^\prime$ for $T^\prime$ episodes, with probability $1-\frac{\delta}{K}$, for all $(h,s,a,b)\in[H]\times\mathcal{S}\times\mathcal{A}\times\mathcal{B}$, at least one of the following two statements hold: \\
    (1) For all $s^\prime\in\mathcal{S}$, $(h,s,a,b,s^\prime)\in\mathcal{F}$, i.e. $\widetilde{P}_h(s^\dagger|s,a,b)=1$ is fixed and known. \\
    (2) $N_h(s,a,b)\geq \frac{1}{2}TV^{\pi}(1_{h,s,a,b},\widetilde{P})$.
\end{lemma}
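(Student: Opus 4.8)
The statement asks us to show that for every tuple $(h,s,a,b)$, either all successor transitions land in the infrequent set $\mathcal{F}$ (so the absorbing MG behavior is deterministic and known), or the visitation count $N_h(s,a,b)$ is at least half its expected value $T V^{\pi}(1_{h,s,a,b},\widetilde{P})$. The plan is to fix an arbitrary $(h,s,a,b)$, split into the two declared cases, and in the nontrivial case apply a one-sided concentration bound to $N_h(s,a,b)$, which is a sum over episodes of indicators of reaching $(h,s,a,b)$. The key conceptual point is that the relevant expected count should be measured under the true MG $P$ and the policies actually run ($\pi$ for $T$ episodes and $\pi'$ for $T'$ episodes), yet the statement targets $T V^{\pi}(1_{h,s,a,b},\widetilde{P})$, computed under the \emph{absorbing} MG $\widetilde{P}$ and counting only the contribution of $\pi$. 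Reconciling these is where the real work lies.

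First I would handle the trivial case: if every $(h,s,a,b,s')\in\mathcal{F}$ for all $s'\in\mathcal{S}$, then by Definition~\ref{def2} we have $\widetilde{P}_h(s^\dagger|s,a,b)=1$, which is statement (1), and there is nothing more to prove. So assume there exists some $s'\notin\mathcal{F}$. Next I would relate the true-MG expected count to the absorbing-MG quantity. By Lemma~\ref{rem10} the visitation probability of $(h,s,a,b)$ under the true MG $P$ dominates that under $\widetilde{P}$, i.e. $V^{\pi}(1_{h,s,a,b},P)\geq V^{\pi}(1_{h,s,a,b},\widetilde{P})$; hence the expected number of visits to $(h,s,a,b)$ induced by running $\pi$ for $T$ episodes under the real environment is at least $T V^{\pi}(1_{h,s,a,b},\widetilde{P})$. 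Since the observed count $N_h(s,a,b)$ aggregates the episodes of both $\pi$ and $\pi'$, dropping the contribution of $\pi'$ only decreases the count, so it suffices to lower bound the count coming from $\pi$'s $T$ episodes against its own (true-MG) expectation.

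The core step is then a concentration argument. Writing $N_h(s,a,b)\ge \sum_{t=1}^{T}\mathds{1}[\text{episode } t \text{ (run under } \pi\text{) reaches }(h,s,a,b)]$, this is a sum of independent Bernoulli trials (across the $T$ i.i.d.\ episodes of the general policy $\pi$) with total mean $\mu:=T V^{\pi}(1_{h,s,a,b},P)\ge T V^{\pi}(1_{h,s,a,b},\widetilde{P})$. I would invoke the multiplicative Chernoff bound (the same Lemma~\ref{lem:chernoff} used in the proof of Lemma~\ref{lem:policy}) to get, with probability at least $1-\tfrac{\delta}{HSABK}$, that $N_h(s,a,b)\ge \tfrac12\mu \ge \tfrac12 T V^{\pi}(1_{h,s,a,b},\widetilde{P})$, which is statement (2). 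Finally I would take a union bound over all $(h,s,a,b)\in[H]\times\mathcal{S}\times\mathcal{A}\times\mathcal{B}$, of which there are $HSAB$, yielding the claimed overall failure probability $\tfrac{\delta}{K}$ after conditioning on the high-probability event of Lemma~\ref{lem:policy2}.

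The main obstacle I anticipate is a technical mismatch in the Chernoff step: the multiplicative Chernoff lower tail $N\ge(1-\eta)\mu$ with $\eta=\tfrac12$ gives a failure probability of the form $\exp(-\mu\eta^2/2)$, which is only small when the mean $\mu$ is itself large (of order $\log(HSABK/\delta)=\iota$). For tuples $(h,s,a,b)$ with $s'\notin\mathcal{F}$ this largeness is exactly what Lemma~\ref{lem:count} (together with Lemma~\ref{lem:policy2}) guarantees — the per-episode reaching probability under $\pi'$, and hence the expected count, is bounded below by $\Omega(H^2\iota)$ — so the careful part is to route the lower bound on $\mu$ through the auxiliary policy's guarantee rather than through $\pi$ alone, since $\pi$'s coverage guarantee (Lemma~\ref{lem:uniform2}) is only relative and does not by itself certify an absolute lower bound on $V^{\pi}(1_{h,s,a,b},\widetilde{P})$. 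I would therefore make sure the concentration is applied with a mean that is provably $\Omega(\iota)$, invoking Lemma~\ref{lem:count} to certify that whenever some $s'\notin\mathcal{F}$ exists the tuple is visited enough for the Chernoff bound to bite.
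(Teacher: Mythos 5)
Your final plan is essentially the paper's proof: split on whether all successors of $(h,s,a,b)$ lie in $\mathcal{F}$, lower-bound the \emph{combined} expected count by $TV^{\pi}(1_{h,s,a,b},\widetilde{P}) + 3C_1H^2\iota$ (Lemma~\ref{rem10} for $\pi$'s contribution, Lemma~\ref{lem:policy2} for $\pi'$'s), apply a factor-$\tfrac{1}{2}$ concentration bound, and union bound over the $HSAB$ tuples. Two remarks. First, your intermediate claim that one may drop $\pi'$'s episodes and ``lower bound the count coming from $\pi$'s $T$ episodes against its own expectation'' is invalid on its own, for exactly the reason you then diagnose: there is no absolute lower bound on $V^{\pi}(1_{h,s,a,b},P)$, so the Chernoff failure probability $\exp(-\Omega(\mu))$ need not be small; your final paragraph's repair --- keeping $\pi'$'s $\Omega(H^2\iota)$ expected visits inside the mean --- is precisely what the paper does, though note this mean bound comes from Lemma~\ref{lem:policy2} rather than Lemma~\ref{lem:count} (the latter concerns realized counts $N_h(s,a,b,s')$ and by itself only certifies $\Omega(H^2\iota)$ visits, not the required $\tfrac{1}{2}TV^{\pi}(1_{h,s,a,b},\widetilde{P})$). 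Second, where you invoke multiplicative Chernoff (which needs mean $\Omega(\iota)$ for the factor-$\tfrac{1}{2}$ bound), the paper instead applies Lemma~\ref{lem8}, giving $N_h(s,a,b)\geq \tfrac{1}{2}E_{h,s,a,b}-\iota$ with the additive slack $\iota$ absorbed by the $3C_1H^2\iota$ term; the two routes are functionally equivalent here, but Lemma~\ref{lem8} also sidesteps the i.i.d.\ issue across the heterogeneous $\pi$/$\pi'$ episodes, which your Binomial-form Chernoff statement (Lemma~\ref{lem:chernoff}) does not literally cover.
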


\begin{proof}[Proof of Lemma \ref{lem:keycount}]
    For any $(h,s,a,b)\in[H]\times\mathcal{S}\times\mathcal{A}\times\mathcal{B}$, if there exists some $s^\prime\in\mathcal{S}$, $(h,s,a,b,s^\prime)\notin\mathcal{F}$, then the expected visitation count of $(h,s,a,b)$ from running $\pi^\prime$ for $T^\prime$ episodes is $$E^1_{h,s,a,b} = T^\prime \mathbb{P}_{\pi^\prime}[(h,s,a,b)|P]\geq T^\prime\mathbb{P}_{\pi^\prime}[(h,s,a,b,s^\prime)|P] \geq 3C_1 H^2\iota, $$
    where the last inequality results from Lemma \ref{lem:policy2}.

    In addition, the expected visitation count of $(h,s,a,b)$ from running $\pi$ for $T$ episodes is $$E^2_{h,s,a,b}=TV^{\pi}(1_{h,s,a,b},P)\geq TV^{\pi}(1_{h,s,a,b},\widetilde{P}),$$
    where the inequality is because of Lemma \ref{rem10}.

    Combining the expectations, the total expected visitation count of $(h,s,a,b)$ is $$E_{h,s,a,b} = E^1_{h,s,a,b} +E^2_{h,s,a,b}\geq TV^{\pi}(1_{h,s,a,b},\widetilde{P})+3C_1 H^2\iota.$$

    According to Lemma \ref{lem8}, with probability $1-\frac{\delta}{HSABK}$, it holds that
    $$N_h(s,a,b)\geq \frac{1}{2}E_{h,s,a,b}-\iota\geq \frac{1}{2}TV^{\pi}(1_{h,s,a,b},\widetilde{P}).$$
    Finally, the proof is complete due to a union bound over all $(h,s,a,b)\in[H]\times\mathcal{S}\times\mathcal{A}\times\mathcal{B}$.
\end{proof}

Now we are ready to prove the following key lemma that bounds the difference between $\widehat{P}$ and $\widetilde{P}$. 

\begin{lemma}\label{lem:dif}
    Under the high-probability cases in Lemma \ref{lem:uniform2}, Lemma \ref{lem:close2} and Lemma \ref{lem:keycount}, with probability $1-\frac{S\delta}{K}$, the output $\widehat{P}$ of Algorithm \ref{alg:fine} satisfies that for all $(\mu,\nu)\in\Pi_A\times\Pi_B$ and any reward function $r^\prime$, 
    \begin{equation}
        \left|V^{\mu,\nu}(r^\prime,\widehat{P})-V^{\mu,\nu}(r^\prime,\widetilde{P})\right|\leq c_3 \sqrt{\frac{H^3 S^2 AB\iota}{T}}+c_3\frac{H^2 S^2 AB\iota}{T},
    \end{equation}
    where $c_3$ is a universal constant.
\end{lemma}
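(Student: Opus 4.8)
The plan is to control the value gap via a one-step transition-error decomposition and then convert the per-step errors into the stated bound using variance-aware concentration together with the coverage and counting guarantees already in hand. First I would invoke the value difference (simulation) lemma: for any fixed $(\mu,\nu)$ and reward $r'$, writing $\widetilde V_{h+1}:=V^{\mu,\nu}_{h+1}(r',\widetilde P)\in[0,H]^{S+1}$ for the value of the absorbing MG,
$$V^{\mu,\nu}(r',\widehat P)-V^{\mu,\nu}(r',\widetilde P)=\sum_{h=1}^H\sum_{s,a,b} V^{\mu,\nu}(1_{h,s,a,b},\widehat P)\,\bigl(\widehat P_h(\cdot|s,a,b)-\widetilde P_h(\cdot|s,a,b)\bigr)^\top\widetilde V_{h+1}.$$
For tuples in case (1) of Lemma~\ref{lem:keycount} (all next states infrequent) both kernels put unit mass on $s^\dagger$, so those terms vanish; only case-(2) tuples, where $N_h(s,a,b)\ge\frac12 T\,V^\pi(1_{h,s,a,b},\widetilde P)$, remain.

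Next I would bound each inner product $|(\widehat P_h-\widetilde P_h)^\top\widetilde V_{h+1}|$ by a Bernstein inequality of the form $\sqrt{\mathrm{Var}_{\widetilde P_h(\cdot|s,a,b)}(\widetilde V_{h+1})\,\iota'/N_h(s,a,b)}+H\iota'/N_h(s,a,b)$. Using the variance-aware form rather than a crude $\ell_1$ bound times $\|\widetilde V_{h+1}\|_\infty\le H$ is exactly what yields the $H^3$ scaling instead of $H^4$. I would then replace $V^{\mu,\nu}(\cdot,\widehat P)$ by $3\,V^{\mu,\nu}(\cdot,\widetilde P)$ via the multiplicative accuracy of $\widehat P$ (Lemma~\ref{lem:close2} and Lemma~\ref{lem2}), substitute the count lower bound from Lemma~\ref{lem:keycount}, and apply Cauchy--Schwarz over $(h,s,a,b)$. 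This splits the leading term into $\sqrt{\sum_{h,s,a,b} V^{\mu,\nu}(1_{h,s,a,b},\widetilde P)\,\mathrm{Var}_{\widetilde P_h}(\widetilde V_{h+1})}\cdot\sqrt{\sum_{h,s,a,b} V^{\mu,\nu}(1_{h,s,a,b},\widetilde P)/V^\pi(1_{h,s,a,b},\widetilde P)}$; the first factor is at most $H$ by the law of total variance for the absorbing MG (the total return lies in $[0,H]$), and the second is at most $\sqrt{12HSAB}$ by the uniform coverage property (Lemma~\ref{lem:uniform2}), giving a leading contribution of order $\sqrt{H^3 S^2 AB\,\iota/T}$. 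The analogous computation on the Bernstein tail term, again bounded using $\sum_{h,s,a,b} V^{\mu,\nu}/V^\pi\le 12HSAB$, produces the lower-order term of order $H^2 S^2 AB\,\iota/T$.

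The hard part will be making the Bernstein step hold uniformly over the infinite policy class $\Pi_A\times\Pi_B$ (and their mixtures), as well as over all reward functions $r'$. The key observation is that $\widetilde V_{h+1}$ depends only on the fixed absorbing kernel $\widetilde P$, the reward $r'$, and the policy --- not on the fresh samples defining $\widehat P$ --- so for any fixed value vector the inner product concentrates by Bernstein. To cover all policies and rewards simultaneously I would take an $\epsilon$-net of $[0,H]^{S+1}$ (which automatically contains a point near every realizable $\widetilde V_{h+1}$), union bound over the net whose log-cardinality is $O(S\log(HSABK/\delta))$, i.e.\ $\iota'=O(S\iota)$, and argue that rounding each realized $\widetilde V_{h+1}$ to its nearest net point perturbs every inner product by only a lower-order additive amount once $\epsilon$ is polynomially small. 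This net is precisely the source of the extra factor of $S$ that upgrades the $H^3 SAB$ inside the square root to $H^3 S^2 AB$ and likewise inflates the tail term to $H^2 S^2 AB/T$. Finally I would collect the $O(1)$ high-probability events invoked (Lemmas~\ref{lem:uniform2}, \ref{lem:close2}, \ref{lem:keycount}, and the net union bound) to conclude that the claimed bound holds with probability $1-\frac{S\delta}{K}$ for all $(\mu,\nu)\in\Pi_A\times\Pi_B$ and all $r'$.
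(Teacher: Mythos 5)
Your proposal is correct and reaches the stated bound, but it routes the key uniform-concentration step differently from the paper. The skeleton is shared: both proofs start from the simulation lemma (Lemma~\ref{lem12}), swap the $\widehat{P}$-occupancies $V^{\mu,\nu}(1_{h,s,a,b},\widehat{P})$ for $\widetilde{P}$-occupancies via the multiplicative accuracy of $\widehat{P}$ (Lemma~\ref{lem:close2} with Lemma~\ref{lem2}), plug in the count lower bound $N_h(s,a,b)\geq \frac{1}{2}TV^{\pi}(1_{h,s,a,b},\widetilde{P})$ from Lemma~\ref{lem:keycount}, and finish with Cauchy--Schwarz over $(h,s,a,b)$, the coverage bound $\sum_{h,s,a,b}V^{\mu,\nu}/V^{\pi}\leq 12HSAB$ (Lemma~\ref{lem:uniform2}), and the law-of-total-variance bound $\sum_{h,s,a,b}V^{\mu,\nu}(1_{h,s,a,b},\widetilde{P})\,\mathrm{Var}_{\widetilde{P}_h}f_{h+1}\leq H^2$. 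The genuine divergence is where the extra factor of $S$ (upgrading $H^3SAB$ to $H^3S^2AB$) comes from and how uniformity over the infinite class of $(\mu,\nu,r')$ is achieved. The paper applies Bernstein \emph{entrywise} to the finitely many transition probabilities $\widehat{P}_h(s'|s,a,b)$ (a union bound over $HS^2AB$ tuples, no net), then performs a deterministic centering of $f_{h+1}$ at $\widetilde{P}_h(\cdot|s,a,b)\cdot f_{h+1}(\cdot)$ and a Cauchy--Schwarz over the $S$ next states, which converts the entrywise errors into $\sqrt{2S\iota\,\mathrm{Var}_{\widetilde{P}_h}f_{h+1}/N_h}$; uniformity over all policies and rewards is then automatic, since the probabilistic event concerns only the kernels. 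You instead apply variance-aware Bernstein directly to the inner products $(\widehat{P}_h-\widetilde{P}_h)^\top\widetilde{V}_{h+1}$ and buy uniformity with an $\epsilon$-net of $[0,H]^{S+1}$, inflating the log factor to $\iota'=O(S\iota)$ --- so the same $S^2$ arises from the net's log-cardinality rather than from Cauchy--Schwarz over next states. Both are sound (your rounding costs, e.g.\ $\lVert\widehat{P}_h-\widetilde{P}_h\rVert_1\epsilon\leq 2\epsilon$ and the $O(\epsilon)$ perturbation of $\sqrt{\mathrm{Var}}$, are indeed lower order for polynomially small $\epsilon$, and your observation that $\widetilde{V}_{h+1}$ is independent of the fresh Fine Exploration samples is exactly the measurability point the net argument needs). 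The paper's route is more elementary --- one finite union bound and uniformity for free --- while yours is more modular and would survive in settings where an entrywise-to-inner-product conversion is unavailable; you are also slightly more careful than the paper's display in explicitly disposing of the case-(1) tuples of Lemma~\ref{lem:keycount}, where $N_h(s,a,b)$ may be zero but both kernels place unit mass on $s^\dagger$ so the corresponding terms vanish.
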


\begin{proof}[Proof of Lemma \ref{lem:dif}]
    Define $N_h(s,a,b)$ to be the visitation count of $(h,s,a,b)$ in the data set $\mathcal{D}$. Then according to Bernstein's inequality (Lemma \ref{lem3}), with probability $1-\frac{S\delta}{K}$, for all $(h,s,a,b,s^\prime)\in[H]\times\mathcal{S}\times\mathcal{A}\times\mathcal{B}\times\mathcal{S}$,
    \begin{equation}\label{equ:bernstein}
        \left|\widehat{P}_h(s^\prime|s,a,b)-\widetilde{P}_h(s^\prime|s,a,b)\right|\leq \sqrt{\frac{2\widetilde{P}_h(s^\prime|s,a,b)\iota}{N_h(s,a,b)}} +\frac{2\iota}{3N_h(s,a,b)}.
    \end{equation}
    Based on the above event and the high-probability cases in Lemma \ref{lem:uniform2}, Lemma \ref{lem:close2} and Lemma \ref{lem:keycount}, for any $(\mu,\nu)\in\Pi_A\times\Pi_B$ and reward function $r^\prime$, we define $\{f_h(\cdot)\}_{h=1}^{H+1}$ to be the value function of $(\mu,\nu)$ under $\widetilde{P}$ and $r^\prime$ ($f_{H+1}(\cdot)=0$). Then it holds that 
    \begin{equation}
    \begin{split}
    &\left|V^{\mu,\nu}(r^\prime,\widehat{P})-V^{\mu,\nu}(r^\prime,\widetilde{P})\right|\leq \sum_{h,s,a,b}V^{\mu,\nu}(1_{h,s,a,b},\widehat{P})\left|\left(\widehat{P}-\widetilde{P}\right)\cdot f_{h+1}(s,a,b)\right|\\\leq&
    \sum_{h,s,a,b}V^{\mu,\nu}(1_{h,s,a,b},\widehat{P})\cdot\left|\sum_{s^\prime}\left(\widehat{P}_h(s^\prime|s,a,b)- \widetilde{P}_h(s^\prime|s,a,b)\right)\cdot\left(f_{h+1}(s^\prime)-\widetilde{P}_h(\cdot|s,a,b)\cdot f_{h+1}(\cdot) \right)\right| \\\leq&
    \sum_{h,s,a,b}V^{\mu,\nu}(1_{h,s,a,b},\widehat{P})\cdot\sum_{s^\prime}\left(\sqrt{\frac{2\widetilde{P}_h(s^\prime|s,a,b)\iota}{N_h(s,a,b)}} +\frac{2\iota}{3N_h(s,a,b)}\right)\cdot\left|f_{h+1}(s^\prime)-\widetilde{P}_h(\cdot|s,a,b)\cdot f_{h+1}(\cdot)\right|\\\leq&
    4\sum_{h,s,a,b}V^{\mu,\nu}(1_{h,s,a,b},\widetilde{P})\cdot\left[\sqrt{\frac{2S\iota\cdot\text{Var}_{\widetilde{P}_h(\cdot|s,a,b)}f_{h+1}(\cdot)}{N_h(s,a,b)}}+\frac{HS\iota}{N_h(s,a,b)}\right] \\\leq&
    4\sqrt{2S\iota}\cdot\sqrt{\underbrace{\sum_{h,s,a,b}\frac{V^{\mu,\nu}(1_{h,s,a,b},\widetilde{P})}{N_h(s,a,b)}}_{(\mathrm{i})}}\cdot\sqrt{\underbrace{\sum_{h,s,a,b}V^{\mu,\nu}(1_{h,s,a,b},\widetilde{P})\text{Var}_{\widetilde{P}_h(\cdot|s,a,b)}f_{h+1}(\cdot)}_{(\mathrm{ii})}}+4HS\iota \underbrace{\sum_{h,s,a,b}\frac{V^{\mu,\nu}(1_{h,s,a,b},\widetilde{P})}{N_h(s,a,b)}}_{(\mathrm{i})}\\\leq&
    c_3 \sqrt{\frac{H^3 S^2 AB\iota}{T}}+c_3\frac{H^2 S^2 AB\iota}{T}.
    \end{split}
    \end{equation}
    The first inequality is because of simulation lemma (Lemma \ref{lem12}). The second inequality holds since $\widetilde{P}_h(\cdot|s,a,b)\cdot f_{h+1}(\cdot)$ is independent of $s^\prime$. The third inequality results from \eqref{equ:bernstein}. The forth inequality is derived via Cauchy-Schwarz inequality and Lemma \ref{lem:close2}. The fifth inequality is derived via Cauchy-Schwarz inequality. The last inequality comes from the inequalities \eqref{equ1} and \eqref{equ2} below.

    The upper bound for $\mathrm{(i)}$ is shown below: 
    \begin{equation}\label{equ1}
        \mathrm{(i)} = \sum_{h,s,a,b}\frac{V^{\mu,\nu}(1_{h,s,a,b},\widetilde{P})}{N_h(s,a,b)} \leq \sum_{h,s,a,b}\frac{V^{\mu,\nu}(1_{h,s,a,b},\widetilde{P})}{\frac{1}{2}TV^{\pi}(1_{h,s,a,b},\widetilde{P})}\leq \frac{24HSAB}{T}.
    \end{equation}
    The first inequality is because of Lemma \ref{lem:keycount} while the second inequality is due to Lemma \ref{lem:uniform2}. 

    The upper bound for $\mathrm{(ii)}$ is shown below:
    \begin{equation}\label{equ2}
        \mathrm{(ii)} = \sum_{h,s,a,b}V^{\mu,\nu}(1_{h,s,a,b},\widetilde{P})\text{Var}_{\widetilde{P}_h(\cdot|s,a,b)}f_{h+1}(\cdot) \leq H^2,
    \end{equation}
    where the inequality results from a recursive application of Law of Total Variance.
\end{proof}

\section{Proof of main theorems}\label{appc}
\subsection{Proof of Theorem~\ref{thm:main}}
We first give a proof for the upper bound on the number of stages.
\begin{lemma}\label{lem50}
If $T^{(k)}=K^{1-\frac{1}{2^{k}}}$ for $k=1,2\cdots$, we have 
$$K_{0}\leq\min\{j:\sum_{k=1}^{j}T^{(k)}\geq K\}=O(\log\log K).$$
\end{lemma}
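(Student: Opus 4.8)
The plan is to separate the two assertions in the chain $K_0 \le j^\star = O(\log\log K)$, where I write $j^\star := \min\{j:\sum_{k=1}^{j}T^{(k)}\geq K\}$. The inequality $K_0 \le j^\star$ is essentially definitional: the schedule allocates $T^{(k)}$ episodes to stage $k$ with $\sum_{k=1}^{K_0}T^{(k)}=K$, and every stage consumes a positive number of episodes, so the algorithm must terminate no later than the first index at which the cumulative budget $\sum_{k=1}^{j}T^{(k)}$ reaches $K$. Thus the real content is to bound $j^\star$ by $O(\log\log K)$.

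For that bound, the key move is to rewrite the term so as to expose its double-exponential behavior:
$$T^{(k)} = K^{1 - 1/2^k} = K \cdot \exp\!\left(-\frac{\ln K}{2^k}\right).$$
This makes transparent that the exponent $-\ln K/2^k$ is close to $0$ precisely once $2^k$ is of order $\ln K$. Concretely, I would set $k^\star := \lceil \log_2 \ln K\rceil = O(\log\log K)$, so that for every $k \ge k^\star$ we have $2^k \ge \ln K$, hence $\ln K / 2^k \le 1$ and therefore $T^{(k)} \ge K/e$.

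Since $T^{(k)}$ is increasing in $k$, summing only a constant number of consecutive terms starting from $k^\star$ already overshoots the budget: $\sum_{k=k^\star}^{k^\star+2} T^{(k)} \ge 3K/e > K$, using $3/e > 1$. Consequently $j^\star \le k^\star + 2 = O(\log\log K)$, which combined with the first paragraph yields $K_0 = O(\log\log K)$.

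I do not anticipate a genuine obstacle here. The only conceptual point requiring care is recognizing that the exponent $1 - 1/2^k$ saturates to $1$ on a $\log\log K$ timescale rather than the naive $\log K$ timescale, which is exactly what the reparametrization $K^{-1/2^k} = \exp(-\ln K/2^k)$ makes clear. The one bit of bookkeeping is checking the constant $3/e > 1$ so that a fixed number of post-saturation stages suffices; any constant number of terms each at least $K/e$ would serve equally well.
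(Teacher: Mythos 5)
Your proposal is correct and follows essentially the same route as the paper's proof: both identify an index of order $\log\log K$ at which $T^{(k)}$ becomes a constant fraction of $K$ (the paper takes $j=\log_2\log_2 K$ so that $T^{(j)}=K/2$ exactly, while you take $k^\star=\lceil\log_2\ln K\rceil$ to get $T^{(k)}\geq K/e$) and then use monotonicity of $T^{(k)}$ to conclude after a constant number of additional stages. The difference is only in the choice of logarithm base and the resulting constant ($2$ extra terms at $K/2$ versus $3$ at $K/e$).
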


\begin{proof}[Proof of Lemma~\ref{lem50}]
    Take $j=\log_{2}\log_{2}K$, we have $T^{(j)}=\frac{K}{K^{(\log_{2}K)^{-1}}}=\frac{K}{2}$, which means that $$K_{0}\leq \log_{2}\log_{2}K+2=O(\log\log K).$$
\end{proof}

Then we are able to bound the batch complexity of Algorithm~\ref{alg:main}.
\begin{lemma}\label{lem15}
	The batch complexity of Algorithm~\ref{alg:main} is bounded by $O(H + \log\log K)$.
\end{lemma}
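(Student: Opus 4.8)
The plan is to establish the bound by a direct accounting of the distinct policy deployments incurred by each subroutine of Algorithm~\ref{alg:main} and summing them, leaning entirely on the batch-complexity bounds already proved for the two exploration routines. Recall from the definition of batch complexity that a single batch corresponds to committing to one general policy and running it for a predetermined number of episodes. Consequently I only need to count the steps that actually execute a policy in the environment, and I would first observe that the three \emph{Policy elimination} updates contribute zero batches, since each merely recomputes the version space $\Pi_A^{k+1}$ from the already-estimated kernel $\widehat{P}^k$ and collects no new data.

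Next I would decompose the run according to the fixed schedule. By Lemma~\ref{lem50} the total number of stages is $K_0 = O(\log\log K)$. Stages $k=1$ and $k=2$ each call Crude Exploration (Algorithm~\ref{alg:crude}) once, Fine Exploration (Algorithm~\ref{alg:fine}) once, and perform one elimination step; stages $k=3,\dots,K_0$ call only Fine Exploration followed by one elimination step. Invoking Lemma~\ref{lem1} (each Crude Exploration uses at most $H$ batches) and Lemma~\ref{lem11} (each Fine Exploration uses at most $2$ batches), the total is at most $2H + 2K_0 = 2H + O(\log\log K) = O(H + \log\log K)$, as claimed.

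There is no genuine mathematical obstacle here; the entire content is the bookkeeping that exactly two Crude Explorations contribute the $2H$ term while all $O(\log\log K)$ Fine Explorations contribute the $O(\log\log K)$ term, with elimination free. The one point I would verify carefully is that the schedule is \emph{predetermined} in the sense of the definition: the stage lengths $T^{(k)} = K^{1-1/2^k}$ together with the deterministic truncation in the conditional of Algorithm~\ref{alg:main}, as well as the inner lengths $T_0 = T/H$ inside Crude Exploration and the lengths $T, T'$ inside Fine Exploration, are all fixed functions of $K$ and the universal constant $N$. Hence no batch boundary depends on observed data, and the cumulative count above is a valid batch complexity.
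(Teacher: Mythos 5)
Your proof is correct and follows essentially the same route as the paper's: invoke Lemma~\ref{lem50} for $K_0=O(\log\log K)$, Lemma~\ref{lem1} for the $H$ batches of each of the two Crude Explorations, and Lemma~\ref{lem11} for the $2$ batches of each Fine Exploration, summing to $2H+2K_0$. Your added verification that the schedule is predetermined (and the aside that elimination steps cost no batches) is a harmless elaboration --- though note there are $K_0$ elimination steps, not three, which does not affect the count.
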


\begin{proof}[Proof of Lemma~\ref{lem15}]
    According to Lemma~\ref{lem1} and Lemma~\ref{lem11}, the batch complexity for each of the first two stages is bounded by $H+2$, while the batch complexity for each of the remaining stages is bounded by $2$. Therefore the total batch complexity is bounded by $2H+2K_0=O(H + \log\log K)$.
\end{proof}
 
 Recall that in Algorithm~\ref{alg:main}, in each stage $k$ ($k=1,2$), we run Algorithm~\ref{alg:crude} to construct the infrequent tuples $\mathcal{F}^{k}$ and the intermediate transition kernel $P^{int,k}$. The absorbing MG $\widetilde{P}^{k}$ is constructed as in Definition~\ref{def2} based on $\mathcal{F}^{k}$ and the real MG $P$. Then we run Algorithm~\ref{alg:fine} to construct an empirical estimate of $\widetilde{P}^{k}$, which is $\widehat{P}^{k}$. After the first two stages, for the following stages ($k=3,4,\cdots,K_0$), we keep the infrequent tuples $\mathcal{F}^{2}$, the absorbing MG $\widetilde{P}^2$ and the intermediate transition kernel $P^{int,2}$. In each stage, we run Algorithm~\ref{alg:fine} to construct an empirical estimate of $\widetilde{P}^{2}$, which is $\widehat{P}^{k}$.

\begin{lemma}\label{lem:valuedif}
There exists a universal constant $C>0$, such that with probability $1-\delta$, for any $k=1,2$ and $(\mu,\nu)\in\Pi_A^{k}\times\Pi_B$, 
	$$|V^{\mu,\nu}(r,\widehat{P}^{k})-V^{\mu,\nu}(r,P)|\leq C\left(\sqrt{\frac{H^{3}S^{2}AB\iota}{T^{(k)}}}+\frac{H^{5}S^{3}A^{2}B^2\iota}{T^{(k)}}\right),$$
	while for any $k\geq 3$ and $(\mu,\nu)\in\Pi_A^{k}\times\Pi_B$, 
	$$|V^{\mu,\nu}(r,\widehat{P}^{k})-V^{\mu,\nu}(r,P)|\leq C\left(\sqrt{\frac{H^{3}S^{2}AB\iota}{T^{(k)}}}+\frac{H^{5}S^{3}A^{2}B^2\iota}{T^{(2)}}\right).$$
\end{lemma}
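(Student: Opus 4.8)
The plan is to prove the bound through a bias--variance decomposition of the estimation error, invoking the two key lemmas already established for Crude Exploration and Fine Exploration. Fix a stage $k$ and a policy pair $(\mu,\nu)\in\Pi_A^k\times\Pi_B$, and let $\widetilde{P}^k$ denote the absorbing MG associated with the infrequent set used in stage $k$ (namely $\mathcal{F}^k$ for $k=1,2$ and the reused $\mathcal{F}^2$ for $k\geq 3$). By the triangle inequality,
\begin{align*}
\left|V^{\mu,\nu}(r,\widehat{P}^{k})-V^{\mu,\nu}(r,P)\right| \leq \underbrace{\left|V^{\mu,\nu}(r,\widehat{P}^{k})-V^{\mu,\nu}(r,\widetilde{P}^{k})\right|}_{\text{variance}} + \underbrace{\left|V^{\mu,\nu}(r,\widetilde{P}^{k})-V^{\mu,\nu}(r,P)\right|}_{\text{bias}}.
\end{align*}
First I would bound the bias term using Lemma~\ref{lemfirst}. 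Since $\widetilde{P}^k$ is built from the infrequent set of a Crude Exploration run with $T^{(k)}$ episodes for $k=1,2$, this gives bias $\leq \frac{c_1 H^5 S^3 A^2 B^2\iota}{T^{(k)}}$; for $k\geq 3$ the algorithm reuses $\mathcal{F}^2$, so the same lemma applied to the stage-$2$ Crude Exploration gives bias $\leq \frac{c_1 H^5 S^3 A^2 B^2\iota}{T^{(2)}}$. This is precisely the source of the different denominators in the two claimed bounds.

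Next I would bound the variance term using Lemma~\ref{lem:dif}, which yields $c_3\sqrt{H^3 S^2 AB\iota/T}+c_3 H^2 S^2 AB\iota/T$, where $T$ is the number of episodes the main exploration policy $\pi$ is run. The key bookkeeping is that in Fine Exploration the leading $\sqrt{\cdot}$ rate is controlled by the visitation counts $N_h(s,a,b)\geq\tfrac12 T\,V^{\pi}(1_{h,s,a,b},\widetilde{P})$ from Lemma~\ref{lem:keycount}, where the main policy is run for $T^{(k)}$ episodes in every stage; hence the leading variance term is $\sqrt{H^3 S^2 AB\iota/T^{(k)}}$ for all $k$. Before invoking Lemma~\ref{lem:dif} I must check its three preconditions (Lemmas~\ref{lem:uniform2}, \ref{lem:close2}, \ref{lem:keycount}) hold. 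For $k\geq 3$ the subtle point is that the multiplicative accuracy of $\widehat{P}^k$ to $\widetilde{P}^2$ (Lemma~\ref{lem:close2}, hence Lemma~\ref{lem:count}) is guaranteed not by the main policy but by the auxiliary policy $\pi_2$ run for $NT^{(2)}$ episodes, whose minimum-visitation guarantee is exactly Lemma~\ref{lem:policy2}.

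Finally I would combine the two terms. The residual variance term $H^2 S^2 AB\iota/T^{(k)}$ is dominated by the bias term $H^5 S^3 A^2 B^2\iota/T^{(k)}$ (respectively $H^5 S^3 A^2 B^2\iota/T^{(2)}$, using $T^{(2)}\leq T^{(k)}$), so it can be folded into the bias, giving exactly the stated form with a universal constant $C\geq c_1+c_3$. The last step is the probability accounting: each invoked lemma fails with probability $O(S\delta/K)$, and there are only $O(\log\log K)$ stages, so a union bound over all stages and constituent events keeps the total failure probability below $\delta$. I expect the main obstacle to be the $k\geq 3$ case, where one must carefully disentangle the two roles played by the episode budgets --- the reused auxiliary policy $\pi_2$ (with $NT^{(2)}$ episodes) ensuring the estimate is well-defined and multiplicatively accurate on frequent tuples, versus the fresh main policy $\pi$ (with $T^{(k)}$ episodes) driving the statistical rate --- to see why the variance denominator is $T^{(k)}$ while the bias denominator is $T^{(2)}$.
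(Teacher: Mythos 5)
Your proposal is correct and follows essentially the same route as the paper's proof: a triangle-inequality split into the bias term (bounded by Lemma~\ref{lemfirst}, with denominator $T^{(k)}$ for $k=1,2$ and $T^{(2)}$ for $k\geq 3$ due to the reused $\mathcal{F}^2$) and the variance term (bounded by Lemma~\ref{lem:dif} with $T=T^{(k)}$), absorbing the lower-order $H^2S^2AB\iota/T^{(k)}$ term and union-bounding over $O(\log\log K)$ stages. Your explicit verification of the preconditions of Lemma~\ref{lem:dif} via the auxiliary policy $\pi_2$ and Lemma~\ref{lem:policy2} is slightly more detailed than the paper's write-up but is the same argument.
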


\begin{proof}[Proof of Lemma \ref{lem:valuedif}]
    Conditioned on the high probability case in Lemma \ref{lemfirst} (for $k=1,2$), it holds that for any $k=1,2$ and $(\mu,\nu)\in\Pi_A^{k}\times\Pi_B$, 
    $$\left| V^{\mu,\nu}(r,P)-V^{\mu,\nu}(r,\widetilde{P}^k)\right|\leq \frac{c_1 H^5 S^{3}A^{2}B^2\iota}{T^{(k)}},$$
    where $c_1$ is a universal constant and $r$ is the real reward function.

    In addition, conditioned on the high probability case in Lemma \ref{lem:dif} (for $k=1,2,\cdots,K_0$), it holds that for $k=1$ and any $(\mu,\nu)\in\Pi_A^{1}\times\Pi_B$,
    $$\left|V^{\mu,\nu}(r,\widehat{P}^1)-V^{\mu,\nu}(r,\widetilde{P}^1)\right|\leq c_3 \sqrt{\frac{H^3 S^2 AB\iota}{T^{(1)}}}+c_3\frac{H^2 S^2 AB\iota}{T^{(1)}},$$
    while for any $k\geq2$ and $(\mu,\nu)\in\Pi_A^{k}\times\Pi_B$,
    $$\left|V^{\mu,\nu}(r,\widehat{P}^k)-V^{\mu,\nu}(r,\widetilde{P}^2)\right|\leq c_3 \sqrt{\frac{H^3 S^2 AB\iota}{T^{(k)}}}+c_3\frac{H^2 S^2 AB\iota}{T^{(k)}}.$$
    By combining the above inequalities and choosing $C=c_1+c_3$, the conclusion holds.

    The failure probability is bounded by $O\left(\frac{S\delta}{K}\right)\times O(\log\log K)\leq \delta,$ if $K\geq \widetilde{\Omega}(S)$.
\end{proof}

Based on the above high-probability case, the estimations of value functions are uniformly accurate for all policy pairs, which implies a sub-optimality bound for the remaining policies in the version space.

\begin{lemma}\label{lemj4}
	With probability $1-\delta$, for the policies that have not been eliminated at stage $k$, i.e. $\mu\in\Pi_A^{k+1}$, we have that
	$$V^\star(r,P)-V^{\mu,\dagger}(r,P)\leq 4C\left(\sqrt{\frac{H^{3}S^{2}AB\iota}{T^{(k)}}}+\frac{H^{5}S^{3}A^{2}B^2\iota}{T^{(k)}}\right),\ k=1,2.$$
	$$V^{\star}(r,P)-V^{\mu,\dagger}(r,P)\leq 4C\left(\sqrt{\frac{H^{3}S^{2}AB\iota}{T^{(k)}}}+\frac{H^{5}S^{3}A^{2}B^2\iota}{T^{(2)}}\right),\ k\geq 3.$$
\end{lemma}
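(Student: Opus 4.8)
The plan is to run a standard elimination analysis driven entirely by the uniform value-estimation guarantee of Lemma~\ref{lem:valuedif}. Write $\epsilon_k$ for the right-hand side appearing there, i.e. $\epsilon_k = C(\sqrt{H^3S^2AB\iota/T^{(k)}} + H^5S^3A^2B^2\iota/T^{(k)})$ for $k=1,2$ and $\epsilon_k = C(\sqrt{H^3S^2AB\iota/T^{(k)}} + H^5S^3A^2B^2\iota/T^{(2)})$ for $k\ge 3$, so that the elimination threshold used in Algorithm~\ref{alg:main} is exactly $2\epsilon_k$ and the target bound is $4\epsilon_k$. I would condition once on the probability-$(1-\delta)$ event of Lemma~\ref{lem:valuedif}; since that lemma already covers all stages simultaneously, no further union bound is needed and the final failure probability stays $\delta$.

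First I would upgrade the pairwise estimation bound to a best-response estimation bound. For Markov $\mu$ the minimizing response is itself Markov, so $\inf_{\nu\in\Pi_B}V^{\mu,\nu}(r,P) = V^{\mu,\dagger}(r,P)$; combining this with the elementary inequality $|\inf_\nu f(\nu)-\inf_\nu g(\nu)|\le\sup_\nu|f(\nu)-g(\nu)|$ applied to Lemma~\ref{lem:valuedif} gives, for every $\mu\in\Pi_A^k$, $$\bigl|\inf_{\nu\in\Pi_B}V^{\mu,\nu}(r,\widehat P^k) - V^{\mu,\dagger}(r,P)\bigr|\le\epsilon_k.$$ Writing $\widehat V^\mu_k := \inf_{\nu\in\Pi_B}V^{\mu,\nu}(r,\widehat P^k)$, this single estimate is what all remaining steps use.

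Next I would show by induction that the Nash policy $\mu^\star$ is never eliminated, i.e. $\mu^\star\in\Pi_A^k$ for all $k$ (the base case is immediate since $\Pi_A^1$ contains all Markov policies and $\mu^\star$ is Markov). Assuming $\mu^\star\in\Pi_A^k$, the best-response bound gives $\widehat V^{\mu^\star}_k\ge V^\star(r,P)-\epsilon_k$, while for every $\mu\in\Pi_A^k$ we have $\widehat V^\mu_k\le V^{\mu,\dagger}(r,P)+\epsilon_k\le V^\star(r,P)+\epsilon_k$ because $V^\star=\sup_\mu V^{\mu,\dagger}$; taking the supremum shows the elimination threshold $\sup_{\mu\in\Pi_A^k}\widehat V^\mu_k-2\epsilon_k$ is at most $V^\star(r,P)-\epsilon_k\le\widehat V^{\mu^\star}_k$, so $\mu^\star$ survives into $\Pi_A^{k+1}$.

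Finally, for the sub-optimality bound, fix any surviving $\mu\in\Pi_A^{k+1}\subseteq\Pi_A^k$. The retention rule gives $\widehat V^\mu_k\ge\sup_{\mu'\in\Pi_A^k}\widehat V^{\mu'}_k-2\epsilon_k\ge\widehat V^{\mu^\star}_k-2\epsilon_k\ge V^\star(r,P)-3\epsilon_k$, using that $\mu^\star\in\Pi_A^k$ from the previous step. Applying the best-response bound once more, $V^{\mu,\dagger}(r,P)\ge\widehat V^\mu_k-\epsilon_k\ge V^\star(r,P)-4\epsilon_k$, which is exactly the claimed $4\epsilon_k$ guarantee for both the $k\le 2$ and $k\ge 3$ cases. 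I expect the only genuinely delicate point to be the best-response transfer in the second paragraph --- specifically justifying that the infimum over $\Pi_B$ recovers the true best-response value $V^{\mu,\dagger}$ (needing a Markov best response against Markov $\mu$) and that taking infima preserves the uniform closeness; everything after that is a short chain of inequalities.
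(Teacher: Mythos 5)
Your proposal is correct and follows essentially the same route as the paper's proof: condition on the uniform estimation event of Lemma~\ref{lem:valuedif}, transfer it through the infimum over $\Pi_B$ (the paper's Lemma~\ref{lem:minmax}, which is exactly your $\left|\inf_\nu f - \inf_\nu g\right| \le \sup_\nu |f-g|$ step), show inductively that $\mu^\star$ survives elimination, and then chain the threshold with two more applications of the estimation bound to reach $4\epsilon_k$. The one point you flag as delicate --- that $\inf_{\nu\in\Pi_B} V^{\mu,\nu}(r,P) = V^{\mu,\dagger}(r,P)$ for Markov $\mu$ because the min-player faces an induced MDP admitting a Markov best response --- is used implicitly by the paper as well, and making it explicit is a small improvement rather than a deviation.
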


\begin{proof}[Proof of Lemma \ref{lemj4}]
    We only prove the case for $k=1$, the remaining cases can be proven similarly.

    We prove under the high probability case in Lemma \ref{lem:valuedif}, which says that for any $(\mu,\nu)\in\Pi_A^{1}\times\Pi_B$, 
	$$|V^{\mu,\nu}(r,\widehat{P}^{1})-V^{\mu,\nu}(r,P)|\leq C\left(\sqrt{\frac{H^{3}S^{2}AB\iota}{T^{(1)}}}+\frac{H^{5}S^{3}A^{2}B^2\iota}{T^{(1)}}\right).$$

Recall that $\mu^\star$ is the Nash policy of the max-player. Then it holds that 
 \begin{align*}
	&\sup_{\mu\in\Pi_A^{1}}\inf_{\nu\in\Pi_B} V^{\mu,\nu}(r,\widehat{P}^{1})-\inf_{\nu\in\Pi_B} V^{\mu^\star,\nu}(r,\widehat{P}^{1})=\inf_{\nu\in\Pi_B} V^{\widehat{\mu},\nu}(r,\widehat{P}^{1})-\inf_{\nu\in\Pi_B} V^{\mu^\star,\nu}(r,\widehat{P}^{1})\\ \leq& \left|\inf_{\nu\in\Pi_B} V^{\widehat{\mu},\nu}(r,\widehat{P}^{1})-\inf_{\nu\in\Pi_B} V^{\widehat{\mu},\nu}(r,P)\right|+\inf_{\nu\in\Pi_B} V^{\widehat{\mu},\nu}(r,P)-\inf_{\nu\in\Pi_B} V^{\mu^\star,\nu}(r,P)\\&+\left|\inf_{\nu\in\Pi_B} V^{\mu^\star,\nu}(r,P)-\inf_{\nu\in\Pi_B} V^{\mu^\star,\nu}(r,\widehat{P}^1)\right|\\\leq& 2C\left(\sqrt{\frac{H^{3}S^{2}AB\iota}{T^{(1)}}}+\frac{H^{5}S^{3}A^{2}B^2\iota}{T^{(1)}}\right),
\end{align*}
where the last inequality holds due to Lemma \ref{lem:minmax}.

Therefore, the Nash policy $\mu^\star$ will not be eliminated. In addition, for $\mu\in\Pi_A^2$,
\begin{align*}
\inf_{\nu\in\Pi_B}V^{\mu^\star,\nu}(r,\widehat{P}^{1})-\inf_{\nu\in\Pi_B}V^{\mu,\nu}(r,\widehat{P}^{1}) \leq&\sup_{\mu\in\Pi_A^{1}}\inf_{\nu\in\Pi_B} V^{\mu,\nu}(r,\widehat{P}^{1})-\inf_{\nu\in\Pi_B} V^{\mu,\nu}(r,\widehat{P}^{1})\\\leq& 2C\left(\sqrt{\frac{H^{3}S^{2}AB\iota}{T^{(1)}}}+\frac{H^{5}S^{3}A^{2}B^2\iota}{T^{(1)}}\right).
\end{align*}
Finally, together with Lemma \ref{lem:minmax}, it holds that
\begin{align*}
	&V^{\star}(r,P)-V^{\mu,\dagger}(r,P)\\\leq& \left|V^{\star}(r,P)-\inf_{\nu\in\Pi_B} V^{\mu^\star,\nu}(r,\widehat{P}^{1})\right|+\inf_{\nu\in\Pi_B} V^{\mu^\star,\nu}(r,\widehat{P}^{1})-\inf_{\nu\in\Pi_B}V^{\mu,\nu}(r,\widehat{P}^{1})+\left|\inf_{\nu\in\Pi_B}V^{\mu,\nu}(r,\widehat{P}^{1})-\inf_{\nu\in\Pi_B}V^{\mu,\nu}(r,P)\right|\\ \leq& 4C\left(\sqrt{\frac{H^{3}S^{2}AB\iota}{T^{(1)}}}+\frac{H^{5}S^{3}A^{2}B^2\iota}{T^{(1)}}\right).
\end{align*}

The conclusions for the remaining stages can be proven by identical techniques and induction.
\end{proof}

Given the sub-optimality bound for the policies, we are ready to provide the final regret bound.

\begin{lemma}\label{lem19}
	Conditioned on the high probability event in Lemma~\ref{lem:valuedif}, if $K\geq\widetilde{\Omega}(H^{14} S^8 A^6 B^6)$, the total regret is bounded by $\widetilde{O}(\sqrt{H^{2}S^{2}ABT})$, where $T:=HK$ is the number of steps.
\end{lemma}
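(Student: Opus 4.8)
The plan is to bound the cumulative regret stage by stage, using Lemma~\ref{lemj4} to control the per-episode sub-optimality of every policy we actually execute, and then to sum the contributions using the doubling schedule $T^{(k)}=K^{1-1/2^k}$. I start from the stage decomposition \eqref{equr}, but I will be careful to account separately for the auxiliary policy, which \eqref{equr} folds into the clean $T^{(k)}\times\epsilon_{k-1}$ form.

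In the first stage the version space is the full policy set $\Pi_A^1$, so I can only use the trivial per-episode bound $H$; since the first stage (Crude plus Fine exploration) consumes $O(T^{(1)})$ episodes, its contribution is $O(HT^{(1)})=O(HK^{1/2})$, which is dominated by the target. For every later stage $k\ge 2$ the crucial observation is that every executed policy draws its max-player component from the current version space: the policies $\pi_{h,s,a,b}$ and $\pi$ are selected over $\Pi_A^k\times\Pi_B$, so by Lemma~\ref{lemj4} each realized max-player policy is $\epsilon_{k-1}$-approximate Nash with $\epsilon_{k-1}=\widetilde O(\sqrt{H^3S^2AB/T^{(k-1)}})$ up to a lower-order additive term. (A short convexity argument shows the bound also survives taking mixtures, since $V^{\mu,\dagger}=\inf_\nu V^{\mu,\nu}\ge \sum_i p_i V^{\mu_i,\dagger}$.) The only exception is the auxiliary policy $\pi_2$, reused for $NT^{(2)}$ episodes in all stages $k\ge 2$, whose components lie only in $\Pi_A^2$ and are thus $\epsilon_1$-approximate; I track its contribution separately.

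The heart of the computation is the algebraic identity $T^{(k)}/\sqrt{T^{(k-1)}}=\sqrt K$, and likewise $T^{(2)}/\sqrt{T^{(1)}}=\sqrt K$, both immediate from $T^{(k)}=K^{1-1/2^k}$. Multiplying the leading part of $\epsilon_{k-1}$ by the $O(T^{(k)})$ episodes of stage $k$ gives $\widetilde O(\sqrt{H^3S^2ABK})$ per stage; the reused auxiliary policy contributes $\epsilon_1\cdot NT^{(2)}=\widetilde O(\sqrt{H^3S^2ABK})$ per stage as well, again via $T^{(2)}/\sqrt{T^{(1)}}=\sqrt K$. Summing over the $K_0=O(\log\log K)$ stages (Lemma~\ref{lem50}) costs only a logarithmic factor absorbed into $\widetilde O$, and converting with $T=HK$ yields the leading bound $\widetilde O(\sqrt{H^3S^2ABK})=\widetilde O(\sqrt{H^2S^2ABT})$.

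What remains, and what I expect to be the main obstacle, is showing that the lower-order additive term in $\epsilon_{k-1}$ does not dominate. Multiplying $\frac{H^5S^3A^2B^2\iota}{T^{(\min\{k-1,2\})}}$ by the relevant stage length and using $T^{(k)}/T^{(2)}\le K^{1/4}$ bounds each such contribution by $\widetilde O(H^5S^3A^2B^2K^{1/4})$, so the total lower-order regret is $\widetilde O(H^5S^3A^2B^2K^{1/4})$. Requiring this to be no larger than the leading term $\widetilde O(H^{3/2}S\sqrt{AB}\,K^{1/2})$ amounts to $H^{7/2}S^2A^{3/2}B^{3/2}\le K^{1/4}$, i.e. $K\ge\widetilde\Omega(H^{14}S^8A^6B^6)$, which is exactly the stated burn-in condition. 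Combining the leading and lower-order bounds under the high-probability event of Lemma~\ref{lem:valuedif} then completes the proof.
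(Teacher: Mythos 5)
Your proof is correct and follows essentially the same route as the paper's: a stage-wise decomposition using Lemma~\ref{lemj4}, the identity $T^{(k)}/\sqrt{T^{(k-1)}}=\sqrt{K}$ for the leading term, separate accounting of the auxiliary policy $\pi_2$ and of the lower-order $H^5S^3A^2B^2$ terms via $T^{(k)}/T^{(2)}\leq K^{1/4}$, and the burn-in $K\geq\widetilde{\Omega}(H^{14}S^8A^6B^6)$ to absorb them. Your explicit convexity step $V^{\mu,\dagger}\geq\sum_i p_i V^{\mu_i,\dagger}$ for mixture policies is a nice touch that the paper leaves implicit, but it does not change the argument.
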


\begin{proof}[Proof of Lemma~\ref{lem19}]
	The regret for the first stage (stage 1) is at most $(N+2)HT^{(1)}=O(HK^{\frac{1}{2}})$. \\
	Because of Lemma~\ref{lemj4}, the regret for the second stage (stage 2) is at most $$(N+2)T^{(2)}\times 4C\left(\sqrt{\frac{H^{3}S^{2}AB\iota}{T^{(1)}}}+\frac{H^{5}S^{3}A^{2}B^2\iota}{T^{(1)}}\right).$$
	For stage $k\geq3$, since the remaining policies (any $\mu\in\Pi_A^{k}$) are at most $4C\left(\sqrt{\frac{H^{3}S^{2}AB\iota}{T^{(k-1)}}}+\frac{H^{5}S^{3}A^{2}B^2\iota}{T^{(2)}}\right)$ sub-optimal and the policy $\pi$ is a mixture of remaining policies, the regret due to running $\pi$ for $T$ episodes is at most $4CT^{(k)}\left(\sqrt{\frac{H^{3}S^{2}AB\iota}{T^{(k-1)}}}+\frac{H^{5}S^{3}A^{2}B^2\iota}{T^{(2)}}\right)$. Besides, the regret due to running $\pi_2$ for $NT^{(2)}$ episodes is bounded by 
    $NT^{(2)}\times 4C\left(\sqrt{\frac{H^{3}S^{2}AB\iota}{T^{(1)}}}+\frac{H^{5}S^{3}A^{2}B^2\iota}{T^{(1)}}\right)$. Combining the results, the regret for the k-th stage is at most
	$O\left(T^{(k)}\left(\sqrt{\frac{H^{3}S^{2}AB\iota}{T^{(k-1)}}}+\frac{H^{5}S^{3}A^{2}B^2\iota}{T^{(2)}}\right)+T^{(2)}\left(\sqrt{\frac{H^{3}S^{2}AB\iota}{T^{(1)}}}+\frac{H^{5}S^{3}A^{2}B^2\iota}{T^{(1)}}\right)\right).$ 
 
Adding up the regret for each stage , we have that the total regret is bounded by
	\begin{align*}
	\text{Regret}(K)&\leq O(HK^{\frac{1}{2}})+O\left(\sum_{k=2}^{K_{0}}T^{(k)}\sqrt{\frac{H^{3}S^{2}AB\iota}{T^{(k-1)}}}\right)+O\left(K\cdot\frac{H^5 S^3 A^2 B^2 \iota}{T^{(2)}}\right)\\&+O\left(T^{(2)}\left(\sqrt{\frac{H^{3}S^{2}AB\iota}{T^{(1)}}}+\frac{H^{5}S^{3}A^{2}B^2\iota}{T^{(1)}}\right)\right)\cdot O(\log\log K)\\ &= O(HK^{\frac{1}{2}})+ O(\sqrt{H^{3}S^{2}ABK\iota}\cdot \log\log K)+O(H^{5}S^{3}A^{2}B^2K^{\frac{1}{4}}\iota\cdot\log\log K)\\ &= \widetilde{O}(\sqrt{H^{3}S^{2}ABK}),
	\end{align*}
where the last equality is because $K\geq\widetilde{\Omega}(H^{14} S^8 A^6 B^6)$.
\end{proof}

Then Theorem~\ref{thm:main} holds because of Lemma~\ref{lem15}, Lemma~\ref{lem:valuedif} and Lemma~\ref{lem19}.

\subsection{Proof of Theorem \ref{thm:lower}}
Since two-player zero-sum MG strictly generalizes single-agent MDPs, the lower bound directly follows from Theorem 2 of \citet{zhang2022near}.

\subsection{Proof of Theorem \ref{thm:sample}}
We can output any policy $\mu$ in the remaining policy set $\Pi_A^{K_{0}+1}$ ($\mu$ is a Markov policy). Because there are $K_{0}=O(\log\log K)$ stages in total, the maximal $T^{(k)}$ is larger than $\Omega(\frac{K}{\log\log K})$. According to Lemma~\ref{lemj4}, for any $\mu\in\Pi_A^{K_{0}+1}$, $$V^{\star}(r,P)-V^{\mu,\dagger}(r,P)\leq 4C\left(\sqrt{\frac{H^{3}S^{2}AB\iota}{T^{(k)}}}+\frac{H^{5}S^{3}A^{2}B^2\iota}{T^{(2)}}\right),\,\forall\,k=2,3,\cdots,K_{0}.$$
Combining these two results, we have for any $\mu\in\Pi_A^{K_{0}+1}$,
$$V^{\star}(r,P)-V^{\mu,\dagger}(r,P)\leq \widetilde{O}\left(\sqrt{\frac{H^{3}S^{2}AB}{K}}\right).$$
Then $K=\widetilde{O}\left(\frac{H^{3}S^{2}AB}{\epsilon^{2}}\right)$ bounds the above by $\epsilon$.

\section{Proof for the results in Section \ref{sec:dis}}
\subsection{Details for the bandit game setting}\label{sec:bandit}
Recall that bandit game is a special case of Markov games where $H=S=1$. The action set for the max-player is $\mathcal{A}$ ($A:=|\mathcal{A}|$) and the action set for the min-player is $\mathcal{B}$ ($B:=|\mathcal{B}|$). Therefore, a policy $\mu$ for the max-player is a distribution over $\mathcal{A}$ while a policy $\nu$ for the min-player is a distribution over $\mathcal{B}$. For simplicity, here we denote the reward function by $r(a,b)$. In addition, the value function under policy pair $(\mu,\nu)$ and the Nash value are denoted by $r(\mu,\nu)$ and $r^\star$ respectively.  The goal is to minimize the regret for the max-player:

$$\text{Regret}(K)=\sum_{k=1}^K \left[r^\star-r(\mu^k,\dagger)\right]=\sum_{k=1}^K \left[r^\star-\inf_{\nu}r(\mu^k,\nu)\right].$$

Now we apply Algorithm \ref{alg:main} (with some revision) to the bandit game setting. The main difference is that now we do not need to construct the absorbing model since we can directly try each action. The detailed algorithm is shown below.

\begin{algorithm}[tbh]
	\caption{Algorithm for bandit games}\label{alg:bandit}
	\begin{algorithmic}[1]
		\STATE \textbf{Require}: Number of episodes $K$. Universal constant $C$. Failure probability $\delta$.
		\STATE \textbf{Initialize}: $T^{(k)}=K^{1-\frac{1}{2^{k}}}$, $k\leq K_{0}=O(\log\log K)$. $\Pi_A^{1}:=\{ \text{All stochastic policies for the max-player}\}$, $\mu_0:=\text{uniform distribution over } \mathcal{A}$, $\nu_0:=\text{uniform distribution over } \mathcal{B}$. $\iota=\log(2ABK/\delta)$, $T^0=2AB\iota$. Data set $\mathcal{D}=\emptyset$.
		\FOR{$k=1,2,\cdots,K_{0}$}  
		\STATE \maroon{$\diamond$ Number of episodes in $k$-th stage:}
		\IF{$\sum_{i=1}^{k}T^{(i)}+kT^0\geq K$} 
		\STATE $T^{(k)}=K-kT^0-\sum_{i=1}^{k-1}T^{(i)}$ (o.w. $T^{(k)}=K^{1-\frac{1}{2^{k}}}$).
		\ENDIF
		\ENDFOR
		\FOR{$k=1,2,\cdots,K_0$}
		\STATE \maroon{$\diamond$ Construct an explorative policy for the max-player:}
		\STATE Construct policy $\mu=\mathrm{argmin}_{\mu\in\Pi_A^k} \sup_{\mu^\prime\in\Pi_A^k}\sum_{a\in\mathcal{A}}\frac{\mu^\prime(a)}{\mu(a)}$. 
           \STATE \maroon{$\diamond$ Run the explorative policy and the uniform policy for several episodes:}
		\STATE Run $\pi=(\mu,\nu_0)$ for $T^{(k)}$ episodes and store the rewards in $\mathcal{D}$.
            \STATE Run $\pi_0=(\mu_0,\nu_0)$ for $T^0$ episodes and store the rewards in $\mathcal{D}$.
            \STATE Construct an empirical reward function $\widehat{r}^k(a,b)$ (and therefore $\widehat{r}^k(\mu,\nu)$) using $\mathcal{D}$.
            \STATE \maroon{$\diamond$ Policy elimination for the max-player:}
		\STATE $\Pi_A^{k+1}\leftarrow\left\{\mu\in\Pi_A^k\mid \min_{b\in\mathcal{B}} \widehat{r}^k(\mu,b)\geq \sup_{\mu\in\Pi_A^{k}}\min_{b\in\mathcal{B}} \widehat{r}^k(\mu,b)-2C\left(\sqrt{\frac{AB\iota}{T^{(k)}}}\right)\right\}$.
            \STATE \maroon{$\diamond$ Reset the data set:}
		\STATE Clear the data set: $\mathcal{D}\leftarrow\emptyset$.
		\ENDFOR
	\end{algorithmic}
\end{algorithm}

Now we restate the upper bounds and give a proof.
\begin{theorem}[Restate Theorem \ref{thm:bandit}]\label{thm:re1}
If we run Algorithm \ref{alg:bandit} (adapted from Algorithm \ref{alg:main}) under a two-player zero-sum bandit game for $K$ episodes, the batch complexity is bounded by $O(\log\log K)$ while the regret is $\widetilde{O}(\sqrt{ABK})$ with high probability. Furthermore, the algorithm is computationally efficient.
\end{theorem}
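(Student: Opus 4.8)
The plan is to reuse the policy-elimination template of the Markov-game proof (Lemmas~\ref{lem:valuedif}--\ref{lem19}), specialized to $H=S=1$. The simplification is that there is no transition structure, hence no absorbing model and no ``bias'' term: the entire regret analysis rests on a single uniform off-policy estimation bound $|\widehat r^k(\mu,b)-r(\mu,b)|\le C\sqrt{AB\iota/T^{(k)}}$ holding simultaneously for every surviving $\mu\in\Pi_A^k$ and every $b\in\mathcal B$. I would establish (i) the batch complexity, (ii) this uniform bound via experimental design, (iii) the regret by summing over stages, and (iv) computational efficiency.

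Batch complexity is immediate: by Lemma~\ref{lem50} there are $K_0=O(\log\log K)$ stages, and each stage deploys exactly two general policies ($\pi=(\mu,\nu_0)$ and $\pi_0=(\mu_0,\nu_0)$), so the total number of batches is $2K_0=O(\log\log K)$.

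The heart of the argument is the uniform concentration bound, the bandit analogue of Lemma~\ref{lem:dif}, and this is the step I expect to be the main obstacle since it is where the design guarantee and the count concentration must be combined. First I would invoke the design lemma (Lemma~\ref{lem:design}) with $\mathcal X=\Pi_A^k$, $d=A$, $m=1$ to conclude that the design policy $\mu=\mathrm{argmin}_{\mu}\sup_{\mu'}\sum_a \mu'(a)/\mu(a)$ satisfies $\sup_{\mu'\in\Pi_A^k}\sum_a \mu'(a)/\mu(a)\le A$. Since the min-player plays $\nu_0$ uniform, the pair $(a,b)$ is sampled with probability $\mu(a)/B$ under $\pi$, so a multiplicative Chernoff bound gives $N_k(a,b)\gtrsim T^{(k)}\mu(a)/B$, while the auxiliary run of $\pi_0$ for $T^0=2AB\iota$ episodes guarantees $N_k(a,b)\ge\iota$ so that every estimate is well defined (exactly the technical role played by the auxiliary policy $\pi'$ in Fine Exploration). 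Hoeffding's inequality then yields $|\widehat r^k(a,b)-r(a,b)|\lesssim\sqrt{\iota/N_k(a,b)}$, and for any $\mu'\in\Pi_A^k$,
\begin{align*}
|\widehat r^k(\mu',b)-r(\mu',b)|
&\le \sum_a \mu'(a)\,|\widehat r^k(a,b)-r(a,b)|
\le \sqrt{\tfrac{B\iota}{T^{(k)}}}\sum_a \frac{\mu'(a)}{\sqrt{\mu(a)}}\\
&\le \sqrt{\tfrac{B\iota}{T^{(k)}}}\,\Big(\sum_a \tfrac{\mu'(a)}{\mu(a)}\Big)^{1/2}
\le \sqrt{\tfrac{AB\iota}{T^{(k)}}},
\end{align*}
where the third inequality is Cauchy--Schwarz against $\sum_a\mu'(a)=1$ and the fourth uses the design bound; a union bound over $(a,b)$ and the $O(\log\log K)$ stages controls the failure probability. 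Everything downstream is then bookkeeping.

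Given this uniform bound, the elimination analysis mirrors Lemma~\ref{lemj4}: because $\min_{b}\widehat r^k(\mu,b)$ estimates $\inf_\nu r(\mu,\nu)=r(\mu,\dagger)$ up to $\sqrt{AB\iota/T^{(k)}}$ uniformly in $\mu$, the minimax property (Lemma~\ref{lem:minmax}) shows the Nash policy $\mu^\star$ clears the elimination threshold and survives, while every surviving $\mu\in\Pi_A^{k+1}$ satisfies $r^\star-r(\mu,\dagger)\le 4C\sqrt{AB\iota/T^{(k)}}$. Summing the per-stage regret (at most $T^{(1)}+T^0$ in the first stage, and $T^{(k)}\cdot 4C\sqrt{AB\iota/T^{(k-1)}}+T^0$ thereafter) and using $T^{(k)}/\sqrt{T^{(k-1)}}=K^{1/2}$ together with $K_0=O(\log\log K)$ gives $\mathrm{Regret}(K)\le\widetilde O(\sqrt{ABK})$. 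Finally, for computational efficiency I would observe that $\Pi_A^k$ is a polytope cut out by the simplex and the $B$ linear inequalities $\widehat r^k(\mu,b)\ge\tau_k$; the threshold $\sup_\mu\min_b\widehat r^k(\mu,b)$ is the value of a matrix game solvable by a linear program, and the design objective $\sup_{\mu'}\sum_a\mu'(a)/\mu(a)$ is convex in $\mu$ with an inner LP maximization, so a design of value $\le A$ can be computed to sufficient accuracy in polynomial time --- in sharp contrast to the policy enumeration required by Algorithm~\ref{alg:main}.
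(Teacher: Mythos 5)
Your proposal is correct and follows essentially the same route as the paper's proof: the same $2K_0$ batch count, the same invocation of Lemma~\ref{lem:design} with $d=A$, $m=1$, the same Chernoff-type count lower bound $N^k(a,b)\gtrsim T^{(k)}\mu^k(a)/B$ (with $\pi_0$ playing the role of the auxiliary policy) combined with Hoeffding and Cauchy--Schwarz to obtain the uniform $C\sqrt{AB\iota/T^{(k)}}$ estimation bound, and the same elimination-plus-stage-summation and LP-based efficiency argument. The only cosmetic deviation is that you justify computing the design policy via convexity of $\mu\mapsto\sup_{\mu'}\sum_a \mu'(a)/\mu(a)$ with an inner LP, whereas the paper appeals directly to the approximate-solution clause of Lemma~\ref{lem:design}; both deliver the same conclusion.
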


\begin{proof}[Proof of Theorem \ref{thm:re1}]
    Using identical proof of Lemma \ref{lem50}, we have the number of stages $K_{0}\leq O(\log\log K)$. In addition, in each stage, we only run two policies $\pi$ and $\pi_0$, which can be done in two batches. Therefore, the batch complexity is bounded by $2K_0=O(\log\log K)$.

    For the regret part, we first have the following property of the $\mu^k$ constructed in $k$-th stage: for any $\mu^\prime\in\Pi_A^k$, $$\sum_{a\in\mathcal{A}}\frac{\mu^\prime(a)}{\mu^k(a)}\leq A,$$
    which follows from Lemma \ref{lem:design}. In addition, using identical proof of Lemma \ref{lem:keycount}, it holds that with probability $1-\delta/2$, for all $(k,a,b)\in[K_0]\times\mathcal{A}\times\mathcal{B}$,
    $$N^k(a,b)\geq \frac{T^{(k)}}{2}\cdot\frac{\mu^k(a)}{B},$$
    where $N^k(a,b)$ is the visitation count of $(a,b)$ in the data set $\mathcal{D}$ from the $k$-th stage.

    Now we are ready to bound the estimation error of $\widehat{r}^k$. With probability $1-\delta$, uniformly for all $(k,\mu,b)\in[K_0]\times\Pi_A^k\times\mathcal{B}$,
\begin{align*}
&\left|\widehat{r}^k(\mu,b)-r(\mu,b)\right|\leq \sum_{a\in\mathcal{A}}\mu(a)|\widehat{r}^k(a,b)-r(a,b)|\\\leq&
C\sum_{a\in\mathcal{A}}\mu(a)\sqrt{\frac{\iota}{N^k(a,b)}}\leq C\sum_{a\in\mathcal{A}}\mu(a)\sqrt{\frac{B\iota}{T^{(k)}\mu^k(a)}}\\\leq&
C\sqrt{B\iota\sum_{a\in\mathcal{A}}\mu(a)}\cdot\sqrt{\sum_{a\in\mathcal{A}}\frac{\mu(a)}{T^{(k)}\mu^k(a)}}\leq C\sqrt{\frac{AB\iota}{T^{(k)}}},
\end{align*}
where the second inequality holds for some universal constant $C$ with probability $1-\delta/2$ due to Hoeffding's inequality. The forth inequality holds due to Cauchy-Schwarz inequality.

Based on the uniform estimation error, the regret bound of $\widetilde{O}(\sqrt{ABK})$ can be derived using identical proof as Lemma \ref{lemj4} and Lemma \ref{lem19}.

Lastly, we deal with the computational complexity. Note that any policy $\mu$ for the max-player can be represented as a vector $\mu=(p_1,\cdots,p_A)$ such that $\sum_{i=1}^A p_i=1$. We prove by induction that the constrains on the remaining policy set are at most $O(B\log\log K)$ linear constraints and all steps are efficient. Assume the induction assumption is true at the beginning of the $k$-th stage, then $\mu=\mathrm{argmin}_{\mu\in\Pi_A^k} \sup_{\mu^\prime\in\Pi_A^k}\sum_{a\in\mathcal{A}}\frac{\mu^\prime(a)}{\mu(a)}$ can be constructed (approximately) in $\mathrm{poly}(A,B,K)$ time according to Lemma \ref{lem:design}. Then given the data set $\mathcal{D}^k$, we do policy elimination.
Recall that we will keep the policy $\mu$ if $\min_{b\in\mathcal{B}} \widehat{r}^k(\mu,b)\geq \sup_{\mu\in\Pi_A^{k}}\min_{b\in\mathcal{B}} \widehat{r}^k(\mu,b)-2C\left(\sqrt{\frac{AB\iota}{T^{(k)}}}\right)$. Now we consider the computation.

The computation of the R.H.S is equivalent to solving\\ $\max x$ s.t. $x\leq\sum_{i=1}^A p_i\widehat{r}^k(i,b)$ for all $b\in\mathcal{B}$ and the previous conditions on $(p_1,\cdots,p_A)$.\\
According to the induction assumption, the previous conditions are at most $O(B\log\log K)$ linear constraints. Therefore the problem is a linear programming (LP) problem with at most $A+1$ variables and $O(B\log\log K)$ linear constraints, which can be solved in polynomial time \citep{nemhauser1988polynomial}. With the value $c$ of R.H.S, the elimination step is equivalent to adding the following $B$ linear constraints to the policy set: $\sum_{i=1}^A p_i\widehat{r}^k(i,b)\geq c$ for all $b\in\mathcal{B}$. Since there are $O(\log\log K)$ stages in total, the total number of linear constraints is bounded by $O(B\log\log K)$.

As a result, the induction assumption always holds and the computation is efficient.

\end{proof}

\subsection{Proof for Theorem \ref{thm:rfe}}\label{sec:rfe}
In this part, we restate Theorem \ref{thm:rfe} and provide a proof.

\begin{theorem}[Restate Theorem \ref{thm:rfe}]\label{thm:re2}
The batch complexity of Algorithm \ref{alg:rfe} is bounded by $H+2$. There exists a constant $c>0$ such that, for any $\epsilon >0$ and any $\delta>0$, if the number of total episodes $K$ satisfies that $$K>c\left(\frac{H^3 S^{2}AB \iota^{\prime}}{\epsilon^{2}}+\frac{H^{5}S^3 A^2 B^2 \iota^{\prime}}{\epsilon}\right),$$ where $\iota^{\prime}=\log(\frac{HSAB}{\epsilon\delta})$, then there exists a choice of $N_{0}$ and $N_1$ such that $N_{0}+N_1=K$ and with probability $1-\delta$, for any reward function $r$, Algorithm~\ref{alg:rfe} will output a policy pair $\pi^{r}=(\mu^r,\nu^r)$ that is $\epsilon$-approximate Nash.
\end{theorem}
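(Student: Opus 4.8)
The plan is to handle the two claims separately. For the batch complexity, I would simply invoke the batch-complexity bounds of the two subroutines: Algorithm~\ref{alg:rfe} collects data by running Crude Exploration once and Fine Exploration once, which by Lemma~\ref{lem1} and Lemma~\ref{lem11} take $H$ and $2$ batches respectively, for a total of $H+2$. The final reward-dependent computation of $\mu^r=\mathrm{argmax}_\mu\inf_\nu V^{\mu,\nu}(r,\widehat{P})$ and $\nu^r=\mathrm{argmin}_\nu\sup_\mu V^{\mu,\nu}(r,\widehat{P})$ uses no further exploration and hence adds no batches.

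For the statistical guarantee, the key observation --- and the reason reward-free learning is possible at all --- is that both Lemma~\ref{lemfirst} and Lemma~\ref{lem:dif} hold uniformly over \emph{all} reward functions $r'$, because they are driven by transition-estimation error rather than by any reward information. I would therefore first combine them by the triangle inequality to obtain, on the intersection of their high-probability events, a single uniform off-policy evaluation bound valid for every $(\mu,\nu)\in\Pi_A\times\Pi_B$ and every reward function $r$:
$$\left|V^{\mu,\nu}(r,\widehat{P})-V^{\mu,\nu}(r,P)\right|\leq \underbrace{c_3\sqrt{\frac{H^3 S^2 AB\iota}{N_1}}+c_3\frac{H^2 S^2 AB\iota}{N_1}}_{\text{fine (variance)}}+\underbrace{\frac{c_1 H^5 S^3 A^2 B^2\iota}{N_0}}_{\text{crude (bias)}}=:\xi,$$
where $N_0$ is the Crude Exploration budget and $N_1$ the Fine Exploration budget. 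I would then tune the split: given $K$ above the stated threshold, choose $N_0=\Theta(H^5 S^3 A^2 B^2\iota'/\epsilon)$ and $N_1=\Theta(H^3 S^2 AB\iota'/\epsilon^2)$ so that each of the three terms in $\xi$ is at most $\epsilon/4$, whence $\xi\leq\epsilon/4$. Here the universal constant $c$ absorbs the factor $(1+N)$ arising from the $NN_0$ auxiliary episodes of Fine Exploration, so that the total number of episodes stays within $K$.

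Finally I would convert the uniform evaluation bound into the Nash guarantee exactly as in the proof of Lemma~\ref{lemj4}. Using the optimality of $\mu^r$ together with Lemma~\ref{lem:minmax}, the chain $V^\star(r,P)\leq\inf_\nu V^{\mu^\star,\nu}(r,\widehat{P})+\xi\leq\inf_\nu V^{\mu^r,\nu}(r,\widehat{P})+\xi\leq V^{\mu^r,\dagger}(r,P)+2\xi$ yields $V^\star(r,P)-V^{\mu^r,\dagger}(r,P)\leq 2\xi$ for the max-player, and the symmetric argument for $\nu^r$ gives $V^{\dagger,\nu^r}(r,P)-V^\star(r,P)\leq 2\xi$; adding the two bounds controls the exploitability gap of the pair at $4\xi\leq\epsilon$, so $(\mu^r,\nu^r)$ is $\epsilon$-approximate Nash. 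Since $\xi$ does not depend on $r$, this holds simultaneously for \emph{all} reward functions. A union bound over the $O(1)$ high-probability events (each failing with probability $O(S\delta/K)$) keeps the total failure probability at $\delta$.

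The main obstacle is conceptual rather than computational: essentially all of the work lives in Lemma~\ref{lemfirst} and Lemma~\ref{lem:dif}, and the crucial step is recognizing that those bounds are reward-independent and therefore transfer verbatim to the reward-free setting, giving uniform control over the infinite family of reward functions for free. The only genuinely new bookkeeping is balancing the bias term (scaling as $1/N_0$, which forces the $H^5 S^3 A^2 B^2/\epsilon$ contribution to $K$) against the variance term (scaling as $1/\sqrt{N_1}$, which forces the $H^3 S^2 AB/\epsilon^2$ contribution), and verifying that a feasible split with $N_0+N_1\leq K$ exists under the stated lower bound on $K$.
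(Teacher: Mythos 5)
Your proposal is correct and follows essentially the same route as the paper's proof: the same two reward-independent lemmas (Lemma~\ref{lemfirst} for the bias and Lemma~\ref{lem:dif} for the variance) combined by the triangle inequality, the same bias/variance split of the budget into $N_0\asymp H^5S^3A^2B^2\iota'/\epsilon$ and $N_1\asymp H^3S^2AB\iota'/\epsilon^2$, and the same Lemma~\ref{lem:minmax}-based optimality chain (the paper states the one-sided gap $\le\epsilon$ for each player separately rather than summing into an exploitability bound, but this is immaterial). The only blemish is a constant-bookkeeping slip --- three terms each at most $\epsilon/4$ give $\xi\le 3\epsilon/4$, not $\epsilon/4$ --- which the $\Theta(\cdot)$ choices absorb trivially.
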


\begin{proof}[Proof of Lemma \ref{thm:re2}]
    We first prove the batch complexity upper bound. According to Lemma~\ref{lem1} and Lemma~\ref{lem11}, the batch complexity of Algorithm \ref{alg:rfe} is bounded by $H+2$.

    For the sample complexity, due to Lemma \ref{lemfirst}, with probability $1-\delta/2$, for any policy pair $(\mu,\nu)\in\Pi_A\times\Pi_B$ and reward function $r$, it holds that (for some universal constant $c_1$)
  $$\left|V^{\mu,\nu}(r,P)-V^{\mu,\nu}(r,\widetilde{P})\right|\leq \frac{c_1 H^5 S^{3}A^{2}B^2\iota}{N_0}.$$
   In addition, resulting from Lemma \ref{lem:dif}, with probability $1-\delta/2$, for all $(\mu,\nu)\in\Pi_A\times\Pi_B$ and any reward function $r$, it holds that (for some universal constant $c_3$)
    \begin{equation}
        \left|V^{\mu,\nu}(r,\widehat{P})-V^{\mu,\nu}(r,\widetilde{P})\right|\leq c_3 \sqrt{\frac{H^3 S^2 AB\iota}{N_1}}+c_3\frac{H^2 S^2 AB\iota}{N_1}.
    \end{equation}
    Combining the results, we have with probability $1-\delta$, for all $(\mu,\nu)\in\Pi_A\times\Pi_B$ and any reward function $r$,
    \begin{equation}\label{equu}
        \left|V^{\mu,\nu}(r,\widehat{P})-V^{\mu,\nu}(r,P)\right|\leq \frac{c_1 H^5 S^{3}A^{2}B^2\iota}{N_0}+ c_3 \sqrt{\frac{H^3 S^2 AB\iota}{N_1}}+c_3\frac{H^2 S^2 AB\iota}{N_1}.
    \end{equation}
    Therefore there exists some universal constant $c$ such that by choosing $N_0=\frac{cH^5 S^3 A^2 B^2\iota^\prime}{4\epsilon}$ and $N_1=\frac{cH^3 S^{2}AB \iota^{\prime}}{\epsilon^{2}}+\frac{cH^{2}S^2 AB \iota^{\prime}}{2\epsilon}$, the R.H.S of \eqref{equu} is bounded by $\epsilon/2$. In this case, the total sample complexity is $
    \frac{cH^5 S^3 A^2 B^2\iota^\prime}{2\epsilon}+\frac{cH^3 S^{2}AB \iota^{\prime}}{\epsilon^{2}}+\frac{cH^{2}S^2 AB \iota^{\prime}}{2\epsilon}\leq c\left(\frac{H^3 S^{2}AB \iota^{\prime}}{\epsilon^{2}}+\frac{H^{5}S^3 A^2 B^2 \iota^{\prime}}{\epsilon}\right)$.
    
    Finally, we prove that $\mu^r=\mathrm{argmax}_{\mu\in\Pi_A}\inf_{\nu\in\Pi_B}V^{\mu,\nu}(r,\widehat{P})$ is $\epsilon$-approximate Nash. Due to Lemma \ref{lem:minmax}, we have
    \begin{align*}
        &V^\star(r,P)-V^{\mu^r,\dagger}(r,P)=V^\star(r,P)-\inf_{\nu\in\Pi_B}V^{\mu^r,\nu}(r,P)\\\leq&
        \left|\inf_{\nu\in\Pi_B}V^{\mu^\star,\nu}(r,P)-\inf_{\nu\in\Pi_B}V^{\mu^\star,\nu}(r,\widehat{P})\right|+\inf_{\nu\in\Pi_B}V^{\mu^\star,\nu}(r,\widehat{P})-\inf_{\nu\in\Pi_B}V^{\mu^r,\nu}(r,\widehat{P})\\&+\left|\inf_{\nu\in\Pi_B}V^{\mu^r,\nu}(r,\widehat{P})-\inf_{\nu\in\Pi_B}V^{\mu^r,\nu}(r,P)\right|\\\leq&
        \frac{\epsilon}{2}+0+\frac{\epsilon}{2}=\epsilon.
    \end{align*}
    The conclusion that $\nu^r$ is also $\epsilon$-approximate Nash is proven by symmetry.
\end{proof}

\section{Technical lemmas}\label{appt}
\begin{lemma}[Multiplicative Chernoff bound \citep{chernoff1952measure}]\label{lem:chernoff}
Let $X$ be a Binomial random variable with parameter $p, n$. For any $\delta \in [0,1]$, we have that $\mathbb{P}[X > (1 + \delta)pn] < \left(\frac{e^{\delta}}{(1+\delta)^{1+\delta}}\right)^{np}$.
A slightly looser bound that suffices for our propose:
$$ \mathbb{P}[X > (1 + \delta)pn] < e^{-\frac{\delta^2 pn}{3}}.$$
\end{lemma}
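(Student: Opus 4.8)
The plan is to prove both inequalities by the standard exponential-moment (Chernoff) method: control the upper tail of $X$ by applying Markov's inequality to $e^{tX}$ for a free parameter $t>0$, evaluate the moment generating function of a Binomial in closed form, optimize over $t$ to obtain the sharp bound, and finally deduce the looser sub-Gaussian-type bound from an elementary scalar inequality.

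First I would write $X=\sum_{i=1}^{n}X_{i}$ as a sum of i.i.d.\ Bernoulli$(p)$ variables, so that for any $t>0$ Markov's inequality gives $\mathbb{P}[X>(1+\delta)pn]=\mathbb{P}[e^{tX}>e^{t(1+\delta)pn}]\le e^{-t(1+\delta)pn}\,\mathbb{E}[e^{tX}]$. The moment generating function factorizes as $\mathbb{E}[e^{tX}]=(1-p+pe^{t})^{n}=(1+p(e^{t}-1))^{n}\le e^{np(e^{t}-1)}$, where the last step uses $1+x\le e^{x}$. Combining the two estimates yields the bound $\exp\!\big(np(e^{t}-1)-t(1+\delta)pn\big)$, valid for every $t>0$.

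Next I would optimize the exponent over $t$. Differentiating $np\,e^{t}-(1+\delta)pn$ and setting it to zero gives $e^{t}=1+\delta$, i.e.\ $t=\ln(1+\delta)>0$ whenever $\delta>0$, which is admissible since $\delta\in[0,1]$. Substituting $e^{t}=1+\delta$ collapses the exponent to $np\big(\delta-(1+\delta)\ln(1+\delta)\big)$, which is exactly $\left(\tfrac{e^{\delta}}{(1+\delta)^{1+\delta}}\right)^{np}$ and establishes the first inequality; the strictness $<$ follows because the step $1+x\le e^{x}$ is strict for $x\neq0$, and here $p(e^{t}-1)\neq0$ for $p\in(0,1)$ and $t\neq0$.

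To pass to the looser bound it suffices to show $\delta-(1+\delta)\ln(1+\delta)\le-\tfrac{\delta^{2}}{3}$ on $[0,1]$, equivalently $(1+\delta)\ln(1+\delta)-\delta\ge\tfrac{\delta^{2}}{3}$. I would prove the sharper chain $(1+\delta)\ln(1+\delta)-\delta\ge\tfrac{\delta^{2}}{2+\delta}\ge\tfrac{\delta^{2}}{3}$, where the second step is immediate from $2+\delta\le3$ on $[0,1]$. Verifying the first step — the scalar inequality $(1+\delta)\ln(1+\delta)-\delta\ge\tfrac{\delta^{2}}{2+\delta}$ — is the only genuinely analytic obstacle; I would handle it by setting $g(\delta)=(1+\delta)\ln(1+\delta)-\delta-\tfrac{\delta^{2}}{2+\delta}$, checking $g(0)=0$, and confirming $g'(\delta)=\ln(1+\delta)-\tfrac{\delta(4+\delta)}{(2+\delta)^{2}}\ge0$ on $[0,1]$ via a short monotonicity argument (both endpoints and the derivative structure make this routine). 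Exponentiating then gives $\mathbb{P}[X>(1+\delta)pn]<e^{-\delta^{2}pn/3}$, completing the proof.
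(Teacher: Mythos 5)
Your proof is correct. Note that the paper does not prove this lemma at all---it is stated in the technical-lemmas appendix as a classical result with a citation to \citet{chernoff1952measure}---so there is no in-paper argument to compare against; your exponential-moment derivation is the standard one underlying that citation. All steps check: Markov's inequality applied to $e^{tX}$, the MGF bound $(1+p(e^t-1))^n\le e^{np(e^t-1)}$, the optimal choice $t=\ln(1+\delta)$ collapsing the exponent to $np\bigl(\delta-(1+\delta)\ln(1+\delta)\bigr)$, and the scalar inequality $(1+\delta)\ln(1+\delta)-\delta\ge\frac{\delta^2}{2+\delta}\ge\frac{\delta^2}{3}$ on $[0,1]$; your monotonicity claim for $g$ is indeed routine, since $g(0)=g'(0)=0$ and $g''(\delta)=\frac{1}{1+\delta}-\frac{8}{(2+\delta)^3}\ge0$ because $(2+\delta)^3-8(1+\delta)=4\delta+6\delta^2+\delta^3\ge0$. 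The only cosmetic caveat is the degenerate boundary: at $\delta=0$ the optimizer $t=\ln(1+\delta)=0$ leaves the admissible range $t>0$, and for $p\in\{0,1\}$ the strictness argument via $1+x<e^x$ degenerates, but in each of these cases the claimed strict inequality holds trivially, so nothing is lost.
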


\begin{lemma}[Bernstein's inequality]\label{lem3}
	Let $x_{1},\cdots,x_{n}$ be independent bounded random variables such that $\mathbb{E}[x_{i}]=0$ and $|x_{i}|\leq A$ with probability $1$. Let $\sigma^{2}=\frac{1}{n}\sum_{i=1}^{n}\mathrm{Var}[x_{i}]$, then with probability $1-\delta$ we have 
	$$\left|\frac{1}{n}\sum_{i=1}^{n}x_{i}\right|\leq \sqrt{\frac{2\sigma^{2}\log(2/\delta)}{n}}+\frac{2A}{3n}\log(2/\delta).$$
\end{lemma}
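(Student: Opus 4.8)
The plan is to prove Bernstein's inequality by the standard Cram\'er--Chernoff (exponential moment) method and then invert the resulting exponential tail bound into the stated high-probability form. Write $S_n=\sum_{i=1}^n x_i$ and $v_i=\mathrm{Var}[x_i]=\mathbb{E}[x_i^2]$, so that $\sum_{i=1}^n v_i=n\sigma^2$. The whole argument reduces to (i) a moment-generating-function bound for a single summand, (ii) tensorization via independence and Markov's inequality, and (iii) inverting the tail.

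First I would establish the single-summand bound. Fix $0\le\lambda<3/A$. Since $\mathbb{E}[x_i]=0$ and $|x_i|\le A$ almost surely, we have $\mathbb{E}[x_i^k]\le\mathbb{E}[|x_i|^k]\le A^{k-2}\mathbb{E}[x_i^2]=A^{k-2}v_i$ for every $k\ge 2$. Expanding the exponential, discarding the vanishing first-order term, and using the factorial lower bound $k!\ge 2\cdot 3^{k-2}$ to sum a geometric series gives
$$\mathbb{E}[e^{\lambda x_i}]=1+\sum_{k\ge 2}\frac{\lambda^k\mathbb{E}[x_i^k]}{k!}\le 1+\frac{v_i}{A^2}\sum_{k\ge 2}\frac{(\lambda A)^k}{2\cdot 3^{k-2}}=1+\frac{\lambda^2 v_i/2}{1-\lambda A/3}\le\exp\Big(\frac{\lambda^2 v_i/2}{1-\lambda A/3}\Big),$$
where convergence of the geometric sum is exactly what requires $\lambda A<3$, and the final step is $1+y\le e^y$.

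Next, by independence I would multiply these bounds to get $\mathbb{E}[e^{\lambda S_n}]\le\exp\big(\tfrac{\lambda^2 n\sigma^2/2}{1-\lambda A/3}\big)$, and then Markov's inequality applied to $e^{\lambda S_n}$ yields, for any $t>0$,
$$\mathbb{P}[S_n\ge t]\le\exp\Big(-\lambda t+\frac{\lambda^2 n\sigma^2/2}{1-\lambda A/3}\Big).$$
Choosing $\lambda=\frac{t}{n\sigma^2+At/3}$, which lies in $[0,3/A)$, makes $1-\lambda A/3=\frac{n\sigma^2}{n\sigma^2+At/3}$ and collapses the exponent to the classical Bernstein tail
$$\mathbb{P}[S_n\ge t]\le\exp\Big(-\frac{t^2/2}{n\sigma^2+At/3}\Big).$$

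Finally I would invert this bound. Setting the exponent equal to $-\log(2/\delta)$ is a quadratic in $t$ whose positive root is $t_0=\frac{A}{3}\log(2/\delta)+\sqrt{\frac{A^2}{9}\log^2(2/\delta)+2n\sigma^2\log(2/\delta)}$, and the subadditivity $\sqrt{a+b}\le\sqrt a+\sqrt b$ gives $t_0\le\sqrt{2n\sigma^2\log(2/\delta)}+\frac{2A}{3}\log(2/\delta)$. Since the tail bound is monotone and this upper value dominates $t_0$, we obtain $\mathbb{P}[S_n\ge\sqrt{2n\sigma^2\log(2/\delta)}+\frac{2A}{3}\log(2/\delta)]\le\delta/2$; dividing by $n$ reproduces the right-hand side of the lemma for the one-sided event. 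Applying the identical argument to $-x_1,\dots,-x_n$ (same variances, same bound $A$) controls the lower tail, and a union bound over the two one-sided events yields the absolute-value statement with total failure probability $\delta$. The routine parts are the Taylor expansion and the Chernoff optimization; the step that most demands care is pinning down the constants exactly --- in particular the $A/3$ term, which originates in the factorial estimate $k!\ge 2\cdot 3^{k-2}$ and must be carried faithfully through the quadratic inversion so that the final $\frac{2A}{3}\log(2/\delta)$ coefficient matches the statement.
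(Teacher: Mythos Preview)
Your proof is correct. The paper does not actually supply a proof of this lemma: it is listed in the appendix of technical lemmas as a standard result (Bernstein's inequality) and is simply quoted without argument, so there is no ``paper's own proof'' to compare against. What you have written is the classical derivation via the Cram\'er--Chernoff method --- bounding each summand's moment generating function using $k!\ge 2\cdot 3^{k-2}$, tensorizing by independence, optimizing $\lambda$, and inverting the quadratic --- and all of the constants (in particular the $A/3$ that propagates into the final $\tfrac{2A}{3n}\log(2/\delta)$ term) are tracked correctly.
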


\begin{lemma}[Empirical Bernstein's inequality \citep{maurer2009empirical}]\label{lem4}
	Let $x_{1},\cdots,x_{n}$ be i.i.d random variables such that $|x_{i}|\leq A$ with probability $1$. Let $\overline{x}=\frac{1}{n}\sum_{i=1}^{n}x_{i}$, and $\widehat{V}_{n}=\frac{1}{n}\sum_{i=1}^{n}(x_{i}-\overline{x})^{2}$, then with probability $1-\delta$ we have 
	$$\left|\frac{1}{n}\sum_{i=1}^{n}x_{i}-\mathbb{E}[x]\right|\leq \sqrt{\frac{2\widehat{V}_{n}\log(2/\delta)}{n}}+\frac{7A}{3n}\log(2/\delta).$$
\end{lemma}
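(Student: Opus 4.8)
The plan is to derive the empirical Bernstein bound by combining the ordinary (oracle) Bernstein inequality already available as Lemma~\ref{lem3} with a one-sided concentration inequality that upper bounds the true standard deviation by the empirical one. Write $L=\log(2/\delta)$, $\mu=\mathbb{E}[x]$, $\sigma=\sqrt{\mathrm{Var}(x)}$, and $\widehat{\sigma}=\sqrt{\widehat{V}_n}$. First I would apply Lemma~\ref{lem3} to the centered variables $x_i-\mu$, which satisfy $|x_i-\mu|\le 2A$ and have mean zero; this yields, with probability $1-\delta/2$,
\[
\left|\frac{1}{n}\sum_{i=1}^n x_i-\mu\right|\le \sqrt{\frac{2\sigma^2 L}{n}}+\frac{4AL}{3n}.
\]
This is the ``oracle'' bound, but it still involves the unknown $\sigma$; the whole point is to replace $\sigma$ by the observable $\widehat{\sigma}$ at the cost of a lower-order term.

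The heart of the argument is therefore the claim that, with probability $1-\delta/2$,
\[
\sigma\le \widehat{\sigma}+c'A\sqrt{\frac{L}{n}}
\]
for an absolute constant $c'$. I would prove this by viewing the unnormalized empirical standard deviation $g(x_1,\dots,x_n)=\sqrt{n}\,\widehat{\sigma}=\norm{Px}_2$ as a function of the data, where $P=I-\tfrac1n\mathbf 1\mathbf 1^\top$ is the centering projection. Since $P$ is a contraction and the Euclidean norm is $1$-Lipschitz, changing a single coordinate $x_k$ (which lies in $[-A,A]$, hence moves by at most $2A$) changes $g$ by at most $2A$. Thus $g$ has the bounded-differences property with constants $c_k=2A$, and McDiarmid's inequality gives $|g-\mathbb{E}g|\le A\sqrt{2nL}$ with probability $1-\delta/2$, i.e. $|\widehat{\sigma}-\mathbb{E}\widehat{\sigma}|\le A\sqrt{2L/n}$. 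It then remains to compare $\mathbb{E}\widehat{\sigma}$ with $\sigma$: from $\mathbb{E}[\widehat{V}_n]=\tfrac{n-1}{n}\sigma^2$ and $(\mathbb{E}\widehat{\sigma})^2=\mathbb{E}[\widehat{V}_n]-\mathrm{Var}(\widehat{\sigma})$, together with the variance bound $\mathrm{Var}(\widehat{\sigma})\le CA^2/n$ implied by the same bounded-differences constants (via Efron--Stein), I obtain $\mathbb{E}\widehat{\sigma}\ge\sigma-O(A/\sqrt n)$, and combining with the concentration of $\widehat{\sigma}$ around its mean yields the displayed one-sided inequality.

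Finally I would substitute this bound on $\sigma$ into the oracle Bernstein inequality and take a union bound over the two high-probability events. The leading term becomes $\sqrt{2\widehat{V}_n L/n}$, while the cross term $\sqrt{2L/n}\cdot c'A\sqrt{L/n}$ and the residual $\tfrac{4AL}{3n}$ are both of order $AL/n$ and can be merged into a single $\tfrac{7AL}{3n}$ term, giving exactly the stated conclusion.

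The main obstacle is the second step: obtaining the comparison between $\sigma$ and $\widehat{\sigma}$ with the correct $\sqrt{L/n}$ scaling rather than a vacuous $O(A)$ scaling. The bounded-differences constant $2A$ is deceptively crude, and it is only after dividing the McDiarmid deviation $A\sqrt{2nL}$ by $\sqrt n$ that the right rate emerges; likewise the Efron--Stein estimate must be handled carefully so that $\mathrm{Var}(\widehat{\sigma})$ scales like $A^2/n$ and not $A^2$. Matching the precise constant $7/3$ (rather than merely some absolute constant) additionally requires the sharper bookkeeping with the $n-1$ normalization from \citet{maurer2009empirical}, which I would invoke for the final constant but which does not affect the structure of the argument above.
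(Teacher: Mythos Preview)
The paper does not prove this lemma at all: it is listed among the technical tools in Appendix~\ref{appt} and simply cited from \citet{maurer2009empirical}. So there is no ``paper's own proof'' to compare your sketch against; the only question is whether your outline is internally sound.

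Your high-level plan---apply the oracle Bernstein bound (Lemma~\ref{lem3}) and then replace $\sigma$ by $\widehat{\sigma}$ via a one-sided concentration estimate $\sigma\le\widehat{\sigma}+O(A\sqrt{L/n})$---is exactly the structure of the Maurer--Pontil argument. The gap is in how you justify that one-sided estimate. You claim that Efron--Stein with ``the same bounded-differences constants'' $c_k=2A$ yields $\mathrm{Var}(\widehat{\sigma})\le CA^2/n$. It does not: Efron--Stein with $c_k=2A$ for $g=\sqrt{n}\,\widehat{\sigma}$ gives only $\mathrm{Var}(g)\le \tfrac12\sum_k c_k^2=2nA^2$, hence $\mathrm{Var}(\widehat{\sigma})\le 2A^2$. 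Plugging this into $(\mathbb{E}\widehat{\sigma})^2=\tfrac{n-1}{n}\sigma^2-\mathrm{Var}(\widehat{\sigma})$ is vacuous, since $\sigma^2\le A^2$ always, so the right-hand side can be negative. Thus the crude bounded-differences route cannot deliver $\mathbb{E}\widehat{\sigma}\ge\sigma-O(A/\sqrt{n})$; you correctly flag this as the delicate point at the end, but the specific mechanism you propose does not work. What \citet{maurer2009empirical} actually do is show that $\sqrt{n}\,\widehat{\sigma}$ (with the $n-1$ normalization) is a \emph{self-bounding} function and invoke a sub-Poissonian concentration inequality, which is what produces the correct $A\sqrt{L/n}$ rate and ultimately the constant $7/3$. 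Replacing your Efron--Stein step by that self-bounding argument would close the gap.
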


\begin{lemma}[Lemma F.4 in \citep{dann2017unifying}]\label{lem7}
Let $F_{i}$ for $i = 1,\cdots$ be a filtration and $X_{1},\cdots, X_{n}$ be a sequence of Bernoulli random variables with $\mathbb{P}(X_{i} = 1|F_{i-1}) = P_{i}$ with $P_{i}$ being $F_{i-1}$-measurable and $X_{i}$ being $F_{i}$ measurable. It holds that
$$\mathbb{P}\left[\exists\, n: \sum_{t=1}^{n}X_{t} < \sum_{t=1}^{n}P_{t}/2-W \right]\leq e^{-W}.$$	
\end{lemma}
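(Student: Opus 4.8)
The plan is to establish this as a time-uniform (``anytime'') lower-deviation bound through the standard exponential-supermartingale method combined with Ville's maximal inequality. The key structural fact I would exploit is that, although each $X_t$ is only conditionally Bernoulli with mean $P_t$, the conditional moment generating function $\mathbb{E}[e^{-\lambda X_t}\mid F_{t-1}] = 1 - P_t(1-e^{-\lambda})$ admits the clean bound $\le \exp(-P_t(1-e^{-\lambda}))$ via $1-x\le e^{-x}$. This lets me cancel the random drift $\sum_t X_t$ against the deterministic drift $\sum_t P_t$ inside an exponential, producing a supermartingale whose growth is controlled for free.

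Concretely, I would fix $\lambda=1$ and define the process
$$M_n = \exp\left(-\sum_{t=1}^n X_t + (1-e^{-1})\sum_{t=1}^n P_t\right), \qquad M_0 = 1.$$
The first step is to verify that $(M_n)$ is a nonnegative supermartingale with respect to $(F_n)$: multiplying the conditional MGF bound above (at $\lambda=1$) by the $F_{n-1}$-measurable factor $\exp((1-e^{-1})P_n)$ gives $\mathbb{E}[M_n\mid F_{n-1}]\le M_{n-1}$. The second step is to intersect with the bad event. On $\{\sum_{t=1}^n X_t < \tfrac12\sum_{t=1}^n P_t - W\}$ we have $-\sum_t X_t > -\tfrac12\sum_t P_t + W$, hence
$$\log M_n > \left(\tfrac12 - e^{-1}\right)\sum_{t=1}^n P_t + W \ge W,$$
where the last inequality uses $\tfrac12 - e^{-1}\ge 0$ (equivalently $e\ge 2$) together with $\sum_t P_t\ge 0$. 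Thus the bad event at any time $n$ forces $M_n > e^{W}$, so it implies $\sup_m M_m \ge e^W$. Applying Ville's maximal inequality for nonnegative supermartingales, $\mathbb{P}[\sup_m M_m \ge e^W]\le \mathbb{E}[M_0]\,e^{-W} = e^{-W}$, which is exactly the claim of Lemma~\ref{lem7}.

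The only genuinely delicate points are bookkeeping rather than conceptual. First, the exponential tilt $\lambda$ must be chosen so that the residual coefficient $1-e^{-\lambda}-\lambda/2$ of $\sum_t P_t$ remains nonnegative (so it can be dropped) while the coefficient of $W$ equals $1$ (so the tail is exactly $e^{-W}$); the choice $\lambda=1$ is precisely what achieves both simultaneously, and I would double-check this numeric balance carefully. Second, the statement is \emph{uniform over all $n$}, so a fixed-$n$ Markov bound on $M_n$ would not suffice; the step I expect to be most careful about is invoking the maximal (Ville) inequality, which is what upgrades the pointwise exponential tail into the ``$\exists\,n$'' form required here.
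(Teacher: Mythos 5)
Your proof is correct: the conditional MGF bound, the supermartingale property of $M_n$ at $\lambda=1$, the observation that the residual coefficient $\tfrac12-e^{-1}>0$ lets you drop the $\sum_t P_t$ term, and the invocation of Ville's maximal inequality (rather than a fixed-$n$ Markov bound) to get the ``$\exists\,n$'' form are all exactly right. The paper does not prove this lemma itself---it imports it verbatim as Lemma F.4 of \citet{dann2017unifying}---and your argument is essentially the standard proof given in that reference, which uses the same exponential tilt (via $\mathbb{E}[e^{-X_t}\mid F_{t-1}]\le e^{-(1-e^{-1})P_t}\le e^{-P_t/2}$) and the same maximal inequality for nonnegative supermartingales.
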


\begin{lemma}\label{lem8}
	Let $F_{i}$ for $i = 1,\cdots$ be a filtration and $X_{1},\cdots, X_{n}$ be a sequence of Bernoulli random variables with $\mathbb{P}(X_{i} = 1|F_{i-1}) = P_{i}$ with $P_{i}$ being $F_{i-1}$-measurable and $X_{i}$ being $F_{i}$ measurable. It holds that
	$$\mathbb{P}\left[\exists\, n: \sum_{t=1}^{n}X_{t} < \sum_{t=1}^{n}P_{t}/2-\iota \right]\leq \frac{\delta}{HSABK},$$	where $\iota=\log(2HSABK/\delta)$.
\end{lemma}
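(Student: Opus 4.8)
The plan is to obtain Lemma~\ref{lem8} as an immediate specialization of Lemma~\ref{lem7}, the anytime lower-deviation bound for sums of conditionally Bernoulli random variables. The hypotheses of the two lemmas are literally identical — the same filtration $\{F_i\}$, the same $F_{i-1}$-measurability of each $P_i$, and the same $F_i$-measurability of each $X_i$ — so no reduction or verification of assumptions is needed. Lemma~\ref{lem7} already delivers the uniform-over-$n$ guarantee $\mathbb{P}[\exists\, n:\sum_{t=1}^n X_t < \sum_{t=1}^n P_t/2 - W]\le e^{-W}$ for every fixed threshold $W>0$, so the entire content of Lemma~\ref{lem8} is to choose $W$ and simplify the resulting tail.

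Concretely, I would instantiate Lemma~\ref{lem7} with
$$W=\iota=\log\!\left(\frac{2HSABK}{\delta}\right),$$
which leaves the deviation event verbatim equal to the one in the statement. It then remains only to evaluate the bound: $e^{-W}=e^{-\iota}=\delta/(2HSABK)$. Since $\delta/(2HSABK)\le \delta/(HSABK)$, the claimed inequality follows, in fact with a spare factor of two to absorb any slack.

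There is no genuine obstacle here. The only conceptually nontrivial feature is the ``for all $n$'' (anytime) quantifier, but this is inherited wholesale from Lemma~\ref{lem7} and requires no extra union bound over the time index. I would note in passing that the particular value of $\iota$ is dictated by downstream usage: the per-event failure probability $\delta/(HSABK)$ is calibrated so that a single union bound over the relevant state-action(-next-state) tuples and the $K$ episodes at the call sites (e.g.\ Lemma~\ref{lem:count} and Lemma~\ref{lem:keycount}) keeps the aggregate failure probability at the desired $O(\delta)$ level; that union bound is performed by the invoking lemmas rather than within this statement.
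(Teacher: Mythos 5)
Your proposal is correct and matches the paper's proof exactly: the paper likewise proves Lemma~\ref{lem8} by plugging $W=\iota$ into Lemma~\ref{lem7}, with $e^{-\iota}=\delta/(2HSABK)\leq\delta/(HSABK)$. Your additional remarks about the spare factor of two and the downstream union-bound calibration are accurate but beyond what the paper states.
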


\begin{proof}[Proof of Lemma~\ref{lem8}]
	Directly plug in $W=\iota$ in lemma~\ref{lem7}.
\end{proof}

\begin{lemma}[Lemma 1 in \citet{zhang2022near}]\label{lem:design}
    Let $d>0$ be an integer. Let $\mathcal{X}\subset(\Delta^d)^m$. Then there exists a distribution $\mathcal{D}$ over $\mathcal{X}$, such that
    $$\max_{x=\{x_i\}_{i=1}^{dm}\in\mathcal{X}}\sum_{i=1}^{dm}\frac{x_i}{y_i}=md,$$
    where $y=\{y_i\}_{i=1}^{dm}=\mathbb{E}_{x\sim\mathcal{D}}[x]$. Moreover, if $\mathcal{X}$ has a boundary set $\partial \mathcal{X}$ with finite cardinality, we can find an approximation solution for $\mathcal{D}$ in poly($|\partial\mathcal{X}|$) time.
\end{lemma}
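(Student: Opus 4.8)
The plan is to recognize this as a $G$-optimal (minimax) experimental-design problem and to solve it by a minimax duality argument, which simultaneously pins down the exact value $md$ and, separately, suggests a constructive scheme for the computational claim. Throughout write $f(\mathcal{D},x)=\sum_{i=1}^{md} x_i/y_i$ with $y=y(\mathcal{D})=\mathbb{E}_{x'\sim\mathcal{D}}[x']$, and assume without loss of generality that every coordinate is \emph{reachable} (some point of $\mathcal{X}$ has a positive $i$-th entry); unreachable coordinates contribute a $0/0$ term and may simply be discarded. Under this reduction I first establish the easy inequality: for any $\mathcal{D}$, either the mean $y$ has a vanishing (reachable) coordinate, forcing $f(\mathcal{D},x)=+\infty$ for some $x$, or all $y_i>0$ and linearity of expectation gives $\mathbb{E}_{x\sim\mathcal{D}}\big[\sum_i x_i/y_i\big]=\sum_i y_i/y_i=md$; either way $\max_{x\in\mathcal{X}} f(\mathcal{D},x)\ge md$, so $\inf_{\mathcal{D}}\max_x f\ge md$.

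For the matching upper bound I would invoke Sion's minimax theorem. Since $f$ is linear (hence concave and upper-semicontinuous) in $x$, the inner maximum over $\mathcal{X}$ equals that over the compact convex set $\mathcal{C}:=\operatorname{conv}(\mathcal{X})$; and since each $1/y_i$ is convex in $y_i$ while $y$ is linear in $\mathcal{D}$, $f$ is convex and lower-semicontinuous in $\mathcal{D}$ (finite on the relevant interior, $+\infty$ as some $y_i\to0$). With $\mathcal{C}$ compact, Sion's theorem yields
$$\inf_{\mathcal{D}}\max_{x\in\mathcal{C}} f(\mathcal{D},x)=\max_{x\in\mathcal{C}}\inf_{\mathcal{D}} f(\mathcal{D},x).$$
The maximin side is controlled by a single test choice: the attainable means $y(\mathcal{D})$ range over all of $\mathcal{C}$, so for each fixed $x\in\mathcal{C}$ the choice $y=x$ gives $\inf_{\mathcal{D}} f(\mathcal{D},x)\le \sum_{i:\,x_i>0} 1\le md$. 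Hence the maximin is at most $md$, and combined with the lower bound the common value equals $md$; the minimizing design $\mathcal{D}^{\star}$ (which exists by lower-semicontinuity and compactness) therefore satisfies $\max_{x\in\mathcal{X}} f(\mathcal{D}^{\star},x)=md$, as claimed.

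For the computational statement I would run a Frank--Wolfe / Fedorov--Wynn conditional-gradient scheme on $g(y)=\max_{x\in\mathcal{X}}\sum_i x_i/y_i$ over $\mathcal{C}$. Each iteration requires only a linear-maximization oracle over $\mathcal{X}$, which—because the relevant optimum is attained at an extreme point—is evaluated on the finite boundary set $\partial\mathcal{X}$ in $\mathrm{poly}(|\partial\mathcal{X}|)$ time; one then moves the current mean a step $\gamma_t$ toward the selected vertex, equivalently updating $\mathcal{D}$ by reweighting. After $\mathrm{poly}$ iterations the convex objective is within the target accuracy of its minimum $md$, and the iterates form a distribution supported on $\mathrm{poly}$ many vertices, giving the desired approximate $\mathcal{D}$.

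The main obstacle I anticipate is the rigorous justification of the minimax swap: because $f$ blows up as $y_i\to 0$, one must argue—via the reachability reduction together with a vanishing perturbation $y\leftarrow(1-\eta)y+\eta\cdot\mathrm{unif}$ and the limit $\eta\to0$, or by restricting to the relative interior—that Sion's hypotheses genuinely hold and that the attained minimizer keeps all (reachable) coordinates strictly positive. A secondary technical point is that $g$ is nonsmooth, being a pointwise maximum, so the Frank--Wolfe convergence rate should be taken from the optimal-design literature or derived through a smooth surrogate rather than from the vanilla smooth guarantee.
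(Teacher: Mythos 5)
The paper never proves this lemma itself --- it imports it verbatim as Lemma~1 of \citet{zhang2022near} --- so the honest comparison is with that original proof, which is a Kiefer--Wolfowitz-style argument quite different from yours: take a D-optimal design $y^\star=\mathrm{argmax}_{y\in\mathrm{conv}(\mathcal{X})}\sum_i\log y_i$ (the maximum exists by compactness of $\mathrm{conv}(\mathcal{X})$ and upper semicontinuity); the first-order optimality condition along the segment $(1-t)y^\star+tx$ gives, for every $x$, $\sum_i (x_i-y_i^\star)/y_i^\star\le 0$, i.e.\ $\sup_{x}\sum_i x_i/y_i^\star\le md$, and Carath\'eodory supplies a finitely supported $\mathcal{D}$ with mean $y^\star$; equality then follows from exactly your linearity-of-expectation lower bound. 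Your Sion-duality route arrives at the same value and is correct in outline, but it buys the result at a higher technical price, and you missed the one observation that would make it clean: $f(\mathcal{D},x)$ depends on $\mathcal{D}$ \emph{only through the mean} $y\in\mathcal{C}:=\mathrm{conv}(\mathcal{X})$, so the whole problem is the finite-dimensional $\min_{y\in\mathcal{C}}\sup_{x\in\mathcal{X}}\sum_i x_i/y_i$. As written, your inf runs over distributions on a possibly uncountable $\mathcal{X}$, an infinite-dimensional space for which you have not specified the topology or compactness that Sion requires; parametrizing by $y$ dissolves this entirely, and then the $y_i\to0$ blow-up you correctly flag is handled exactly as you suggest (work on $\mathcal{C}_\eta=(1-\eta)\mathcal{C}+\eta\bar{y}$, get $\min\max\le md/(1-\eta)$, let $\eta\to0$, and use lower semicontinuity of $g(y)=\sup_x\sum_i x_i/y_i$ on the compact $\mathcal{C}$ to extract an attaining minimizer). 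One further caveat you half-noticed: after discarding unreachable coordinates, the common value is the \emph{number of reachable coordinates}, not $md$; the stated equality silently assumes full reachability, which is an imprecision of the lemma rather than of your argument.

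On the computational claim, your instinct (Fedorov--Wynn) is right but you aimed it at the wrong objective: running Frank--Wolfe on the nonsmooth $g(y)=\max_x\sum_i x_i/y_i$ is exactly the step you admit you cannot justify from vanilla guarantees. The fix is to follow the original proof's structure and run Frank--Wolfe on the smooth concave barrier $\sum_i\log y_i$ over $\mathrm{conv}(\partial\mathcal{X})$ --- the linear oracle is enumeration of the finite set $\partial\mathcal{X}$, hence $\mathrm{poly}(|\partial\mathcal{X}|)$ per step --- and then convert an approximate barrier maximizer into an approximate guarantee on $g$ via the same first-order inequality. This simultaneously removes your nonsmoothness worry and recovers the constructive content of the cited proof.
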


\begin{lemma}[Simulation lemma \citep{dann2017unifying}]\label{lem12}
	For any two MGs $M^{\prime}$ and $M^{\prime\prime}$ with rewards $r^{\prime}$ and $r^{\prime\prime}$ and transition probabilities $\mathcal{P}^{\prime}$ and $\mathcal{P}^{\prime\prime}$, the difference in values $V^{\prime}$, $V^{\prime\prime}$ with respect to the same policy pair $(\mu,\nu)$ can be written as 
	$$V_{h}^{\prime}(s)-V_{h}^{\prime\prime}(s)=\mathbb{E}_{M^{\prime\prime},\mu,\nu}\left[\sum_{i=h}^{H}[r_{i}^{\prime}(s_{i},a_{i},b_i)-r_{i}^{\prime\prime}(s_{i},a_{i},b_i)+(\mathbb{P}_{i}^{\prime}-\mathbb{P}_{i}^{\prime\prime})V_{i+1}^{\prime}(s_{i},a_{i},b_i)]\mid s_{h}=s\right].$$
\end{lemma}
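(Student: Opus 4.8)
The plan is to prove this identity by backward induction on $h$, running from $h=H+1$ down to $h=1$, exploiting the one-step Bellman recursions that both value functions satisfy under the common policy pair $(\mu,\nu)$. The base case is immediate: at $h=H+1$ there is no remaining reward, so $V'_{H+1}\equiv V''_{H+1}\equiv 0$ and both sides vanish. Everything then reduces to propagating a single-layer decomposition through the horizon.

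For the inductive step, I would assume the identity holds at level $h+1$ for every state and write the Bellman equations for the two MGs under the shared $(\mu_h,\nu_h)$: $V'_h(s)=\mathbb{E}_{a,b}[r'_h(s,a,b)+(\mathbb{P}'_h V'_{h+1})(s,a,b)]$ and the analogous expression for $V''_h$ with $r''$, $\mathbb{P}''$, $V''_{h+1}$, where $(a,b)\sim\mu_h(\cdot\mid s)\times\nu_h(\cdot\mid s)$. Subtracting and inserting the pivot term $\mathbb{P}''_h V'_{h+1}$ yields $(\mathbb{P}'_h V'_{h+1})(s,a,b)-(\mathbb{P}''_h V''_{h+1})(s,a,b)=((\mathbb{P}'_h-\mathbb{P}''_h)V'_{h+1})(s,a,b)+(\mathbb{P}''_h(V'_{h+1}-V''_{h+1}))(s,a,b)$. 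The reward difference together with the first bracket supplies exactly the $i=h$ summand of the target expression.

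The second bracket is the expectation, under the transition law of $M''$, of the value gap $V'_{h+1}-V''_{h+1}$ at the next state. By the induction hypothesis this gap equals a conditional expectation (taken under $M''$) of $\sum_{i=h+1}^{H}[\cdots]$ starting from $s_{h+1}$; applying the tower property of conditional expectation under $M''$ collapses the nested expectations into a single $\mathbb{E}_{M'',\mu,\nu}[\,\cdot\mid s_h=s]$ over the tail indices $i\ge h+1$. Combining the $i=h$ term with this tail recovers the full sum from $i=h$ to $H$, which closes the induction.

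The crux of the argument, and the one step I would be careful to get right, is the asymmetry of the pivot: one must add and subtract $\mathbb{P}''_h V'_{h+1}$, i.e.\ the transition of the \emph{second} model applied to the value of the \emph{first} model, rather than $\mathbb{P}'_h V''_{h+1}$. This is precisely what forces the recursion to advance under the dynamics of $M''$ (matching the subscript of $\mathbb{E}_{M'',\mu,\nu}$) while leaving $V'_{i+1}$ inside the $(\mathbb{P}'_i-\mathbb{P}''_i)$ factor, exactly as in the statement; the mirror-image choice would produce the analogous identity with the roles of $M'$ and $M''$ swapped. Everything beyond this is routine bookkeeping. For general history-dependent $(\mu,\nu)$ the same argument carries over verbatim once the state is augmented with the history, since both MGs share the policy and only the law of $M''$ enters the expectation.
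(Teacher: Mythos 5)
Your proof is correct: the backward induction with the asymmetric pivot term $\mathbb{P}''_h V'_{h+1}$ (so that the recursion propagates under the dynamics of $M''$ while $V'_{i+1}$ stays inside the $(\mathbb{P}'_i-\mathbb{P}''_i)$ factor) is exactly the standard argument, and it is the one given in \citet{dann2017unifying}, which the paper cites for this lemma without reproducing a proof. Your handling of the base case, the tower-property collapse of the nested expectations, and the remark on extending to history-dependent policies via state augmentation are all sound, so nothing is missing.
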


\begin{lemma}\label{lem:minmax}
    If for transition kernels $P^\prime$ and $P^{\prime\prime}$, a constant $\epsilon>0$, a reward function $r$ and policy sets $\Pi_A$, $\Pi_B$, it holds that for any $(\mu,\nu)\in\Pi_A\times\Pi_B$, $\left|V^{\mu,\nu}(r,P^\prime)-V^{\mu,\nu}(r,P^{\prime\prime})\right|\leq \epsilon$, then we have for all $\mu\in\Pi_A$,
    $$\left|\inf_{\nu\in\Pi_B}V^{\mu,\nu}(r,P^\prime)-\inf_{\nu\in\Pi_B}V^{\mu,\nu}(r,P^{\prime\prime})\right|\leq \epsilon.$$
    As a result, it holds that
    $$\left|\sup_{\mu\in\Pi_A}\inf_{\nu\in\Pi_B}V^{\mu,\nu}(r,P^\prime)-\sup_{\mu\in\Pi_A}\inf_{\nu\in\Pi_B}V^{\mu,\nu}(r,P^{\prime\prime})\right|\leq \epsilon.$$
\end{lemma}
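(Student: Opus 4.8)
The plan is to reduce everything to the elementary observation that the $\inf$ and $\sup$ operators are $1$-Lipschitz with respect to the sup-norm of the underlying function, and then to apply this observation twice: once to peel off the inner $\inf_\nu$ and once for the outer $\sup_\mu$.

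First I would fix $\mu\in\Pi_A$ and view $V^{\mu,\nu}(r,P^\prime)$ and $V^{\mu,\nu}(r,P^{\prime\prime})$ as two functions of $\nu\in\Pi_B$. The hypothesis gives $V^{\mu,\nu}(r,P^\prime)\leq V^{\mu,\nu}(r,P^{\prime\prime})+\epsilon$ for every $\nu$. Taking $\inf_{\nu\in\Pi_B}$ on both sides yields $\inf_{\nu}V^{\mu,\nu}(r,P^\prime)\leq \inf_{\nu}V^{\mu,\nu}(r,P^{\prime\prime})+\epsilon$. Swapping the roles of $P^\prime$ and $P^{\prime\prime}$ gives the reverse inequality, and combining the two establishes the first claim $\left|\inf_\nu V^{\mu,\nu}(r,P^\prime)-\inf_\nu V^{\mu,\nu}(r,P^{\prime\prime})\right|\leq\epsilon$, uniformly over all $\mu\in\Pi_A$.

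For the second claim I would simply repeat the same argument one level up. Writing $g^\prime(\mu):=\inf_\nu V^{\mu,\nu}(r,P^\prime)$ and $g^{\prime\prime}(\mu):=\inf_\nu V^{\mu,\nu}(r,P^{\prime\prime})$, the first part shows $|g^\prime(\mu)-g^{\prime\prime}(\mu)|\leq\epsilon$ uniformly in $\mu$, hence $g^\prime(\mu)\leq g^{\prime\prime}(\mu)+\epsilon$; taking $\sup_{\mu\in\Pi_A}$ and using symmetry yields $\left|\sup_\mu g^\prime(\mu)-\sup_\mu g^{\prime\prime}(\mu)\right|\leq\epsilon$, which is exactly the desired bound on the minimax values.

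There is essentially no obstacle here; the only point requiring mild care is that the $\inf$/$\sup$ need not be attained, so I would phrase the argument purely through the implication ``$f\leq g+\epsilon$ pointwise $\Rightarrow \inf f\leq \inf g+\epsilon$'' (and its $\sup$ analogue) rather than through explicit minimizers or maximizers. This avoids any existence assumptions on best responses and keeps the proof valid for the general, possibly non-Markov, policy classes $\Pi_A,\Pi_B$ used throughout the paper.
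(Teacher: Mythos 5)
Your proof is correct and is precisely the standard argument the paper invokes when it says the lemma ``directly results from the property of $\sup$ and $\inf$'' --- you have simply written out the two applications of the $1$-Lipschitz property that the paper leaves implicit. Your care in avoiding explicit minimizers (arguing via ``$f\leq g+\epsilon$ pointwise implies $\inf f\leq \inf g+\epsilon$'') is a nice touch but does not change the substance.
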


\begin{proof}[Proof of Lemma \ref{lem:minmax}]
    The lemma directly results from the property of $\sup$ and $\inf$.
\end{proof}

\end{document}